\relax
\documentclass[letterpaper]{article} 

\usepackage{aaai21}

\usepackage{xcolor}



\usepackage{times}  
\usepackage{helvet} 
\usepackage{courier}  
\usepackage[hyphens]{url}  
\usepackage{graphicx} 
\urlstyle{rm} 
\usepackage{natbib}  
\usepackage{caption} 
\frenchspacing  
\setlength{\pdfpagewidth}{8.5in}  
\setlength{\pdfpageheight}{11in}  

\usepackage{amsfonts}       
\usepackage{nicefrac}       
\usepackage{microtype}      
\usepackage{enumitem}       
\usepackage[normalem]{ulem}
\usepackage{graphicx}
\usepackage{amsmath,amssymb}
\usepackage{mathtools}
\usepackage{ntheorem}
\usepackage{comment}
\usepackage{algorithm}
\usepackage{algpseudocode}
\DeclareMathOperator*{\argmax}{argmax} 
\newcommand{\qed}{\hfill \ensuremath{\Box}}
\newcommand\inv[1]{#1\raisebox{0.85ex}{$\scriptscriptstyle-\!1$}}

\newtheorem{lemma}{Lemma}
\newtheorem{remark}{Remark}
\newtheorem{corollary}{Corollary}
\newtheorem{theorem}{Theorem}

\newtheorem*{proof}{Proof}

\newcommand{\rev}[1]{\textcolor{black}{#1}}
%

\setcounter{secnumdepth}{2} 

\title{Reward-Biased Maximum Likelihood Estimation for Linear Stochastic Bandits}

\author {
    Yu-Heng Hung,\textsuperscript{\rm 1}
    Ping-Chun Hsieh,\textsuperscript{\rm 1}
    Xi Liu, \textsuperscript{\rm 2} 
    P. R. Kumar \textsuperscript{\rm 2}\\
}
\affiliations {
    \textsuperscript{\rm 1} National Chiao Tung University \\
    \textsuperscript{\rm 2} Texas A\&M University \\
    j5464654.cs08g@nctu.edu.tw, pinghsieh@nctu.edu.tw, xiliu.tamu@gmail.com, prk@tamu.edu
}
\begin{document}

\maketitle

\begin{abstract}
Modifying the reward-biased maximum likelihood method originally proposed in the adaptive control literature, we propose novel learning algorithms to handle the explore-exploit trade-off in linear bandits problems as well as generalized linear bandits problems. We develop novel index policies that we prove achieve order-optimality, and show that they achieve empirical performance competitive with the state-of-the-art benchmark methods in extensive experiments. \rev{The new policies achieve this with low computation time per pull for linear bandits, and thereby resulting in both favorable regret as well as computational efficiency.} 
\end{abstract}

\section{Introduction}
\label{section:intro}

The problem of decision making for an unknown dynamic system, called stochastic adaptive control \cite{kumar1985survey,kumar1986stochastic}, was examined in the control theory community beginning in the 1950s. It was recognized early on by Feldbaum \cite{feldbaum1960dual1,feldbaum1960dual2} that control played a dual role, that of exciting a system to learn its dynamics, as well as satisfactorily regulating its behavior, therefore dubbed as the problem of ``dual control.''
This leads to a central problem of identifiability: As the controller begins to converge, it ceases to learn about the behavior of the system to other control actions. This issue was quantified by Borkar and Varaiya \cite{borkar1979adaptive} within the setting of adaptive control of Markov chains. Consider a stochastic system with a state-space $X$, control or action set $U$, modelled as a controlled Markov chain with transition probabilities $\mbox{Prob}(x(t+1)=j \vert x(t)=i, u(t)=u) = p(i,j;u,\theta_*)$ dependent on an unknown parameter $\theta_*$ lying in a known set $\Theta$, where $x(t)$ is the state of the system at time step $t$, and $u(t)$ is the action taken at that time. Given a one-step reward function $r(i,u)$, let $\phi: X \times \Theta \rightarrow U$ denote the optimal stationary control law as a function of $\theta \in \Theta$ for the long-term average reward problem: $\max \frac{1}{T} \sum_{t=0}^{T-1} r(x(t),u(t))$, i.e., $u(t)=\phi(x(t),\theta)$ is the optimal action to take if the true parameter is $\theta$.
Since $\theta_*$ is unknown, consider a ``certainty-equivalent'' approach: At each time step $t$, let $\widehat{\theta}_{\text{ML}}(t) \in \argmax_{\theta \in \Theta} \sum_{s=0}^{t-1} \log p(x(s),x(s+1),u(s),\theta)$ denote the Maximum Likelihood (ML) estimate of $\theta_*$, with ties broken according to any fixed priority order. Then apply the action $u(t)=\phi(x(t), \widehat{\theta}_{\text{ML}}(t))$ to the system. It was shown in \cite{kumar1982new} that under an irreducibility assumption, the parameter estimates $\widehat{\theta}_{\text{ML}}(t)$ converge to a random limit $\check{\theta}$ satisfying \begin{equation}
p(i,j,\phi(i, \check{\theta}),\check{\theta}) = p(i,j,\phi(i, \check{\theta}),\theta_*) \quad \quad  \forall i,j \in X. \label{eq closed-loop identification}
\end{equation}
That is, the closed-loop transition probabilities under the control law
$\phi(\cdot, \check{\theta})$ are correctly
determined. 
However, the resulting
feedback control law $\phi(\cdot,\check{\theta})$ need not be optimal for the true parameter $\theta_*$.

A key observation that permitted a breakthrough on this problem was 
made by Kumar and Becker \cite{kumar1982new}. Denote by $J(\phi, \theta)$ the long-term average reward incurred when the stationary control law $\phi$ is used if the true parameter is $\theta$, and by $J(\theta) := \mbox{Max}_\phi J(\phi, \theta)$ the optimal long-term average reward attainable when the parameter is $\theta$.
Then,
\begin{equation}
J(\check{\theta}) \overset{(a)}{=} J(\phi(\cdot,\check{\theta}), \check{\theta}) \overset{(b)}{=} J(\phi(\cdot,\check{\theta}), \theta_*) \overset{(c)}{\leq} J(\theta_*). \label{eq reward inequality}
\end{equation}
where the key equality $(b)$ that the long-term reward under $\phi(\cdot, \check{\theta})$ is the same under the parameters $\check{\theta}$ and $\theta_*$ follows from the equivalence of the closed-loop transition probabilities
(\ref{eq closed-loop identification}), while $(a)$ and $(c)$ hold trivially since $\phi(\cdot,\check{\theta})$ is optimal for $\check{\theta}$, but is not necessarily optimal for $\theta_*$.
Therefore
the maximum likelihood estimator is biased in favor of parameters with \emph{smaller} reward. 
To counteract this bias, \cite{kumar1982new} proposed delicately biasing the ML parameter estimation criterion in the reverse way in favor of parameters with \emph{larger} reward by adding a term $\alpha(t) J(\theta)$ to the log-likelihood,
with $\alpha(t)>0$, $\alpha(t) \to + \infty$, 
and $\frac{\alpha(t)}{t} \to 0$. This results in the \emph{Reward-Biased ML Estimate} (RBMLE):
\begin{equation}
\begin{split}
& \widehat{\theta}_{\text{RBMLE}}(t) \in \\ 
& \argmax_{\theta \in \Theta} \Big\{\alpha(t) J(\theta)+ \sum_{s=0}^{t-1} \log p(x(s),x(s+1),u(s),\theta) \Big\}.
\end{split}
\end{equation}
This modification is delicate since $\alpha(t)=o(t)$, and therefore retains the ability of the ML estimate to estimate the closed-loop transition probabilities, i.e., (\ref{eq closed-loop identification}) continues to hold, for any ``frequent'' limit point $\check{\theta}$ (i.e., that which occurs as a limit along a sequence with positive density in the integers). Hence the bias $J(\check{\theta)} \leq J(\theta_*)$ of
(\ref{eq reward inequality}) continues to hold.
However, since $\alpha(t) \to +\infty$, the bias in favor of parameters with larger rewards ensures that 
\rev{\begin{equation}
J(\check{\theta}) \geq J(\theta_*), \label{eq reverse inequality}
\end{equation}
as shown in \cite[Lemma 4]{kumar1982new}}.
From (\ref{eq reward inequality}) and (\ref{eq reverse inequality}) it follows that $J(\phi(,\cdot,\check{\theta}),\theta_*) = J(\theta_*)$,
whence $\phi(\cdot, \check{\theta})$ is optimal for the unknown $\theta_*$.

The RBMLE method holds potential as a general-purpose method for the learning of dynamic systems. 
However, its analysis was confined to \textit{long-term average optimality}, which only
assures that the regret is $o(t)$. 
Pre-dating the Upper Confidence Bound (UCB) method of Lai and Robbins \cite{lai1985asymptotically}, RBMLE has largely remained unexplored vis-\`a-vis its finite-time performance as well as empirical performance on contemporary problems. Motivated by this, there has been recent interest in revisiting the RBMLE. Recently, its regret performance has been established
for classical multi-armed bandits for the exponential family of measures \cite{liu2019bandit}. However, classical bandits do not allow the incorporation of ``context,'' which is important in various applications \cite{li2010contextual,lu2010contextual,chapelle2011empirical,li2016collaborative,tewari2017ads}. 
\rev{Therefore, the design and the proofs in \cite{liu2019bandit} cannot directly apply to the more structured contextual bandit model.}
In this paper, we examine the RBMLE method both for linear contextual bandits as well as a more general class of generalized linear bandits. 
Linear bandits and their variants have been popular models for abstracting the sequential decision making in various applications, such as recommender systems \cite{li2010contextual} and medical treatment \cite{tewari2017ads}.

This paper extends the RBMLE principle and obtains simple index policies for linear contextual bandits as well as their generalizations that have provable order-optimal {finite-time regret} performance as well as empirical performance competitive with the best currently available.
\rev{The main contributions of this paper are as follows:}
\begin{itemize}[leftmargin=*]
    \item We extend the RBMLE principle to linear contextual bandits by proposing a specific type of reward-bias term. 
    We introduce into RBMLE the modification of using a Gaussian pseudo-likelihood function, both for usage in situations where the distribution of the rewards is unknown, as well as to derive simple index policies. \rev{Different from the popular UCB-based policies, whose indices usually consist of two components: a maximum likelihood estimator and a confidence interval, RBMLE directly incorporates a reward-bias term into the log-likelihood function to guide the exploration instead of using concentration inequalities. The derived RBMLE index is thereby different from the existing indices for linear bandits.}
    \item \rev{We show that the so modified RBMLE index attains a regret bound of $\smash[b]{\mathcal{O}(\sqrt{T}\log T)}$, which is order-optimal (within a logarithmic factor) for general, possibly non-parametric, sub-Gaussian rewards. To the best of our knowledge, this is the first provable finite-time regret guarantee of the classic RBMLE principle for contextual bandits. 
    This bound shaves a factor of $\smash[b]{\mathcal{O}(\sqrt{T^{\epsilon}})}$ from Thompson Sampling (LinTS) \cite{agrawal2013thompson}, a factor of $\smash[b]{\mathcal{O}(\sqrt{\log T})}$ from \cite{chu2011contextual}, and a factor of $\smash[b]{\mathcal{O}(\sqrt{\log^3 T})}$ from Gaussian Process Upper Confidence Bound (GPUCB) with linear kernels \cite{srinivas2010gaussian}, and achieves the same regret bound as the Information Directed Sampling (IDS) \cite{kirschner2018information}.}
    \item We extend the techniques to the generalized linear models and show that the same regret bound of $\smash[b]{\mathcal{O}(\sqrt{T}\log T)}$ can still be attained in the general case. {\color{black}This shaves a factor of $\sqrt{\log T}$ from \cite{filippi2010parametric}}, and achieves the same regret bound as UCB-GLM in \cite{li2017provably}.
    \item \rev{We also conduct extensive experiments to demonstrate that the proposed RBMLE achieves an empirical regret competitive with the state-of-the-art benchmark methods while being efficient in terms of computation time. Notably, the regret performance of RBMLE is the most robust across different sample paths. The results validate that the proposed algorithm enjoys favorable regret as well as computation time.}
\end{itemize}

\section{Problem Setup}
\label{section:problem}
We consider the stochastic contextual bandit problem with $K < +\infty$ arms, possibly large.
At the beginning of each decision time $t\in\mathbb{N}$, a $d$-dimensional context vector $x_{t,a}\in\mathbb{R}^{d}$, with $\lVert x_{t,a}\rVert\leq 1$, is revealed to the learner, for each arm $a \in [K]$.
The contexts $\{x_{t,a}\}$ are generated by an adaptive adversary, which determines them in an arbitrary way based on the history of all the contexts and rewards.
Given the contexts, the learner selects an arm $a_t\in [K]$ and obtains the corresponding reward $r_t$, which is conditionally independent of all the other rewards in the past given the context $\{x_{t,a_t}\}$.
We define (i) $x_t := x_{t, a_t}$,
(ii) $X_t$ as the $(t-1)\times d$ matrix in which the $s$-th row is $x_s^\intercal$, for all $s\in [t-1]$,
(iii) $R_t:=(r_1,\cdots, r_{t-1})^{\intercal}$ row vector of the observed rewards up to time $t-1$, and (iv)
$\mathcal{F}_t=(x_1,a_1,r_1,\cdots,x_t)$ denotes the $\sigma$-algebra of all the causal information available right before $r_t$ is observed.
We assume that the rewards are linearly realizable, i.e., there exists an unknown parameter $\theta_*\in\mathbb{R}^{d}$ with
$\lVert \theta_*\rVert_2\leq 1$, and a known, strictly increasing \textit{link function} $\mu:\mathbb{R}\rightarrow \mathbb{R}$ such that $\mathbb{E}[r_t|{\color{black}\mathcal{F}_t}]=\mu(\theta_*^\intercal x_t)$.
{\color{black} We assume that $\mu$ is continuously differentiable, with
 its derivative  $\mu'$ having a supremum $L_{\mu}$, and an infimum $\kappa_{\mu}>0$}.\footnote{A further discussion about this assumption is in Appendix \ref{appendix:discussion about assumption of GLM-RBMLE}.} {\color{black}We call this the \emph{generalized linear bandit} problem.} 
 

Let $a_t^*:=\arg\max_{1\leq i\leq K} \theta_*^\intercal x_{t,i}$ be an arm that yields the largest 
{\color{black}conditional} expected reward 
$\mathbb{E}[r_t|{\color{black}\mathcal{F}_t}]$
at time $t$ (with ties broken arbitrarily), {\color{black} and
$x_t^* := x_{t,a_t^*}$}. 
The objective of the learner is to maximize its total 
over a finite time horizon $T$, i.e.,
the learner aims to minimize the \textit{total {\color{black}conditional} expected {\color{black}pseudo-}regret},
{\color{black} which we shall refer to simply as the ``cumulative regret,"} defined as
\begin{equation}
    \mathcal{R}(T):=\sum_{t=1}^{T} \mu(\theta_*^{\intercal} x_t^*) - \mu(\theta_*^{\intercal} x_t).
\end{equation}

We call the problem a \emph{standard} linear bandits problem if
(i) the reward is  $r_t=\theta_*^\intercal x_t +\varepsilon_t$, (ii) $\varepsilon_t$ is a noise with $\mathbb{E}[\varepsilon_t|x_t]=0$, and (iii)
the rewards are conditionally $\sigma$-sub-Gaussian, i.e.,
\begin{equation}
    \mathbb{E}[\exp(\rho\varepsilon_t)|\mathcal{F}_t]\leq \exp\big(\frac{\rho^2\sigma^2}{2}\big).
\end{equation}
Wlog, we assume $\sigma=1$. For standard linear bandits the link function $\mu$ is an identity and
$\kappa_\mu=1$.

\section{RBMLE for Standard Linear Bandits}
\label{section:linear}
We begin with the derivation of the RBMLE index and its regret analysis for linear contextual bandits. 

\subsection{Index Derivation for Standard Linear Bandits}
\label{section:linear:index}
Let $\ell(\mathcal{F}_t;\theta)$ denote the log-likelihood of the historical observations when the true parameter is $\theta$.
Let $\lambda$ be a positive constant. At each $t$, the learner takes the following two steps. 
\begin{enumerate}
\item Let $\bar{\theta}_t = \argmax\limits_\theta \big\{ \ell(\mathcal{F}_t;\theta)+\alpha(t) \max\limits_{a \in [K]} \theta^{\intercal}x_{t,a}-\frac{\lambda}{2}{\lVert \theta \rVert}^2_2 \big\}$.
\item Choose any arm $a_t$ that maximizes        $\bar{\theta}_t^{\intercal}x_{t,a}
 $.

\end{enumerate}
The term $\alpha(t) \max_{1\leq a\leq K} \theta^{\intercal}x_{t,a}$ is the reward-bias. 
A modification to the RBMLE is the additional quadratic regularization term $\frac{\lambda}{2}{\lVert \theta \rVert}_2^2$, {\`a} la ridge regression. 
Wlog, we assume that $\lambda\geq 1$.

The above strategy can be
simplified to an \emph{index strategy}.
Define the index of an arm $a$ at time $t$ by
\begin{equation}
    {\cal{I}}_{t,a}:=\max_{\theta} \Big\{ \ell(\mathcal{F}_t;\theta)+\alpha(t)\cdot \theta^{\intercal}x_{t,a}-\frac{\lambda}{2}{\lVert \theta \rVert}^2_2 \Big\}, \label{eq:LinRBMLE index via tilde Theta t,a}
\end{equation}
and simply choose an arm $a_t$ that has maximum index. The indexability proof is in Appendix A. 

To derive indices, it is necessary to know what
the log-likelihood $\ell(\mathcal{F}_t;\theta)$ is.
However, in practice, the true distribution of the noise $\varepsilon_t$ is unknown to the learner or it may not even follow any parametric distribution. 
We employ the Gaussian density function as a surrogate:
\begin{equation}
\ell(\mathcal{F}_t;\theta)=-\frac{1}{2}\sum_{s=1}^{t-1}(\theta^{\intercal}x_{s}-r_{s})^2-\frac{t-1}{2}\log(2\pi). \label{eq Gaussian Likelihood}
\end{equation}
Hence $\bar{\theta}_t$ is any maximizer of  
$ \Big\{
    -\sum_{s=1}^{t-1}(\theta^{\intercal}x_{s}-r_{s})^2+2\alpha(t)\cdot \max_{1 \leq a\leq K} \theta^\intercal x_{t,a}-\lambda{\lVert \theta \rVert}^2_2\Big\}
$.

It is shown in Section \ref{section:linear:regret} that despite the likelihood misspecification, the index derived from the Gaussian density achieves the same regret bound for general non-parametric sub-Gaussian rewards.

The LinRBMLE index has the following explicit form, as proved
in Appendix \ref{appendix:corollary:RBMLE index for linear bandits}:
\begin{corollary}
\label{corollary:RBMLE index for linear bandits}
\normalfont For the Gaussian likelihood (\ref{eq Gaussian Likelihood}), there is a unique maximizer of (\ref{eq:LinRBMLE index via tilde Theta t,a}) for every arm $a$,
\begin{equation}
    \bar{\theta}_{t,a}=\inv{V_{t}}(X_{t}^{\intercal}R_{t}+\alpha(t)x_{t,a}), \label{eq:solution for widetilde theta t,a}
\end{equation}
where $V_t:=X_{t}^{\intercal}X_{t}+\lambda I$.
The arm $a_t$ chosen by the LinRBMLE algorithm is
\begin{align}
        a_t&= \argmax_{1\leq i \leq K} \Big\{ \widehat{\theta}_t^{\intercal} x_{t,i} + \frac{1}{2}\alpha(t)\lVert x_{t,i} \rVert^2_{V_t^{-1}} \Big\},\label{eq:RBMLE index for linear bandits}
\end{align}
where $\widehat{\theta}_t := \inv{V_t}X_{t}^{\intercal} R_{t}$ is the least squares estimate of $\theta_{*}$.
\end{corollary}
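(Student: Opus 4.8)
The plan is to carry out, arm by arm, the maximization of the quadratic that defines the index $\mathcal{I}_{t,a}$, read off its maximizer, then simplify the resulting value so that the $\argmax$ over arms becomes transparent. Fix an arm $a$. Substituting the Gaussian surrogate log-likelihood (\ref{eq Gaussian Likelihood}) into (\ref{eq:LinRBMLE index via tilde Theta t,a}) and discarding the additive constant $-\tfrac{t-1}{2}\log(2\pi)$, the function to be maximized over $\theta$ is, after expanding $\sum_{s=1}^{t-1}(\theta^\intercal x_s - r_s)^2 = \theta^\intercal X_t^\intercal X_t\theta - 2\theta^\intercal X_t^\intercal R_t + \lVert R_t\rVert_2^2$ and collecting the quadratic terms with the ridge penalty,
\begin{equation}
g_a(\theta) = -\tfrac12\,\theta^\intercal V_t\,\theta + \big(X_t^\intercal R_t + \alpha(t)\,x_{t,a}\big)^\intercal\theta - \tfrac12\lVert R_t\rVert_2^2 ,
\end{equation}
where $V_t = X_t^\intercal X_t + \lambda I$. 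Since $\lambda \geq 1 > 0$, $V_t$ is positive definite, so $g_a$ is strictly concave and coercive; hence it has a unique maximizer, found from the first-order condition $\nabla g_a(\theta) = -V_t\theta + X_t^\intercal R_t + \alpha(t)x_{t,a} = 0$, which gives $\bar\theta_{t,a} = V_t^{-1}\big(X_t^\intercal R_t + \alpha(t)x_{t,a}\big)$, i.e. (\ref{eq:solution for widetilde theta t,a}).

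Next I would evaluate the optimal value. For a strictly concave quadratic $-\tfrac12\theta^\intercal V_t\theta + b^\intercal\theta + c$ the maximum equals $c + \tfrac12\,b^\intercal V_t^{-1}b$, attained at $\theta = V_t^{-1}b$; applying this with $b = X_t^\intercal R_t + \alpha(t)x_{t,a}$, expanding $b^\intercal V_t^{-1}b$, and using the symmetry of $V_t^{-1}$ together with $\widehat\theta_t = V_t^{-1}X_t^\intercal R_t$ yields
\begin{equation}
\mathcal{I}_{t,a} = C_t + \alpha(t)\,\widehat\theta_t^\intercal x_{t,a} + \tfrac12\,\alpha(t)^2\,\lVert x_{t,a}\rVert_{V_t^{-1}}^2 ,
\end{equation}
where $C_t := \tfrac12 (X_t^\intercal R_t)^\intercal V_t^{-1}(X_t^\intercal R_t) - \tfrac12\lVert R_t\rVert_2^2 - \tfrac{t-1}{2}\log(2\pi)$ does not depend on $a$.

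Finally, the arm selected is any maximizer of $\mathcal{I}_{t,a}$ over $a\in[K]$; that this index strategy coincides with the two-step RBMLE procedure is the indexability statement established in Appendix A, which I would simply invoke here. Because $\alpha(t) > 0$, dropping the $a$-independent constant $C_t$ and dividing through by $\alpha(t)$ leaves the set of maximizers unchanged, so $a_t = \argmax_{1\le i\le K}\big\{\widehat\theta_t^\intercal x_{t,i} + \tfrac12\alpha(t)\lVert x_{t,i}\rVert_{V_t^{-1}}^2\big\}$, which is (\ref{eq:RBMLE index for linear bandits}). I do not expect a genuine obstacle: this is a completion-of-squares computation. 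The only places demanding a little care are (i) using $\lambda>0$ to guarantee strict concavity, hence uniqueness of the maximizer; (ii) bookkeeping of exactly which terms survive when passing from the index value to the $\argmax$ over arms; and (iii) using $\alpha(t)>0$ to divide it out while preserving the maximizing arm.
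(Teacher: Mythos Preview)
Your proposal is correct and follows essentially the same approach as the paper: substitute the Gaussian likelihood to obtain a strictly concave quadratic in $\theta$, solve the first-order condition to get $\bar\theta_{t,a}$, plug back in (the paper does this by direct substitution in (\ref{eq:corollary1:1})--(\ref{eq:corollary1:4}), you via the standard quadratic-maximum formula), and then strip the $a$-independent constant and the positive factor $\alpha(t)$ to reach (\ref{eq:RBMLE index for linear bandits}), invoking the indexability result of Appendix~A along the way. The only cosmetic difference is that the paper expands the objective and simplifies step by step rather than quoting the closed-form maximum of a concave quadratic.
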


We summarize the LinRBMLE algorithm in Algorithm \ref{alg:LinRBMLE}.

\begin{algorithm}[!htbp]
\caption{LinRBMLE Algorithm}
\label{alg:LinRBMLE}
    \begin{algorithmic}[1]
        \State {\bfseries Input:} $\alpha(t)$,  $\lambda$\;
        \State {\bfseries Initialization:} $V_1\leftarrow \lambda I$\;
        \For{$t=1,2,\cdots$}
            \State Observe the contexts $\{x_{t,a}\}$ for all the arms\;
            \State Select the action $a_t=\argmax_{a} \big\{\widehat{\theta}_{t}^{\intercal}x_{t,a}+$
            
            $\frac{1}{2}\alpha(t)\lVert x_{t,a} \rVert^2_{V_t^{-1}}\big\}$ and obtain $r_t$\;
            \State Update $V_{t+1}\leftarrow V_{t}+x_{t,a_t} x_{t,a_t}^{\intercal}$
        \EndFor
    \end{algorithmic}
\end{algorithm}
\begin{remark}
\label{remark:LinRBMLE vs LinUCB}
\normalfont Similar to the well-known LinUCB index $\widehat{\theta}_t^{\intercal} x_{t,i} +\gamma \lVert x_{t,i}\rVert_{\inv{V_t}}$ \cite{li2010contextual}, the LinRBMLE index is also defined as the sum of the least squares estimate and an additional exploration term. 
Despite this high-level resemblance, LinRBMLE has two salient features: \rev{(i) As mentioned in Section \ref{section:intro}, the LinRBMLE index is different from the UCB-based indices as it directly incorporates a reward-bias term into the log-likelihood function to guide the exploration instead of using concentration inequalities;
(ii) Under LinRBMLE, the ratio between the exploration terms of any two arms $i,j$ is ${\lVert x_{t,i} \rVert^2_{V_t^{-1}} }/ \lVert x_{t,j} \rVert^2_{V_t^{-1}}$, which is more contrastive than $\lVert x_{t,i} \rVert_{V_t^{-1}}/\lVert x_{t,j} \rVert_{V_t^{-1}}$ of LinUCB.
With a proper bias term, this design of LinRBMLE implicitly encourages more exploration (since $\lVert x_{t,i} \rVert_{V_t^{-1}}$ is a confidence interval).
As will be seen in Section \ref{section:linear:regret}, with a proper bias term (e.g., $\alpha(t)=\sqrt{t}$), this additional exploration does not sacrifice the regret bound. Moreover, as suggested by the regret statistics in Section \ref{section:experiment}, this design makes LinRBMLE empirically more robust across different sample paths, which is of intrinsic interest. }

\end{remark}


\subsection{Regret Bound for the LinRBMLE Index}
\label{section:linear:regret}
We begin the regret analysis with a bound on the ``immediate" regret $R_t:=\theta_{*}^{\intercal}(x_t^*-x_{t})$.
\begin{lemma}
\label{lemma:decomposed linear regret}
\normalfont
Under the standard linear bandit model, 
\begin{equation}
\begin{split}
    R_t &\leq {\lVert\theta_*-\widehat{\theta}_t\rVert_{V_t}\cdot\lVert x_t^*\rVert_{V_t^{-1}} - \frac{1}{2}\alpha(t)\lVert x_t^*\rVert^2_{V_t^{-1}} } \\
    & + {\lVert\widehat{\theta}_t - \theta_*\rVert_{V_t}\cdot \lVert x_t\rVert_{V_t^{-1}}}
    + {\frac{1}{2}\alpha(t)\lVert x_t\rVert^2_{V_t^{-1}}}.\label{eq:lemma1:0} 
\end{split}
\end{equation}
\end{lemma}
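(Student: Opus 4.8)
The plan is to work directly from the arm–selection rule in \eqref{eq:RBMLE index for linear bandits}, which is the only place the algorithm enters, and then peel $R_t$ apart into an estimation-error part and an index-comparison part. Concretely, I would start by adding and subtracting $\widehat{\theta}_t^\intercal x_t^*$ and $\widehat{\theta}_t^\intercal x_t$ to write
\begin{equation}
R_t = \theta_*^\intercal(x_t^*-x_t) = (\theta_*-\widehat{\theta}_t)^\intercal x_t^* \; + \; \big(\widehat{\theta}_t^\intercal x_t^* - \widehat{\theta}_t^\intercal x_t\big) \; + \; (\widehat{\theta}_t-\theta_*)^\intercal x_t. \label{eq:plan:decomp}
\end{equation}

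The middle term is controlled by the fact that $a_t$ maximizes the LinRBMLE index. Since $a_t^*$ is a competing arm, the selection rule gives $\widehat{\theta}_t^\intercal x_t + \tfrac12\alpha(t)\lVert x_t\rVert^2_{V_t^{-1}} \ge \widehat{\theta}_t^\intercal x_t^* + \tfrac12\alpha(t)\lVert x_t^*\rVert^2_{V_t^{-1}}$, hence $\widehat{\theta}_t^\intercal x_t^* - \widehat{\theta}_t^\intercal x_t \le \tfrac12\alpha(t)\lVert x_t\rVert^2_{V_t^{-1}} - \tfrac12\alpha(t)\lVert x_t^*\rVert^2_{V_t^{-1}}$. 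Substituting this into \eqref{eq:plan:decomp} already produces the two $\pm\tfrac12\alpha(t)\lVert\cdot\rVert^2_{V_t^{-1}}$ terms appearing in \eqref{eq:lemma1:0}.

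For the remaining two inner products $(\theta_*-\widehat{\theta}_t)^\intercal x_t^*$ and $(\widehat{\theta}_t-\theta_*)^\intercal x_t$, I would apply the weighted Cauchy--Schwarz inequality: writing $u^\intercal v = (V_t^{1/2}u)^\intercal(V_t^{-1/2}v)$ and using that $V_t = X_t^\intercal X_t + \lambda I \succ 0$ is symmetric positive definite (so $V_t^{1/2}$ and $V_t^{-1/2}$ are well defined), one gets $u^\intercal v \le \lVert u\rVert_{V_t}\,\lVert v\rVert_{V_t^{-1}}$. Applying this with $u = \theta_*-\widehat{\theta}_t$, $v = x_t^*$ for the first term and $u = \widehat{\theta}_t-\theta_*$, $v = x_t$ for the third term yields exactly $\lVert\theta_*-\widehat{\theta}_t\rVert_{V_t}\lVert x_t^*\rVert_{V_t^{-1}}$ and $\lVert\widehat{\theta}_t-\theta_*\rVert_{V_t}\lVert x_t\rVert_{V_t^{-1}}$, completing the bound \eqref{eq:lemma1:0}.

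This argument is short and essentially mechanical; I do not anticipate a genuine obstacle. The only points requiring a little care are (i) getting the direction of the index inequality right — it is $a_t$, not $a_t^*$, that attains the maximum, so the $\alpha(t)$ terms end up with the signs shown (negative on $\lVert x_t^*\rVert^2$, positive on $\lVert x_t\rVert^2$) — and (ii) noting that the weighted Cauchy--Schwarz step is legitimate because $\lambda \ge 1 > 0$ guarantees $V_t$ is invertible throughout. No concentration inequality or probabilistic argument is needed for this lemma; those enter only later when $\lVert\widehat{\theta}_t-\theta_*\rVert_{V_t}$ is bounded and $\sum_t \lVert x_t\rVert^2_{V_t^{-1}}$ is summed via the elliptical-potential lemma.
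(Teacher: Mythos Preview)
Your proposal is correct and essentially identical to the paper's proof: the paper also adds and subtracts $\widehat{\theta}_t^\intercal x_t^*$, applies the index inequality from \eqref{eq:RBMLE index for linear bandits} to bound $\widehat{\theta}_t^\intercal x_t^* - \widehat{\theta}_t^\intercal x_t$, and then finishes with the weighted Cauchy--Schwarz inequality via $V_t^{1/2}V_t^{-1/2}$. Your observations about the sign of the $\alpha(t)$ terms and the positive-definiteness of $V_t$ are exactly the two points the paper relies on.
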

The proof of Lemma \ref{lemma:decomposed linear regret} is in Appendix \ref{appendix:lemma:decomposed linear regret}.
\begin{remark}
\label{remark:proof difference}
\normalfont Lemma \ref{lemma:decomposed linear regret} highlights the main difference between the analysis of the UCB-based algorithms (e.g., \cite{abbasi2011improved,chu2011contextual}) and that of the LinRBMLE algorithm. 
To arrive at a regret upper bound for LinRBMLE, it is required to handle both $\lVert\theta_*^{\intercal}-\widehat{\theta}_t\rVert_{V_t}\cdot\lVert x_t^*\rVert_{V_t^{-1}}$ and $\frac{1}{2}\alpha(t)\lVert x_t^*\rVert^2_{V_t^{-1}}$. While it could be challenging to quantify each individual term, we show in Theorem \ref{theorem:linear regret} that a tight regret upper bound can be obtained by jointly analyzing these two terms. 
\end{remark}
Theorem \ref{theorem:linear regret} below presents the regret bound for the LinRBMLE algorithm; it is proved in Appendix \ref{appendix:theorem:linear regret}. Let
\begin{align}
    G_0(t,\delta) &:={\sigma}\sqrt{{d}\log\left(\frac{\lambda+{t}}{\lambda\delta}\right)}+\lambda^{\frac{1}{2}}, \\
     G_1(t) & :=\sqrt{2d\log\left(\frac{\lambda +t}{d}\right)}\label{eq:GLM-g1}\mbox{  respectively}.
\end{align}
\begin{theorem}
\label{theorem:linear regret}
\normalfont
For the LinRBMLE index (\ref{eq:RBMLE index for linear bandits}), with probability at least $1-\delta$, the cumulative regret satisfies
\begin{equation}
\begin{split}
    \mathcal{R}(T) &= \sum_{t=1}^{T}R_t \leq \big(G_0(T,\delta)\big)^2\cdot \Big(\sum_{t=1}^{T}\frac{1}{2\alpha(t)}\Big) \\
    & +\sqrt{T}G_0(T,\delta)G_1(T)+\frac{1}{2}\alpha(T)\big(G_1(T)\big)^2.
\end{split}
\end{equation}
Consequently, by choosing the bias term $\alpha(t) =\sqrt{t}$, the regret bound is $\mathcal{R}(T)=\mathcal{O}(\sqrt{T}\log T)$.

\rev{
\begin{remark}
\normalfont As mentioned in Section \ref{section:intro}, LinRBMLE achieves a better regret bound than several popular benchmark methods, including LinTS \cite{agrawal2013thompson}, SupLinUCB \cite{chu2011contextual}, and GPUCB with a linear kernel \cite{srinivas2010gaussian}.
Moreover, LinRBMLE achieves the same regret bound as that of IDS \cite{kirschner2018information}, which is one of the most competitive benchmarks. In Section \ref{section:experiment}, we show via simulations that LinRBMLE achieves an empirical regret competitive with IDS while being much more computationally efficient.
LinRBMLE also has the same regret bound as that of LinUCB \cite{abbasi2011improved}. As LinRBMLE addresses exploration in a fundamentally different manner as discussed in Remark \ref{remark:LinRBMLE vs LinUCB}, the corresponding regret proof also differs from those of the UCB-base policies, as highlighted in Remark \ref{remark:proof difference}.  
From the simulations, we further observe that LinRBMLE significantly outperforms LinUCB in terms of both empirical mean regret and regret statistics.
\end{remark}
}

\end{theorem}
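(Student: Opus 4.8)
The plan is to sum the per-round inequality of Lemma~\ref{lemma:decomposed linear regret} over $t=1,\dots,T$ and control the three groups of terms separately. First I would invoke the self-normalized tail bound for vector-valued martingales (Abbasi-Yadkori, P\'al, Szepesv\'ari): since the rewards are conditionally $1$-sub-Gaussian and $\lVert\theta_*\rVert_2\le 1$, with probability at least $1-\delta$ one has $\lVert\widehat{\theta}_t-\theta_*\rVert_{V_t}\le G_0(t,\delta)\le G_0(T,\delta)$ simultaneously for all $t\le T$, where monotonicity of $G_0$ in its first argument is used. I condition on this event for the remainder.

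The crux of the argument — and the step I expect to be the main obstacle — is handling the two terms attached to the \emph{optimal} arm, namely $\lVert\theta_*-\widehat{\theta}_t\rVert_{V_t}\lVert x_t^*\rVert_{V_t^{-1}}$ and $-\tfrac12\alpha(t)\lVert x_t^*\rVert^2_{V_t^{-1}}$. Neither is summable on its own: $x_t^*$ is generally not the pulled arm, so the elliptical-potential lemma yields no control over $\sum_t\lVert x_t^*\rVert^2_{V_t^{-1}}$, and $\alpha(T)\to\infty$. The idea (anticipated in Remark~\ref{remark:proof difference}) is to treat the pair jointly: writing $u:=\lVert x_t^*\rVert_{V_t^{-1}}\ge 0$ and using $\alpha(t)>0$, the elementary bound $G_0(T,\delta)\,u-\tfrac12\alpha(t)u^2\le \frac{G_0(T,\delta)^2}{2\alpha(t)}$ (the quadratic in $u$ is maximized at $u=G_0(T,\delta)/\alpha(t)$) eliminates the problematic dependence on $x_t^*$, leaving the manifestly summable contribution $\sum_{t=1}^{T}\frac{G_0(T,\delta)^2}{2\alpha(t)}$.

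The other two terms involve only the pulled arm $x_t$ and are standard. For $\sum_t\lVert\widehat{\theta}_t-\theta_*\rVert_{V_t}\lVert x_t\rVert_{V_t^{-1}}\le G_0(T,\delta)\sum_t\lVert x_t\rVert_{V_t^{-1}}$, I would apply Cauchy--Schwarz, $\sum_{t=1}^{T}\lVert x_t\rVert_{V_t^{-1}}\le\sqrt{T\sum_{t=1}^{T}\lVert x_t\rVert^2_{V_t^{-1}}}$, followed by the elliptical-potential lemma; since $\lVert x_t\rVert\le 1$ and $\lambda\ge 1$ we have $\lVert x_t\rVert^2_{V_t^{-1}}\le 1$, so the usual $\min\{1,\cdot\}$ truncation is vacuous and $\sum_{t=1}^{T}\lVert x_t\rVert^2_{V_t^{-1}}\le 2\log\frac{\det V_{T+1}}{\det V_1}\le 2d\log\frac{\lambda+T}{d}=G_1(T)^2$, giving $\le\sqrt{T}\,G_0(T,\delta)\,G_1(T)$ for this block. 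For $\sum_t\tfrac12\alpha(t)\lVert x_t\rVert^2_{V_t^{-1}}$, monotonicity of $\alpha$ pulls out $\alpha(T)$ and the same potential bound gives $\le\tfrac12\alpha(T)G_1(T)^2$.

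Adding the three blocks yields exactly the claimed bound. Finally, specializing to $\alpha(t)=\sqrt{t}$ and using $\sum_{t=1}^{T}\frac{1}{2\sqrt{t}}=O(\sqrt{T})$ together with $G_0(T,\delta)^2=O(\log T)$ and $G_1(T)^2=O(\log T)$, each of the three terms is $O(\sqrt{T}\log T)$, hence $\mathcal{R}(T)=O(\sqrt{T}\log T)$.
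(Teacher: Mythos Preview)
Your proposal is correct and matches the paper's proof essentially step for step: the same three-block decomposition of Lemma~\ref{lemma:decomposed linear regret}, the same completing-the-square trick to eliminate the $x_t^*$ terms (the paper's Lemma~\ref{lemma:1st lemma for proof of Theorem 2}), Cauchy--Schwarz plus the elliptical-potential lemma for the $\lVert x_t\rVert_{V_t^{-1}}$ term (Lemma~\ref{lemma:2nd lemma for proof of Theorem 2}), and pulling $\alpha(T)$ out of the last sum (Lemma~\ref{lemma:3rd lemma for proof of Theorem 2}). The only cosmetic difference is that you bound with $G_0(T,\delta)$ from the outset via monotonicity, whereas the paper first writes $G_0(t,\delta)$ in its lemma and then replaces it by $G_0(T,\delta)$ when summing.
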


\section{RBMLE for Generalized Linear Bandits}
\label{section:generalized}
\subsection{Index Derivation for Generalized Linear Bandits}
\label{section:generalized:index}
For the generalized linear case, as before, let $\bar{\theta}_t$ be any maximizer of 
   $ \big\{ \ell(\mathcal{F}_t;\theta)+\alpha(t)\cdot \max_{1 \leq a \leq K} \theta^{\intercal}x_{t,a}-\frac{\lambda}{2}{\lVert \theta \rVert}^2_2 \big\}$.
However, a major difference vis-{\`a}-vis the {\color{black}standard} linear case is that $L_{\mu}> \kappa_{\mu}$.
To handle this, we incorporate an additional factor $\eta(t)$ that is a positive-valued, strictly increasing function that satisfies $\lim_{t\rightarrow \infty}\eta(t)=\infty$, and choose any arm $a_t$ that maximizes $\big\{ \ell(\mathcal{F}_t;\bar{\theta}_{t,a})+\eta(t)\alpha(t)\cdot \bar{\theta}_{t,a}^{\intercal}x_{t,a}-\frac{\lambda}{2}{\lVert \bar{\theta}_{t,a}\rVert}_2^2  \big\}$.
The regret analysis below suggests that it is sufficient to choose $\eta(t)$ to be slowly increasing, e.g., $\eta(t)=1+\log t$.


Next, we generalize the notion of a {\color{black} surrogate} Gaussian likelihood discussed in Section \ref{section:linear:index} by considering the density functions of the canonical exponential families:
\begin{equation}
    p(r_t|x_t)=\exp(r_t x_t^{\intercal}\theta_{*}-b(x_t^{\intercal} \theta_{*})+c(r_t)),\label{eq:exponential family density}
\end{equation}
where $b(\cdot):\mathbb{R}\rightarrow \mathbb{R}$ is a strictly convex function that satisfies $b'(z)=\mu(z)$, for all $z\in\mathbb{R}$, and $c(\cdot):\mathbb{R}\rightarrow \mathbb{R}$ is the normalization function.
The exponential family consists of a variety of widely used distributions, including binomial, Gaussian, and Poisson distributions.
By the properties of the exponential family, $b'(x_t^{\intercal} \theta_{*})=\mathbb{E}[r_t|x_t]$ and $b''(x_t^{\intercal} \theta_{*})=\mathbb{V}[r_t|x_t]>0$.
By (\ref{eq:tilde Theta t,a}) and the strict convexity of $b(\cdot)$, $\ell(\mathcal{F}_t;\theta)+\alpha(t)\cdot \theta^{\intercal}x_{t,a}$ is strictly concave in $\theta$ and therefore has a unique maximizer. By the first-order sufficient condition, $\bar{\theta}_{t,a} \mbox{ is the unique solution to }$
\begin{equation}
    \sum_{s=1}^{t-1} \big(r_s x_s - \mu(x_s^{\intercal}\bar{\theta}_{t,a})x_s\big) -\lambda \bar{\theta}_{t,a}+ \alpha(t)x_{t,a}=0.\label{eq:tilde theta first-order condition}
\end{equation}

{Note that (\ref{eq:exponential family density}) is used only for index derivation and is not required in the regret analysis in Section \ref{section:generalized:regret}.}
We summarize the resulting GLM-RBMLE algorithm for the generalized linear case in Algorithm \ref{alg:GLM-RBMLE}.

\begin{remark}
\normalfont \rev{The technical reason behind incorporating $\eta(t)$ into GLM-RBMLE is as follows:
As will be seen in (\ref{eq:generalized regret 2-1})-(\ref{eq:generalized regret 2-2}) in Appendix F, the immediate regret $R_t$ is upper bounded by the value of a quadratic function of $\smash[b]{\lVert x_{t,a_t^*}\rVert_{\inv{U}}}$, and this inequality resembles (\ref{eq:lemma1:4}) for the linear case. 
To further bound the RHS of (\ref{eq:generalized regret 2-1}), we need the leading coefficient $\smash[b]{{L_{\mu}^3}/{(2\kappa_{\mu}^2 \eta(t))}}-1$ to be negative.
To ensure this, we propose to set $\eta(t)$ to be a positive, strictly increasing function with $\lim_{t\rightarrow \infty}\eta(t)=\infty$ such that $\smash[b]{{L_{\mu}^3}/{(2\kappa_{\mu}^2 \eta(t))}}<1$ for all sufficiently large $t$.
For the linear case, we can simply let $\eta(t)=1$ since $L_{\mu}=\kappa_{\mu}=1$ and $\smash[b]{{L_{\mu}^3}/{2\kappa_{\mu}^2}}<1$ automatically holds.}
\end{remark}

\begin{algorithm}[!htbp]
\caption{GLM-RBMLE Algorithm}
\label{alg:GLM-RBMLE}
    \begin{algorithmic}[1]
        \State {\bfseries Input:} $\alpha(t)$,  $\lambda$, $\eta(t)$\;
        \For{$t=1,2,\cdots$}
            \State Observe the contexts $\{x_{t,a}\}$ for all the arms\;
            \State Calculate $\bar{\theta}_{t,a}$ for each $a$ by solving
            $\sum_{s=1}^{t-1} \big(r_s x_s -$
            
            $\mu(x_s^{\intercal}\bar{\theta}_{t,a})x_s\big) -\lambda \bar{\theta}_{t,a}+ \alpha(t)x_{t,a}=0$\;
            \State Select the action $a_t=\argmax_{a} \big\{ \ell(\mathcal{F}_t;\bar{\theta}_{t,a})+$
            
            $\eta(t)\alpha(t) \bar{\theta}_{t,a}^{\intercal}x_{t,a}-\frac{\lambda}{2}{\lVert \bar{\theta}_{t,a}\rVert}_2^2  \big\}$ and obtain $r_t$\;
        \EndFor
    \end{algorithmic}
\end{algorithm}

\subsection{Regret Bound for GLM-RBMLE for Generalized Linear Bandits}
\label{section:generalized:regret}
We begin the regret analysis of GLM-RBMLE by introducing the following definitions.  

Define $T_0:=\min\{t\in\mathbb{N}: \frac{L_{\mu}^3}{2\kappa_{\mu}^2\eta(t)}< \frac{1}{2}\}$.
Recall that $G_1(t)$ is defined in (\ref{eq:GLM-g1}). For ease of exposition, we also define the function
\begin{align}
    G_2(t,\delta)&:=\frac{\sigma}{\kappa_{\mu}}\sqrt{\frac{d}{2}\log(1+\frac{2t}{d})+\log\frac{1}{\delta}}.
\end{align}
We also define 
$C_1:={2 L_{\mu}^4}/{k_{\mu}^4}+{1}/{k_{\mu}^2}$, $C_2:={2L_{\mu}^3}/{\kappa_{\mu}^2}+{L_{\mu}}/{\kappa_{\mu}}$, and $C_3:={L_{\mu}^2}/{2}$.
\begin{theorem}
\label{theorem:generalized regret}
\normalfont For the GLM-RBMLE index, 
with probability at least $1-\delta$, 
the cumulative regret satisfies
\begin{equation}
\begin{split}
    \mathcal{R}(T) \leq T_0 & + C_1\alpha(T)\big(G_1(T)\big)^2\\
    & + C_2 \sqrt{T}G_1(T)G_2(T,\delta) \\
    & + C_3 \big(G_2(T,\delta)\big)^2\sum_{t=1}^{T}\frac{1}{\alpha(t)}.\label{eq:formal result of generalized regret bound}
\end{split}
\end{equation}
Therefore, if $\alpha(t)=\Omega(\sqrt{t})$, then $\mathcal{R}(T)=\mathcal{O}(\alpha(T)\log T)$; If $\alpha(t)=\mathcal{O}(\sqrt{t})$, then $\mathcal{R}(T)=\mathcal{O}\big((\sum_{t=1}^{T}\frac{1}{\alpha(t)})\log T\big)$. 
Hence, by choosing
$\alpha(t)=\sqrt{t}$, 
$\mathcal{R}(T)=\mathcal{O}(\sqrt{T}\log T)$.
\end{theorem}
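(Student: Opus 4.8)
The plan is to mimic the structure of the linear case (Lemma \ref{lemma:decomposed linear regret} and Theorem \ref{theorem:linear regret}), but with the extra bookkeeping forced by $L_\mu > \kappa_\mu$ and the auxiliary factor $\eta(t)$. First I would dispose of the small-$t$ regime: for $t < T_0$ bound $R_t$ by a constant (using $\lVert\theta_*\rVert_2\le 1$, $\lVert x_{t,a}\rVert\le 1$, and $L_\mu$-Lipschitzness of $\mu$, so $R_t = \mu(\theta_*^\intercal x_t^*)-\mu(\theta_*^\intercal x_t)\le 2L_\mu$), contributing the $T_0$ term (absorbing constants). For $t\ge T_0$ the leading coefficient $L_\mu^3/(2\kappa_\mu^2\eta(t))-1$ is negative, which is exactly what makes the per-step quadratic bound usable.

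Next, for $t\ge T_0$ I would derive the per-step regret decomposition. The key tools are: (i) the optimality of $a_t$ for the GLM-RBMLE objective, i.e. $\ell(\mathcal{F}_t;\bar\theta_{t,a_t}) + \eta(t)\alpha(t)\bar\theta_{t,a_t}^\intercal x_{t,a_t} - \tfrac{\lambda}{2}\lVert\bar\theta_{t,a_t}\rVert_2^2 \ge$ the same expression evaluated at $a_t^*$; (ii) the first-order characterization \eqref{eq:tilde theta first-order condition} of $\bar\theta_{t,a}$; (iii) a second-order Taylor / strong-convexity argument on the penalized log-likelihood $\theta\mapsto \ell(\mathcal{F}_t;\theta)-\tfrac{\lambda}{2}\lVert\theta\rVert_2^2$, whose Hessian is lower-bounded by $\kappa_\mu V_t$ and upper-bounded by $L_\mu V_t$ where $V_t = X_t^\intercal X_t + \lambda I$ (here $\lambda\ge 1$ so $V_t\succeq I$); and (iv) the concentration bound on $\lVert\widehat\theta_t^{\mathrm{GLM}}-\theta_*\rVert_{V_t}$ (the GLM least-squares/MLE estimate), which is the source of $G_2(t,\delta)$ — this is a standard self-normalized martingale bound à la \cite{li2017provably,abbasi2011improved}, holding on an event of probability $\ge 1-\delta$. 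Combining these, $R_t$ is bounded above by the value of a quadratic in $\lVert x_{t,a_t^*}\rVert_{V_t^{-1}}$ with negative leading coefficient (after using $\eta(t)\ge 2L_\mu^3/\kappa_\mu^2$ for $t\ge T_0$), so that quadratic is maximized at a finite point; this yields a bound of the shape $R_t \lesssim \alpha(t)\lVert x_{t,a_t^*}\rVert_{V_t^{-1}}^2 + \lVert x_t\rVert_{V_t^{-1}}\cdot G_2(t,\delta) + \tfrac{1}{\alpha(t)} G_2(t,\delta)^2$, with constants expressible through $C_1,C_2,C_3$.

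Then I would sum over $t=T_0,\dots,T$. For the three sums I use: (a) $\sum_{t=1}^T \lVert x_t\rVert_{V_t^{-1}}^2 \le 2d\log\!\big(\tfrac{\lambda+T}{d}\big) = (G_1(T))^2$ by the elliptical potential lemma (and that $\lVert x_{t,a_t^*}\rVert_{V_t^{-1}}^2$ is bounded appropriately — actually here I'd want the bound at the chosen arm, so I'd route the $\alpha(t)$-term through $x_{t,a_t}$ via a comparison step, or note that the clipped optimum of the quadratic only needs $\lVert x_{t,a_t^*}\rVert_{V_t^{-1}}\le$ its optimizing value; this rerouting is a place to be careful); (b) $\sum_{t=1}^T \lVert x_t\rVert_{V_t^{-1}} \le \sqrt{T\sum_t \lVert x_t\rVert_{V_t^{-1}}^2}\le \sqrt{T}\,G_1(T)$ by Cauchy–Schwarz; (c) monotonicity of $G_2(t,\delta)$ in $t$ to pull out $G_2(T,\delta)$, leaving $\sum_{t=1}^T 1/\alpha(t)$. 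This produces exactly \eqref{eq:formal result of generalized regret bound}. Finally, the asymptotic claims: $\sum_{t=1}^T 1/\sqrt t = \Theta(\sqrt T)$ and $G_1(T),G_2(T,\delta)=\Theta(\sqrt{\log T})$, so with $\alpha(t)=\sqrt t$ each of the three terms is $\mathcal{O}(\sqrt T\log T)$; the dichotomy for $\alpha(t)=\Omega(\sqrt t)$ vs. $\mathcal{O}(\sqrt t)$ follows by checking which of the $\alpha(T)(G_1)^2$ term or the $(G_2)^2\sum 1/\alpha(t)$ term dominates.

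I expect the main obstacle to be step (iii)–(iv) combined: getting the per-step inequality into a clean quadratic-with-negative-leading-coefficient form. Unlike the linear case, where $\bar\theta_{t,a}$ has the closed form \eqref{eq:solution for widetilde theta t,a} and the index is explicit, here one only has the implicit first-order condition, so bounding $\bar\theta_{t,a_t}^\intercal x_{t,a_t} - \bar\theta_{t,a_t^*}^\intercal x_{t,a_t^*}$ and relating $\bar\theta_{t,a}$ to both $\widehat\theta_t^{\mathrm{GLM}}$ and $\theta_*$ requires carefully chaining the mean-value form $\mu(u)-\mu(v)\in[\kappa_\mu,L_\mu](u-v)$ through the optimality comparison while tracking the $\eta(t)$ factor — and it is precisely here that the constants $C_1,C_2,C_3$ and the threshold $T_0$ are pinned down.
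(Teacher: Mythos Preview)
Your overall strategy matches the paper's, and the skeleton (handle $t<T_0$ trivially; for $t\ge T_0$ derive a per-step quadratic bound with negative leading coefficient; sum via the elliptical potential lemma and Cauchy--Schwarz) is exactly right. But the step you flag as ``the main obstacle'' is indeed the crux, and your proposal does not quite resolve it. Two concrete points.

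First, the paper does not work directly in the $V_t$ norm when comparing the two biased estimates. To relate $\bar\theta_{t,a_t}$ and $\bar\theta_{t,a_t^*}$ it subtracts their first-order conditions \eqref{eq:tilde theta first-order condition} and applies the mean-value theorem to produce a data-dependent matrix $U=\sum_{s<t}\mu'(x_s^\intercal\theta_0)\,x_sx_s^\intercal+\lambda I$ (with $\theta_0$ a convex combination of the two estimates) satisfying $\kappa_\mu V_t\preceq U\preceq L_\mu V_t$; this is Lemma~\ref{lemma:connect tilde thetas}. The resulting identity $(x_t+x_{t,a_t^*})^\intercal(\bar\theta_{t,a_t^*}-\bar\theta_t)=\alpha(t)\big(\lVert x_{t,a_t^*}\rVert_{U^{-1}}^2-\lVert x_t\rVert_{U^{-1}}^2\big)$ is what converts the index-optimality inequality into the desired quadratic in $\lVert x_{t,a_t^*}\rVert_{U^{-1}}$. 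Your ``chaining the mean-value form'' gestures at this but does not supply the identity; without it you cannot pass from the index comparison (which involves $\bar\theta_{t,a}^\intercal x_{t,a}$ for two different $a$'s) to a bound with the right quadratic structure.

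Second --- and this resolves your explicit ``rerouting'' worry --- after completing the square the optimal-arm quantity $\lVert x_{t,a_t^*}\rVert_{U^{-1}}$ disappears entirely. The quadratic is $-A\,z^2+B\,z+C$ with $z=\lVert x_{t,a_t^*}\rVert_{U^{-1}}$, whose maximum $B^2/(4A)+C$ contains no $z$; crucially the linear coefficient $B$ involves only $\lVert x_t\rVert_{U^{-1}}$ (the \emph{chosen} arm) and $\lVert\theta_*-\widehat\theta_t\rVert_{V_t}$, coming from the Cauchy--Schwarz bounds on $(\theta_*-\bar\theta_t)^\intercal x_{t,a_t^*}$ and $(\bar\theta_{t,a_t^*}-\theta_*)^\intercal x_t$ after routing through $\widehat\theta_t$ (Lemmas~\ref{lemma:norm of theta hat minus tilde theta}--\ref{lemma:useful inequalities}). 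The per-step bound is therefore $R_t\lesssim \alpha(t)\lVert x_t\rVert_{V_t^{-1}}^2 + \lVert x_t\rVert_{V_t^{-1}}\,G_2(t,\delta) + \tfrac{1}{\alpha(t)}G_2(t,\delta)^2$, all in terms of the chosen arm, so the elliptical potential lemma applies directly. Your written shape retains $\alpha(t)\lVert x_{t,a_t^*}\rVert_{V_t^{-1}}^2$, which is precisely the source of your summability concern; the resolution is that this term never appears once the square is completed correctly.
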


\begin{remark}
\normalfont {\color{black}This bound improves that in \cite{filippi2010parametric} by a $\sqrt{\log T}$ factor and is the same as that of UCB-GLM \cite{li2017provably}.}
\end{remark}


\label{section:analysis}

\section{Numerical Experiments}
\label{section:experiment}
\useunder{\uline}{\ul}{}
 
To evaluate the performance of the proposed RBMLE methods, we conduct a comprehensive empirical comparison with other state-of-the-art methods vis-a-vis three aspects: effectiveness (cumulative regret), efficiency (computation time per decision vs. cumulative regret), and scalability (in number of arms and dimension of contexts). We paid particular attention to fairness of comparison and reproducibility of results. To ensure sample-path sameness for all methods, we compared each method over a pre-prepared dataset containing the context of each arm and the outcomes of pulling each arm over all rounds. Hence, the outcome of pulling an arm is obtained by querying the pre-prepared data instead of calling the random generator and changing its state. A few benchmarks such as LinTS and Variance-based Information Directed Sampling (VIDS) that rely on outcomes of random sampling in each round of decision-making are separately evaluated with the same prepared data and with the same seed. To ensure the reproducibility of experimental results, we set up the seeds for the random number generators at the beginning of each experiment and provide all the codes. 

\rev{To present a comprehensive numerical study similar to \cite{russo2018learning},} the benchmark methods compared include LinUCB \cite{chu2011contextual}, LinTS \cite{agrawal2013thompson}, Bayes-UCB (BUCB) \cite{kaufmann2012bayesian}, GPUCB \cite{srinivas2010gaussian} and its variant GPUCB-Tuned (GPUCBT) \cite{russo2018learning}, Knowledge Gradient (KG) and its variant KG* \cite{ryzhov2010robustness,ryzhov2012knowledge,kaminski2015refined}, and VIDS \cite{russo2018learning}. 
A detailed review of these methods is presented in Section \ref{section:related}. The values of their hyper-parameters are as follows. For LinRBMLE, as suggested by Theorem \ref{theorem:linear regret}, \rev{we choose $\alpha(t)=\sqrt{t}$ without any hyper-parameter tuning, and $\lambda=1$ which is a common choice in ridge regression and is not sensitive to the empirical regret.} We take $\alpha=1$ in LinUCB and $\delta=10^{-5}$ in GPUCB. We tune the parameter $c$ in GPUCBT for each experiment and choose $c=0.9$ that achieves the best performance. We follow the suggestion of \cite{kaufmann2012bayesian} to choose $c=0$ for BUCB. Respecting the restrictions in \cite{agrawal2013thompson}, we take $\delta=0.5$ and $\epsilon=0.9$ in LinTS. In the comparison with IDS and VIDS, we sampled $10^3$ points over the interval $[0,1]$ for $q$ and take $M=10^4$ in sampling (Algorithm 4 and 6 in~\cite{russo2018learning}). In the Bayesian family of benchmark methods (LinTS, BUCB, KG, KG*, GPUCB, GPUCBT, and VIDS), the prior distribution over the unknown parameters $\theta_{*}$ is $\mathcal{N}(0_d,I_d)$. The comparison contains 50 trials of experiments and $T$ rounds in each trial. We consider both contexts, ``static,'' where the context for each arm is fixed in each experiment trial, and ``time-varying,'' where the context for each arm changes from round to round. 

\rev{The procedure for generating the synthetic dataset is as follows: (i) All contexts are drawn randomly from $\mathcal{N}(0_d,10I_d)$ and normalized by their $\ell_2$ norm; (ii) At time $t$, the reward of each arm $i$ is sampled independently from $\mathcal{N}(\mu(\theta^\intercal_*x_{t,i}),1)$. 
In each test case, we consider a fixed $\theta_*$ and randomly generate the contexts, which lead to different mean rewards across the arms.
This scheme for generating the synthetic dataset has been widely adopted in the bandit literature, such as \cite{abbasi2011improved,dumitrascu2018pg,kirschner2018information};
(iii) As IDS-based approaches are known to be time-consuming, we choose $d=3$ as suggested by \cite{kirschner2018information} for the experiments involving regret comparison in order to finish enough simulation steps within a reasonable amount of time. For the scalability experiments, we reduce the number of rounds $T$ to allow the choice of larger $d$'s.
}

\textbf{Effectiveness.} Figure \ref{fig:regret} and Table \ref{table:TimeStatic/ID=2} illustrate the effectiveness of LinRBMLE in terms of cumulative regret. We observe that for both static and time-varying contexts, LinRBMLE achieves performance only slightly worse than the best performing algorithm, which is often GPUCBT or VIDS. However, compared to these two, LinRBMLE has some salient advantages. In contrast to LinRBMLE, GPUCBT has no guaranteed regret bound and requires tuning the hyper-parameter $c$ to establish its outstanding performance. This restricts its applicability if pre-tuning is not possible. Compared to VIDS, the computation time of LinRBMLE is two orders of magnitude smaller, as will be shown in Figure \ref{fig:time all}. As shown in Table \ref{table:TimeStatic/ID=2}, LinRBMLE also exhibits better robustness with an order of magnitude or two smaller std. dev. compared to VIDS and many other benchmark methods. 
In Figure \ref{fig:regret}(a), VIDS appears to have not converged, but a detailed check reveals that this is only because its performance in some trials is much worse than in other trials. 
The robustness is also reflected in variation across problem instances, e.g., the performance of VIDS is worse in the problem of Figure \ref{fig:regret}(b) than in the problem of Figure \ref{fig:regret}(a), while the performance of LinRBMLE is consistent in these two examples.
The robustness of LinRBMLE across different sample paths can be largely attributed to the inclusion of the Reward Bias term $\alpha(t)$ in the index (\ref{eq:RBMLE index for linear bandits}), which encourages more exploration even for those sample paths with small $\lVert x_{t,i}\rVert_{\inv{V_t}}$. It is worth mentioning that the advantage of VIDS compared to other methods is less obvious for time-varying contexts. Experimental results reported in \cite{russo2018learning} are restricted to the static contexts. More statistics of final cumulative regret in Figure \ref{fig:regret} are provided in the appendix.

\textbf{Efficiency.} Figure \ref{fig:time all} presents the averaged cumulative regret versus average computation time per decision. 
We observe that LinRBMLE and GPUCBT have points closest to the origin, signifying small regret simultaneously with small computation time,
and outperform the other methods.


\textbf{Scalability.} Table \ref{table:TimeStatic_computation_time} presents scalability of computation time per decision as $K$ and $d$ are varied. We observe that both LinRBMLE and GPUCBT,
which are often the best among the benchmark methods have low computation time as well as better scaling when $d$ or $K$ are increased. \rev{LinRBMLE is slightly better than LinUCB in terms of computation time under various $K$ and $d$ since the calculation of LinUCB index requires an additional square-root operation.} Such scalability is important for big data applications such as recommender and advertising systems.

For generalized linear bandits, a similar study on effectiveness, efficiency, and scalability for GLM-RBMLE and popular benchmark methods is detailed in Appendix \ref{appendix:experiments}.

\begin{figure*}[!h]
$\begin{array}{c c c c}
    \multicolumn{1}{l}{\mbox{\bf }} & \multicolumn{1}{l}{\mbox{\bf }} & \multicolumn{1}{l}{\mbox{\bf }} & \multicolumn{1}{l}{\mbox{\bf }}\\ 
    \hspace{-3mm} \scalebox{0.27}{\includegraphics[width=\textwidth]{./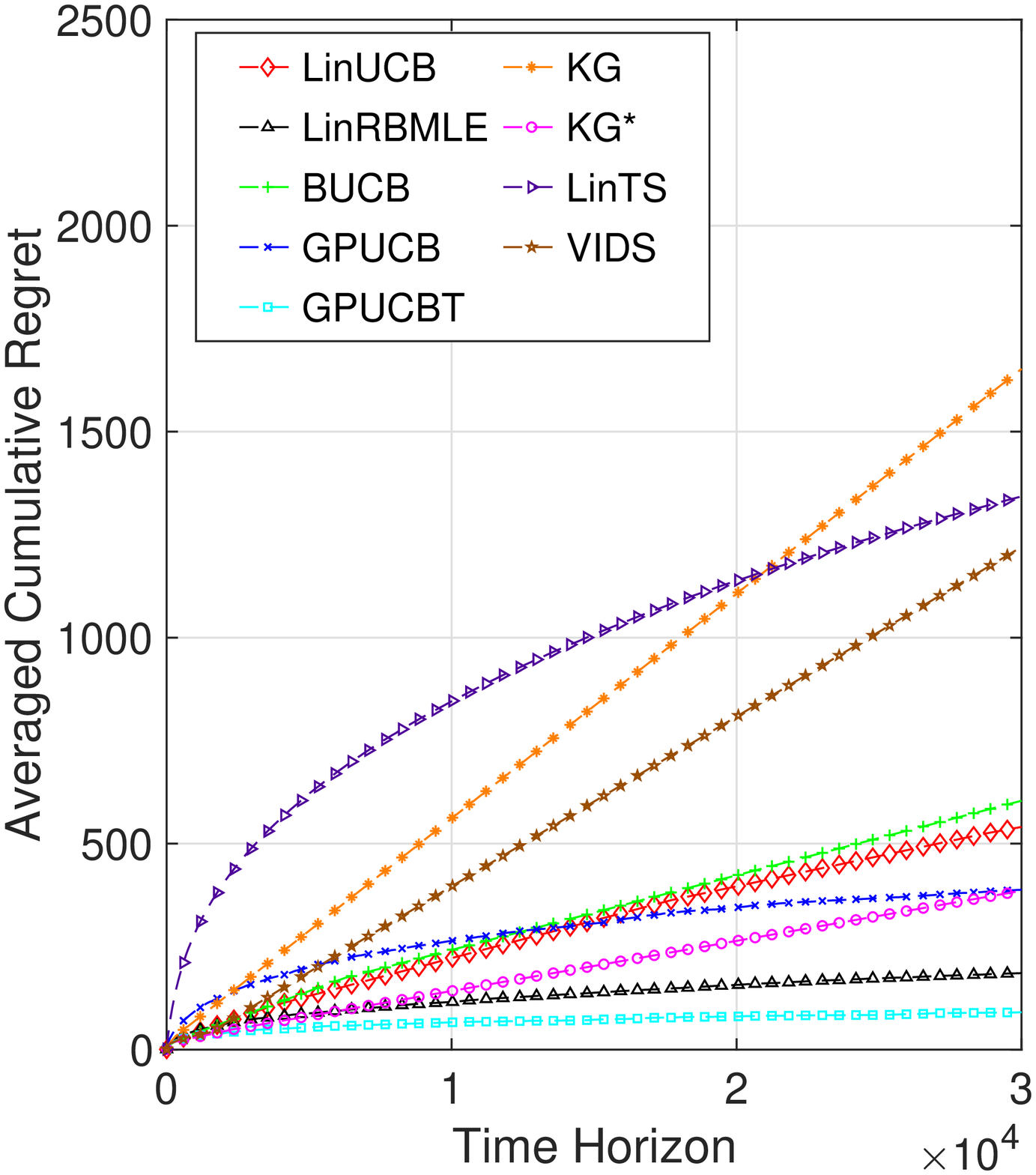}}  \label{fig:TimeStatic_ID_2} & \hspace{-6mm} \scalebox{0.27}{\includegraphics[width=\textwidth]{./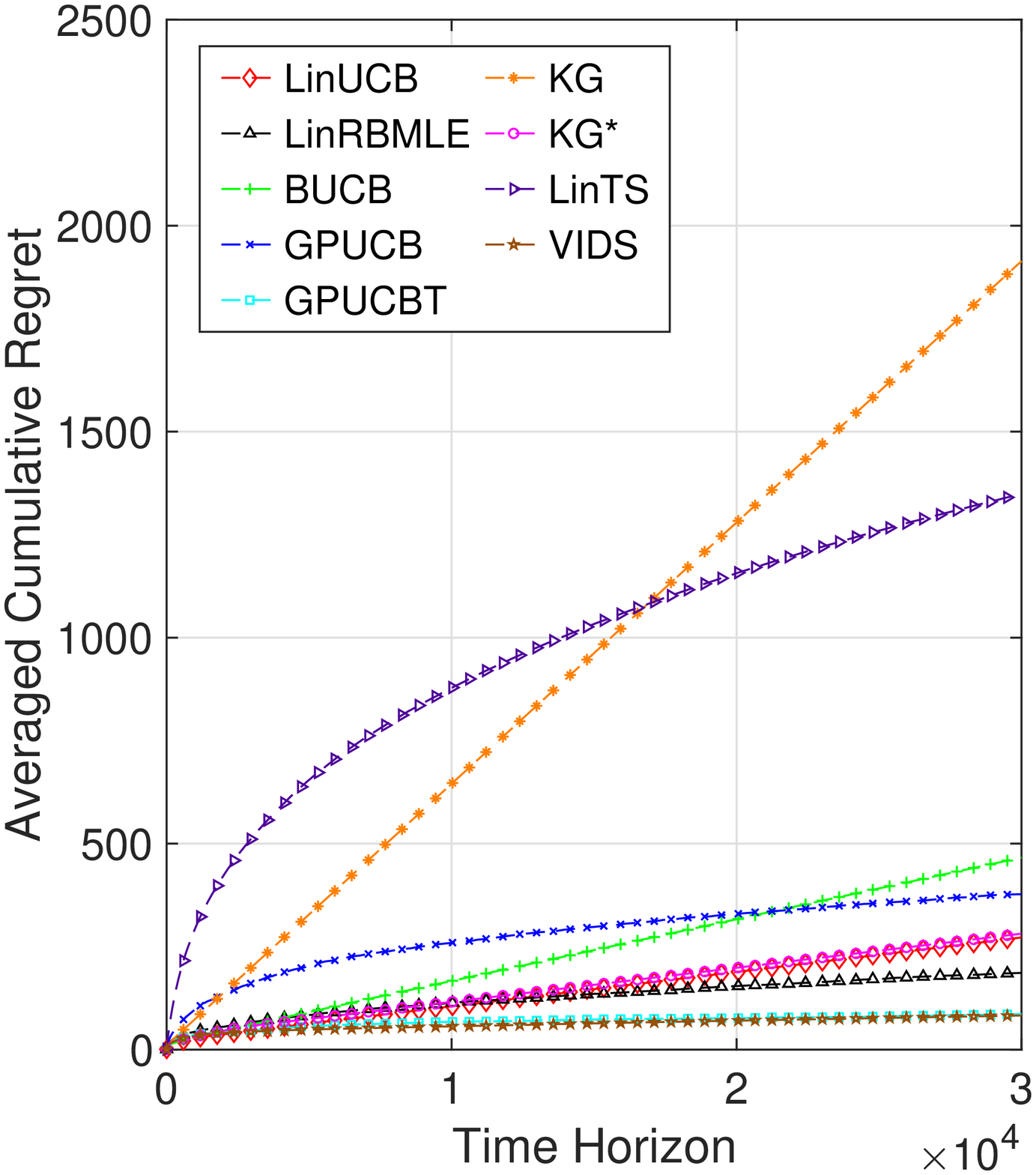}} \label{fig:TimeStatic_ID_9} & \hspace{-6mm} \scalebox{0.27}{\includegraphics[width=\textwidth]{./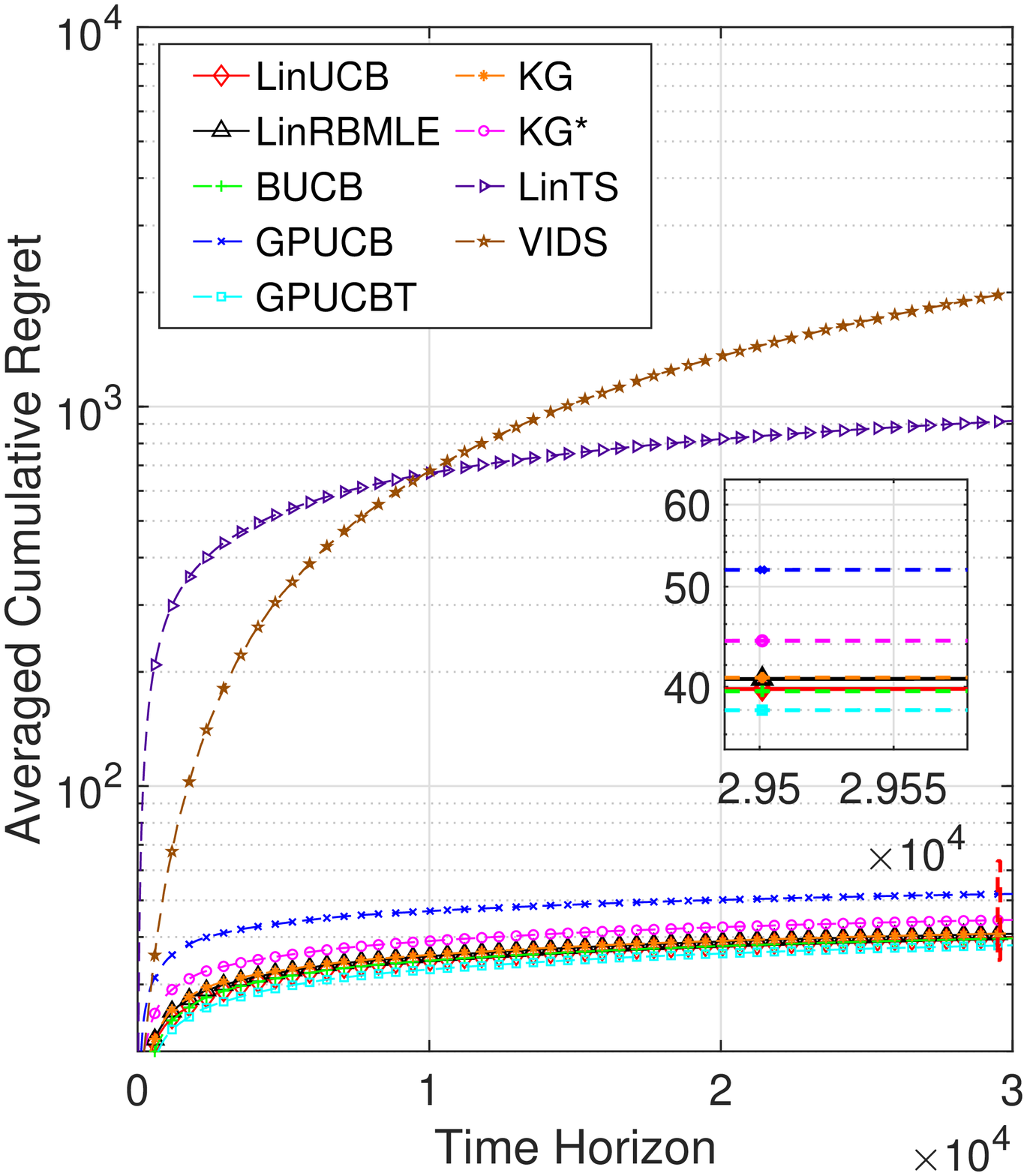}} \label{fig:TimeVary_ID_2}& \hspace{-6mm}
    \scalebox{0.27}{\includegraphics[width=\textwidth]{./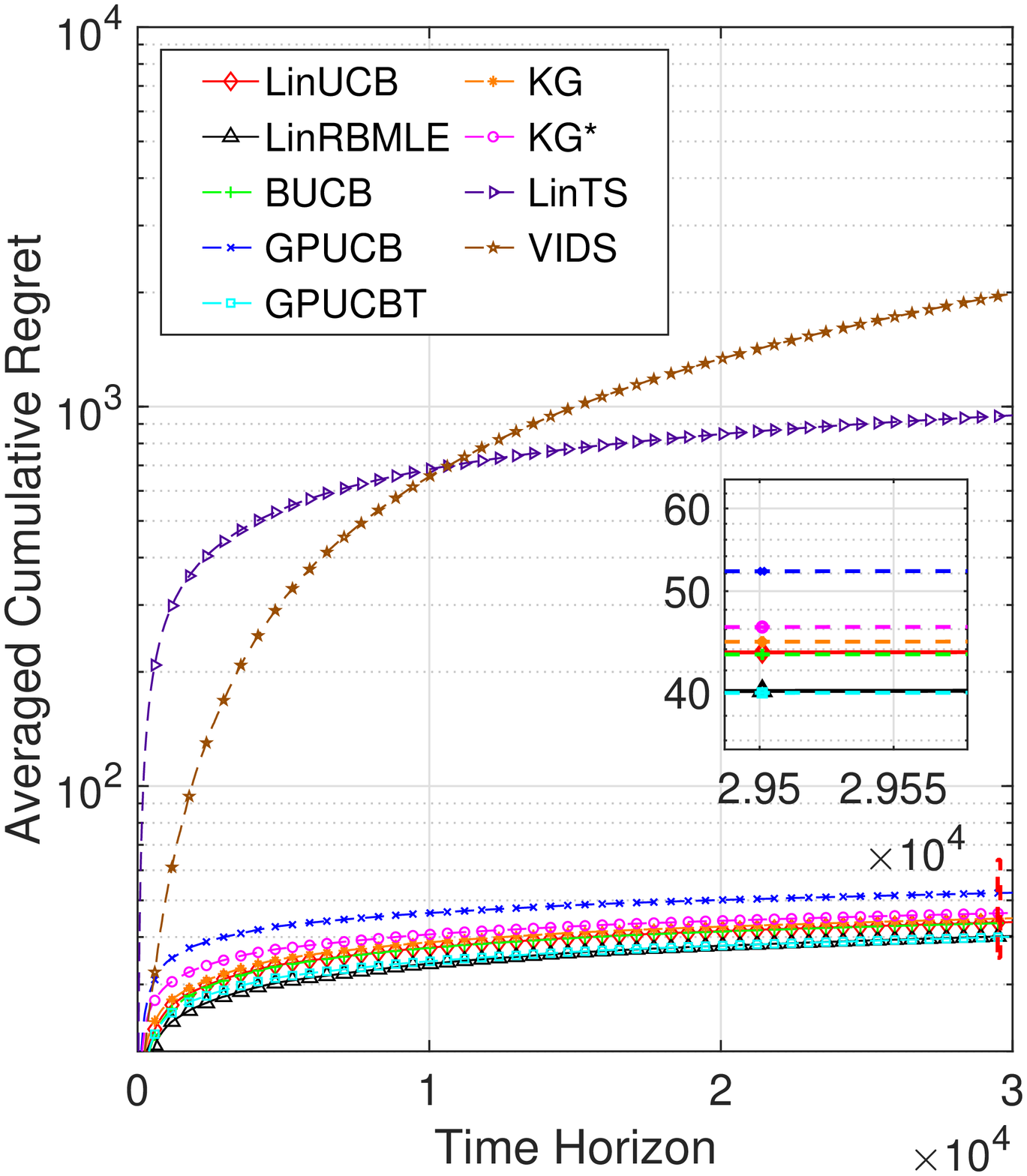}} \label{fig:TimeVary_ID_9}\\ [0.0cm]
    \mbox{\small (a)} & \hspace{-2mm} \mbox{\small (b)} & \hspace{-2mm} \mbox{\small (c)} & \hspace{-2mm} \mbox{\small (d)}\\[-0.2cm]
\end{array}$
\caption{Cumulative regret averaged over 50 trials with $T=3\times 10^4$ and $K=10$: (a) and (b) are under static contexts; (c) and (d) are under time-varying contexts; (a) and (c) are with $\theta_* = (-0.3, 0.5, 0.8)$; (b) and (d) are with with $\theta_* = (-0.7, -0.6, 0.1)$.}
\label{fig:regret}
\end{figure*}

\begin{table*}[!h]
\footnotesize
\begin{center}
\begin{tabular}{|c|c|c|c|c|c|c|c|c|c|}
\hline
\textbf{Alg.} & \textbf{RBMLE} & \textbf{LinUCB} & \textbf{BUCB} & \textbf{GPUCB} & \textbf{GPUCBT} & \textbf{KG} & \textbf{KG*} & \textbf{LinTS} & \textbf{VIDS} \\ \hline
Mean & {\ul \textbf{1.86}} & 5.41 & 6.04 & 3.88 & {\ul \textbf{0.90}} & 16.52 & 3.86 & 13.43 & 12.20 \\ \hline
Std.Dev & {\ul \textbf{0.42}} & 14.87 & 11.78 & 1.19 & {\ul \textbf{0.53}} & 26.68 & 10.46 & 2.20 & 74.66 \\ \hline
Q.10 & 1.45 & {\ul \textbf{0.04}} & 0.07 & 2.30 & 0.32 & {\ul \textbf{0.03}} & 0.07 & 10.83 & 0.15 \\ \hline
Q.25 & 1.62 & {\ul \textbf{0.07}} & 0.10 & 3.01 & 0.59 & {\ul \textbf{0.05}} & 0.10 & 12.44 & 0.29 \\ \hline
Q.50 & 1.79 & {\ul \textbf{0.15}} & {\ul \textbf{0.14}} & 3.78 & 0.79 & 0.18 & 0.18 & 13.58 & 0.45 \\ \hline
Q.75 & 1.96 & 1.00 & 1.30 & 4.56 & 1.09 & 23.83 & {\ul \textbf{0.34}} & 14.25 & {\ul \textbf{0.79}} \\ \hline
Q.90 & {\ul \textbf{2.31}} & 19.34 & 23.00 & 5.74 & {\ul \textbf{1.66}} & 64.89 & 18.94 & 15.73 & 2.38 \\ \hline
Q.95 & {\ul \textbf{2.75}} & 30.47 & 36.31 & 5.91 & {\ul \textbf{1.98}} & 75.96 & 27.18 & 16.78 & 9.40 \\ \hline
\end{tabular}
\caption{Statistics of the final cumulative regret in Figure \ref{fig:regret}(a). The best and the second-best are highlighted. `Q' and ``Std.Dev'' stand for quantile and standard deviation of the total cumulative regret over 50 trails, respectively. All the values displayed here are scaled by 0.01 for more compact notations.}
\label{table:TimeStatic/ID=2}
\end{center}
\vspace*{-1 cm}
\end{table*} 
\begin{figure*}[!h]
\centering
$\begin{array}{c c c c}
    \multicolumn{1}{l}{\mbox{\bf }} & \multicolumn{1}{l}{\mbox{\bf }} & \multicolumn{1}{l}{\mbox{\bf }} & \multicolumn{1}{l}{\mbox{\bf }}  \\
    \hspace{-3mm} \scalebox{0.255}{\includegraphics[width=\textwidth]{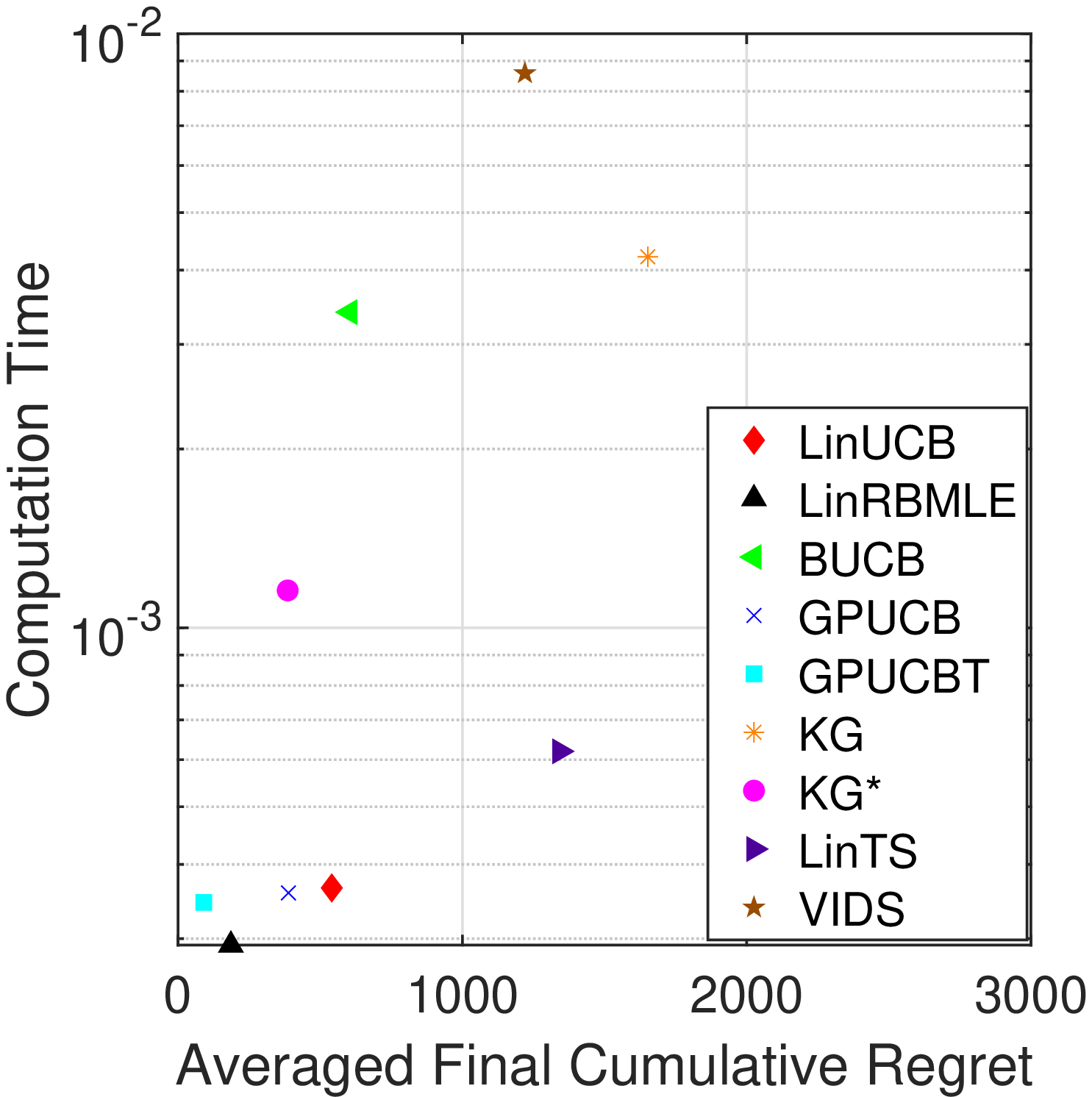}} \label{fig:time all TimeStatic ID=2} & 
    \hspace{-3mm} \scalebox{0.255}{\includegraphics[width=\textwidth]{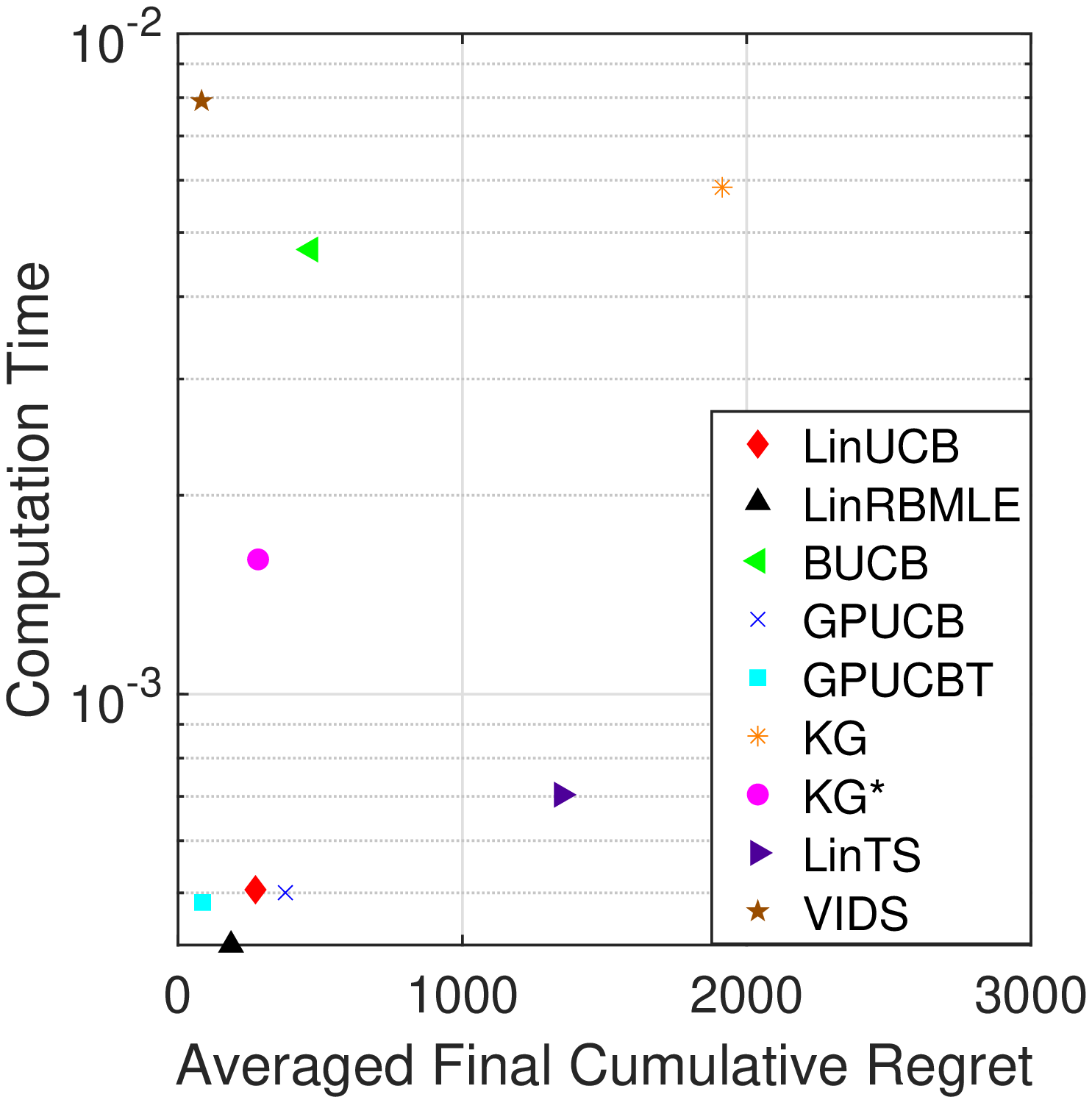}} \label{fig:time all TimeStatic ID=9} & 
    \hspace{-3mm}
    \scalebox{0.255}{\includegraphics[width=\textwidth]{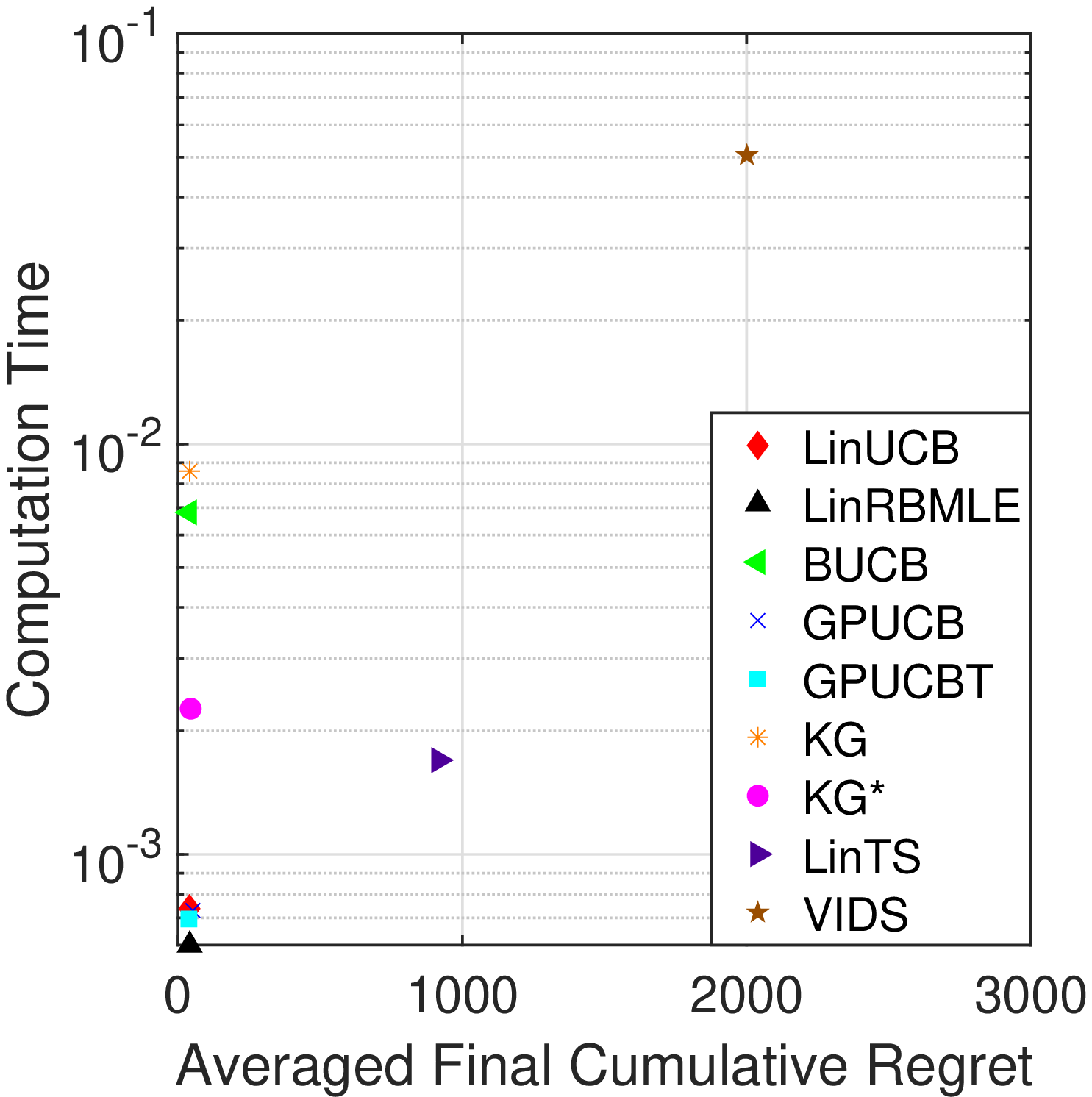}} \label{fig:time all TimeVary ID=2} &
    \hspace{-3mm}
    \scalebox{0.255}{\includegraphics[width=\textwidth]{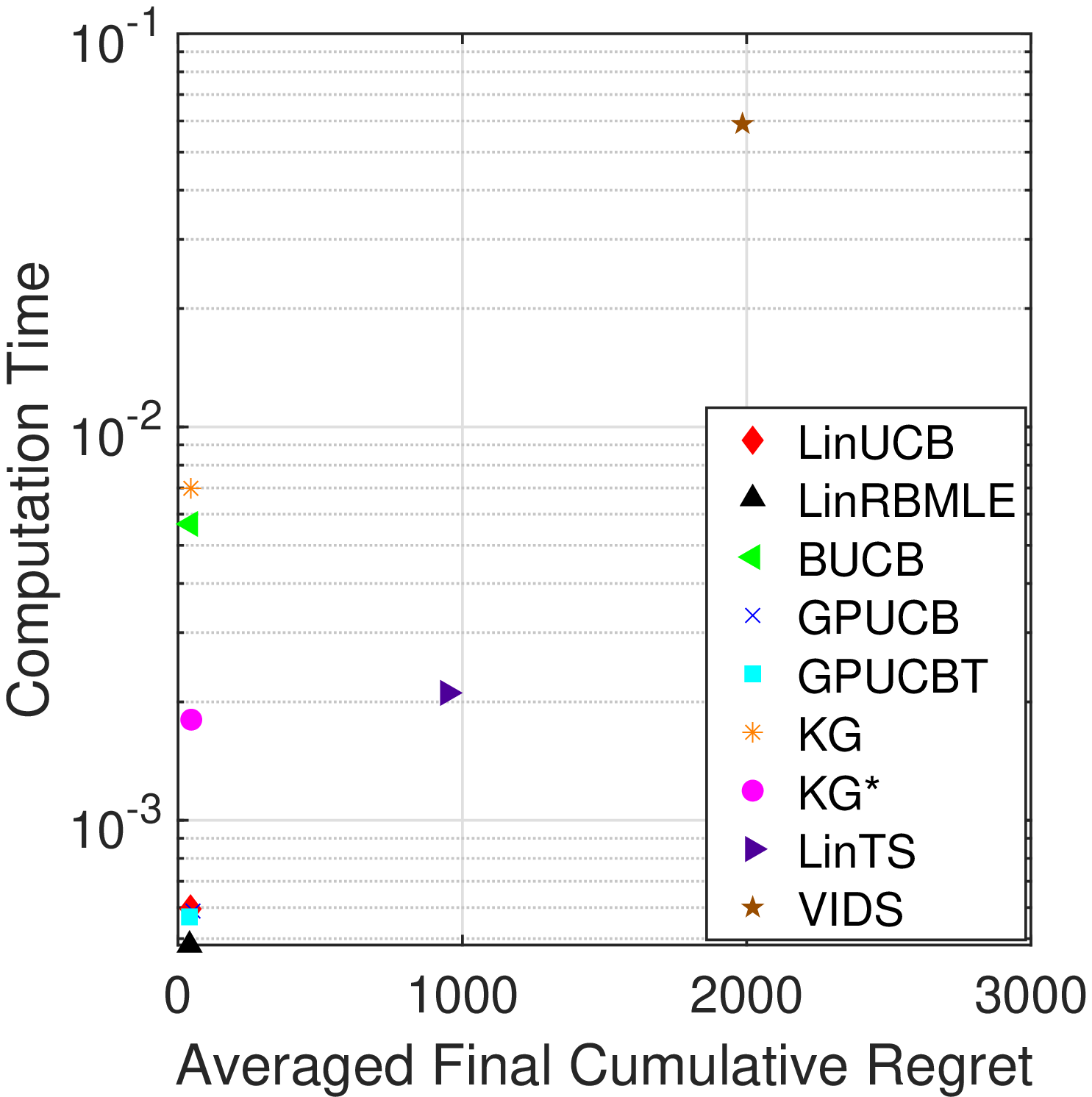}} \label{fig:time all TimeVary ID=9}\\ [0.0cm]
    \mbox{\small (a)} & \hspace{-2mm} \mbox{\small (b)} & \hspace{-2mm} \mbox{\small (c)} & \hspace{-2mm} \mbox{\small (d)}\\[-0.2cm]
\end{array}$
\caption{Average computation time per decision vs. averaged cumulative regret for (a) Figure \ref{fig:regret}(a); (b)  Figure \ref{fig:regret}(b); (c)  Figure \ref{fig:regret}(c); (d)  Figure \ref{fig:regret}(d).}
\label{fig:time all}
\vspace*{-0.1 cm}
\end{figure*}

\captionsetup[table]{labelfont={color=black}}
\begin{table*}[!h]

\footnotesize
\begin{center}
{\color{black}
\begin{tabular}{|c|c|c|c|c|c|c|c|c|c|}
\hline
\textbf{Algorithm} & \textbf{RBMLE} & \textbf{LinUCB} & \textbf{BUCB} & \textbf{GPUCB} & \textbf{GPUCBT} & \textbf{KG} & \textbf{KG*} & \textbf{LinTS} & \textbf{VIDS} \\ \hline
$d = 100, K = 100$ & {\ul\textbf{0.127}} & 0.149 & 1.157 & 0.147 & 0.145 & 1.107 & 0.401 & 0.192 & 5.054 \\ \hline
$d = 200, K = 100$ & {\ul\textbf{0.213}} & 0.24 & 1.237 & 0.234 & 0.233 & 1.168 & 0.488 & 0.561 & 9.239 \\ \hline
$d = 300, K = 100$ & {\ul\textbf{0.303}} & 0.339 & 1.467 & 0.334 & 0.332 & 1.386 & 0.599 & 1.374 & 19.876 \\ \hline
$d = 100, K = 200$ & 0.233 & 0.273 & 2.25 & 0.268 & 0.266 & 2.155 & 1.021 & {\ul {\ul\textbf{0.205}}} & 6.218 \\ \hline
$d = 200, K = 200$ & {\ul\textbf{0.373}} & 0.421 & 2.455 & 0.41 & 0.409 & 2.31 & 1.168 & 0.586 & 13.838 \\ \hline
$d = 300, K = 200$ & {\ul\textbf{0.452}} & 0.503 & 2.636 & 0.496 & 0.495 & 2.455 & 1.258 & 1.418 & 28.652 \\ \hline
\end{tabular}
}
\caption{{\color{black}Average computation time per decision for static contexts, under different values of $K$ and $d$. All numbers are averaged over 50 trials with $T= 10^2$ and in $10^{-2}$  seconds. The best is highlighted.}}
\label{table:TimeStatic_computation_time}
\end{center}
\end{table*} 
\captionsetup[table]{labelfont={color=black}}

\section{Related Work}
\label{section:related}

The RBMLE method was originally proposed in \cite{kumar1982new}.
It was subsequently examined in the Markovian setting in \cite{kumar1982optimal,kumar1983simultaneous,borkar1990kumar},
and in the linear quadratic Gaussian (LQG) system setting in \cite{kumar1983optimal,campi1998adaptive,prandini2000adaptive}.
A survey, circa 1985, of the broad field of stochastic adaptive control can be found in \cite{kumar1985survey}.
Recently it has been examined from the point of examining its regret performance in the case of non-contextual bandits with
exponential family of distributions in \cite{liu2019bandit}. Other than that, there appears to have been no work on examining its performance beyond long-term average optimality, which corresponds to regret of $o(t)$.

The linear stochastic bandits and their variants have been extensively studied from two main perspectives, namely the frequentist and the Bayesian approaches.
From the frequentist viewpoint, one major line of research is to leverage the least squares estimator and enforce exploration by constructing an upper confidence bound (UCB), introduced in the \textsc{LinReL} algorithm by \cite{auer2002using}. 
The idea of UCB was later extended to the LinUCB policy, which is simpler to implement and has been tested extensively via experiments \cite{li2010contextual}.
While being simple and empirically appealing approaches, the primitive versions of the above two algorithms are rather difficult to analyze due to the statistical dependencies among the observed rewards.
To obtain proper regret bounds, both policies were analyzed with the help of a more complicated master algorithm.
To address this issue, \cite{dani2008stochastic} proposed to construct a confidence ellipsoid, which serves as an alternative characterization of UCB, and proved that the resulting algorithm achieved an order-optimal regret bound (up to a poly-logarithmic factor).
Later, sharper characterizations of the confidence ellipsoid were presented by \cite{rusmevichientong2010linearly} and \cite{abbasi2011improved} thereby improving the regret bound.
Given the success of UCB-type algorithms for linear bandits, the idea of a confidence set was later extended to the generalized linear case \cite{filippi2010parametric,li2017provably} to study a broader class of linear stochastic bandit models.
Differing from the above UCB-type approaches, as a principled frequentist method, the RBMLE algorithm guides the exploration toward potentially reward-maximizing model parameters by applying a bias to the log-likelihood.
Most related is the work by \cite{liu2019bandit}, which adapted the RBMLE principle for stochastic multi-armed bandits and presented the regret analysis as well as extensive numerical experiments. 
However, \cite{liu2019bandit} focused on the non-contextual bandit problems, and the presented results cannot directly apply to the more structured linear bandit model.


Instead of viewing model parameters as deterministic unknown variables, the Bayesian approaches assume a prior distribution to facilitate the estimation of model parameters.
As one of the most popular Bayesian methods, Thompson sampling (TS) \cite{thompson1933likelihood} approaches the exploration issue by sampling the posterior distribution.
For linear bandit models, TS has been tested in large-scale experiments \cite{chapelle2011empirical} and shown to enjoy order-optimal regret bounds in various bandit settings \cite{agrawal2013thompson,russo2016information,abeille2017linear,agrawal2017near,dumitrascu2018pg}.
On the other hand, Bayesian strategies can also be combined with the notion of UCB for exploration, as in the popular GPUCB \cite{srinivas2010gaussian} and Bayes-UCB \cite{kaufmann2012bayesian} algorithms.
\rev{However, to the best of our knowledge, there is no regret guarantee for Bayes-UCB in the linear bandit setting \cite{urteaga2017bayesian}.}
Alternative exploration strategies for linear bandits have also been considered from the perspective of explicit information-theoretic measures.
\cite{russo2018learning} proposed a promising algorithm called information-directed sampling (IDS), which makes decisions based on the ratio between the square of expected regret and the information gain.
As the evaluation of mutual information requires computing high-dimensional integrals, VIDS, a variant of IDS, was proposed to approximate the information ratio by sampling, while still achieving competitive empirical regret performance.
Compared to IDS and its variants, the proposed RBMLE enjoys a closed-form index and is therefore computationally more efficient.
Another promising solution is the Knowledge Gradient (KG) approach \cite{ryzhov2012knowledge,ryzhov2010robustness}, which enforces exploration by taking a one-step look-ahead measurement.
While being empirically competitive, it remains unknown whether KG and its variants have a provable near-optimal regret bound.
In contrast, the proposed RBMLE enjoys provable order-optimal regret for standard linear as well as
generalized linear bandits.


\section{Conclusion}
\label{section:conclusion}
In this paper, we extend the {Reward Biased Maximum Likelihood} principle originally proposed for adaptive control, to contextual bandits.
LinRBMLE leads to a simple index policy for standard linear bandits. 
\rev{Through both theoretical regret analysis and simulations, we prove that the regret performance of LinRBMLE is competitive with the state-of-the-art methods while being computationally efficient. Given the favorable trade-off of regret and computation time, RBMLE is a promising  approach for contextual bandits.}

\section*{Ethical Impact}
Linear bandits as well as the generalized models serve as a powerful framework for sequential decision making in various critical applications, such as clinical trials \cite{varatharajah2018contextual}, mobile health \cite{tewari2017ads}, personalized recommender \cite{li2010contextual} and online advertising systems \cite{chapelle2011empirical}, etc. The rising volume of datasets in these applications requires learning algorithms that are more effective, efficient and scalable.
The study in this paper contributes a new family of frequentist approaches to this community. These approaches are proved to be order-optimal and demonstrate strong empirical performance with respect to measures of effectiveness, efficiency and scalability. As such, the proposed approaches are expected to further improve user experience in applications and benefit business stakeholders. The proposed approaches are inspired by an early adaptive control framework. This framework has been applied in many adaptive control applications \cite{kumar1985survey,kumar1982optimal,kumar1983simultaneous,kumar1983optimal,borkar1990kumar,campi1998adaptive,prandini2000adaptive}. However, analysis of its finite-time performance has been missing for decades. Our study takes a very first step towards understanding its finite-time performance in the contextual bandit setting.

Unfortunately, as in many other contextual bandit studies, our model does not take into account the fairness issue in learning the unknown parameters. For instance, it may happen that during the learning process, contextual bandit algorithms may consistently discriminate against some specific groups of users based on their social, economic, racial and sexual characteristics. Ensuring fairness may therefore require additional constraints on automated selection procedures. Such a study can contribute to general studies on the undesirable biases of machine learning algorithms \cite{joseph2016fairness}.

\bibliography{reference}

\newpage
\section*{Appendix}

\makeatletter
\newcommand*{\diff}%
  {\@ifnextchar^{\DIfF}{\DIfF^{}}}
\makeatother
\def\DIfF^#1{%
  \mathop{\mathrm{\mathstrut d}}%
  \nolimits^{#1}\gobblespace}
\def\gobblespace{%
  \futurelet\diffarg\opspace}
\def\opspace{%
  \let\DiffSpace\!%
  \ifx\diffarg(%
    \let\DiffSpace\relax
  \else
    \ifx\diffarg[%
      \let\DiffSpace\relax
    \else
      \ifx\diffarg\{%
        \let\DiffSpace\relax
      \fi%
    \fi%
  \fi%
  \DiffSpace}
\makeatother
\newcommand*{\deriv}[3][]{%
  \frac{\diff^{#1}#2}{\diff #3^{#1}}}
\newcommand*{\pderiv}[3][]{%
  \frac{\partial^{#1}#2}%
  {\partial #3^{#1}}}

\newcommand{\equad}{\mathrel{\phantom{=}}}

\appendix
\section{Proof of Indexability of the Strategy (\ref{eq:LinRBMLE index via tilde Theta t,a})}
\label{appendix:theorem:RBMLE index equivalent form}
Recall from Section \ref{section:linear:index} that $\bar{\theta}_t$ denotes a maximizer of the following problem: 
\begin{equation}
     \max_{\theta}\Big\{ \ell(\mathcal{F}_t;\theta)+\alpha(t)\cdot \max_{1 \leq a \leq K} \theta^{\intercal}x_{t,a}-\frac{\lambda}{2}{\lVert \theta \rVert}^2_2 \Big\}. \label{eq:tilde Theta t}
\end{equation}
Define
\begin{align}
    \bar{\mathcal{A}}_t&:=\argmax_{a}\hspace{2pt}\bar{\theta}_t^{\intercal}x_{t,a},\label{eq:tilde A_t}\\
    \bar{\Theta}_{t,a}&:=\argmax_{\theta} \Big\{ \ell(\mathcal{F}_t;\theta)+\alpha(t)\cdot \theta^{\intercal}x_{t,a}-\frac{\lambda}{2}{\lVert \theta \rVert}^2_2 \Big\}.\label{eq:tilde Theta t,a}
\end{align}
For each arm $a$, consider an estimator $\bar{\theta}_{t,a}\in \bar{\Theta}_{t,a}$. Subsequently, define an index set
\begin{equation}
    \bar{\mathcal{A}}'_t:=\argmax_{1\leq a\leq K} \Big\{ \ell(\mathcal{F}_t;\bar{\theta}_{t,a})+\alpha(t)\cdot \bar{\theta}_{t,a}^{\intercal}x_{t,a} -\frac{\lambda}{2}{\lVert \bar{\theta}_{t,a} \rVert}^2_2\Big\}.\label{eq:tilde A_t'}
\end{equation}
\begin{theorem}
\label{theorem:RBMLE index equivalent form}
\normalfont 
$\bar{\mathcal{A}}_t=\bar{\mathcal{A}}'_t$.
\end{theorem}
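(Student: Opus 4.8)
The plan is to prove the set equality $\bar{\mathcal{A}}_t=\bar{\mathcal{A}}'_t$ by showing that both index sets are characterized by the same underlying optimization value, namely the value of the RBMLE objective in (\ref{eq:tilde Theta t}). The key observation is that the inner maximization over arms and the outer maximization over $\theta$ in (\ref{eq:tilde Theta t}) can be interchanged:
\[
\max_{\theta}\Big\{ \ell(\mathcal{F}_t;\theta)+\alpha(t)\max_{a}\theta^{\intercal}x_{t,a}-\tfrac{\lambda}{2}\lVert\theta\rVert_2^2\Big\}
= \max_{a}\max_{\theta}\Big\{\ell(\mathcal{F}_t;\theta)+\alpha(t)\theta^{\intercal}x_{t,a}-\tfrac{\lambda}{2}\lVert\theta\rVert_2^2\Big\}
= \max_{a}\mathcal{I}_{t,a},
\]
where the last equality uses the definition (\ref{eq:LinRBMLE index via tilde Theta t,a}) of the index (with the identification that $\ell(\mathcal{F}_t;\bar{\theta}_{t,a})+\alpha(t)\bar{\theta}_{t,a}^{\intercal}x_{t,a}-\tfrac{\lambda}{2}\lVert\bar{\theta}_{t,a}\rVert_2^2 = \mathcal{I}_{t,a}$ for any $\bar{\theta}_{t,a}\in\bar{\Theta}_{t,a}$). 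So the common optimal value of (\ref{eq:tilde Theta t}) equals $\max_a \mathcal{I}_{t,a}$, and $\bar{\mathcal{A}}'_t = \argmax_a \mathcal{I}_{t,a}$ is exactly the arm-set achieving this value.

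First I would establish the $\supseteq$ inclusion: take $a\in\bar{\mathcal{A}}'_t$ and the associated $\bar{\theta}_{t,a}$, and show the pair $(\bar{\theta}_{t,a},a)$ attains the value $\max_a\mathcal{I}_{t,a}$ in the joint problem $\max_{\theta,a}$; hence $\bar{\theta}_{t,a}$ is a maximizer $\bar{\theta}_t$ of (\ref{eq:tilde Theta t}) and, crucially, $a$ must lie in $\argmax_{a'}\bar{\theta}_{t,a}^{\intercal}x_{t,a'}=\bar{\mathcal{A}}_t$ — if some other arm $a'$ gave a strictly larger $\bar{\theta}_{t,a}^{\intercal}x_{t,a'}$, then $(\bar{\theta}_{t,a},a')$ would strictly beat the claimed optimum, a contradiction. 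For the reverse inclusion $\subseteq$: take $a\in\bar{\mathcal{A}}_t$, so there is a maximizer $\bar{\theta}_t$ of (\ref{eq:tilde Theta t}) with $a\in\argmax_{a'}\bar{\theta}_t^{\intercal}x_{t,a'}$; then $\bar{\theta}_t$ achieves the value of the $\theta$-problem defining $\mathcal{I}_{t,a}$ (since plugging $\bar{\theta}_t$ into the objective of $\mathcal{I}_{t,a}$ gives precisely the optimal value of (\ref{eq:tilde Theta t})), so $\bar{\theta}_t\in\bar{\Theta}_{t,a}$ and $\mathcal{I}_{t,a}=\max_{a'}\mathcal{I}_{t,a'}$, i.e. $a\in\bar{\mathcal{A}}'_t$. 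Both directions hinge on the elementary fact that for a fixed feasible point, improving one coordinate of a joint max cannot decrease the objective.

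The one genuine subtlety — and the step I expect to require the most care — is the possible non-uniqueness of maximizers. The set $\bar{\Theta}_{t,a}$ may in principle contain more than one element, and the definition (\ref{eq:tilde A_t'}) of $\bar{\mathcal{A}}'_t$ fixes one representative $\bar{\theta}_{t,a}\in\bar{\Theta}_{t,a}$ per arm. I need to check that the value $\ell(\mathcal{F}_t;\bar{\theta}_{t,a})+\alpha(t)\bar{\theta}_{t,a}^{\intercal}x_{t,a}-\tfrac{\lambda}{2}\lVert\bar{\theta}_{t,a}\rVert_2^2$ is the same for every choice in $\bar{\Theta}_{t,a}$ (which is immediate, since they all attain $\max_\theta$), so that $\bar{\mathcal{A}}'_t$ and $\mathcal{I}_{t,a}$ are well-defined independently of the representative. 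For the Gaussian surrogate likelihood (\ref{eq Gaussian Likelihood}) with $\lambda\ge 1$, Corollary \ref{corollary:RBMLE index for linear bandits} in fact guarantees uniqueness, so this point is moot in the concrete algorithm; but the equivalence theorem as stated holds for a general $\ell$, and I would remark that only the value of the maximum — not the identity of the maximizer — enters the argument, so the proof goes through regardless. Finally, I would note that $\bar{\mathcal{A}}_t$ is likewise independent of which maximizer $\bar{\theta}_t$ of (\ref{eq:tilde Theta t}) is selected, because any such maximizer, when paired with any arm in its own $\argmax$, attains the joint optimum $\max_a\mathcal{I}_{t,a}$, forcing consistency across choices.
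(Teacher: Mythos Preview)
Your approach is the same as the paper's: both rest on interchanging the two maximizations so that the common optimal value is $\max_a \mathcal{I}_{t,a}$, and you are in fact more careful than the paper in spelling out the two inclusions rather than simply asserting argmax equality from equality of optimal values.

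There is one genuine slip, however: your closing claim that $\bar{\mathcal{A}}_t$ is independent of which maximizer $\bar{\theta}_t$ of (\ref{eq:tilde Theta t}) is selected is false. A counterexample arises even in the paper's Gaussian setting at $t=1$ (so $\ell\equiv 0$): with two arms having orthonormal contexts $x_{t,1}=e_1$, $x_{t,2}=e_2$, both $\bar{\theta}_{t,1}=\tfrac{\alpha(t)}{\lambda}e_1$ and $\bar{\theta}_{t,2}=\tfrac{\alpha(t)}{\lambda}e_2$ maximize (\ref{eq:tilde Theta t}), yet the former yields $\argmax_a\bar{\theta}_{t,1}^\intercal x_{t,a}=\{1\}$ and the latter $\{2\}$, while $\bar{\mathcal{A}}'_t=\{1,2\}$. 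Thus your $\supseteq$ argument only shows that for each $a\in\bar{\mathcal{A}}'_t$ there \emph{exists} a choice of $\bar{\theta}_t$ putting $a$ in $\bar{\mathcal{A}}_t$, not that $a\in\bar{\mathcal{A}}_t$ for the fixed $\bar{\theta}_t$ appearing in (\ref{eq:tilde A_t}). The paper's own proof glosses over exactly this point --- it displays the chain (\ref{eq:proof of index 2})--(\ref{eq:proof of index 6}) for the optimal value and then concludes --- so you are no less rigorous than the source; what both arguments genuinely establish is $\bar{\mathcal{A}}_t\subseteq\bar{\mathcal{A}}'_t$ for every choice of $\bar{\theta}_t$ together with $\bigcup_{\bar{\theta}_t}\bar{\mathcal{A}}_t=\bar{\mathcal{A}}'_t$, which is all that is needed to justify the index policy. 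Note also that Corollary~\ref{corollary:RBMLE index for linear bandits} guarantees uniqueness only of each $\bar{\theta}_{t,a}$, not of $\bar{\theta}_t$, so it does not rescue the independence claim.
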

\begin{proof}
\normalfont The proof follows from the fact that any maximizer of the original double maximization problem in (\ref{eq:tilde Theta t}) remains a maximizer after interchanging the order of the max operators.
By the definition of $\bar{\Theta}_{t}$ and $\bar{\mathcal{A}}_{t}$ in (\ref{eq:tilde Theta t}) and (\ref{eq:tilde A_t}), given any $\bar{\theta}_{t}\in\bar{\Theta}_{t}$, any arm $a\in \bar{\mathcal{A}}_{t}$ is a maximizer of the optimization problem $\max_{i}\hspace{2pt}\bar{\theta}_t^{\intercal}x_{t,i}$.
We know
\begin{equation}
    \argmax_{1\leq i\leq K} \bar{\theta}_t^{\intercal}x_{t,i}=\argmax_{1\leq i\leq K}\big\{ \ell(\mathcal{F}_t;\bar{\theta}_{t})+\alpha(t)\cdot \bar{\theta}_{t}^{\intercal}x_{t,i} \big\}\label{eq:proof of index 1}.
\end{equation}
Moreover,
\begin{align}
    \max_{1\leq i\leq K} &\big\{ \ell(\mathcal{F}_t;\bar{\theta}_{t})+\alpha(t)\cdot \bar{\theta}_{t}^{\intercal}x_{t,i} \big\} \\
    &= \ell(\mathcal{F}_t;\bar{\theta}_{t}) +  \alpha(t)\cdot \max_{1\leq i\leq K}  \bar{\theta}_{t}^{\intercal}x_{t,i}\label{eq:proof of index 2}\\
    &=\max_{\theta} \big\{ \ell(\mathcal{F}_t;{\theta})+ \alpha(t)\cdot \max_{1\leq i\leq K}  {\theta}^{\intercal}x_{t,i}  \big\}\label{eq:proof of index 3}\\
    &=\max_{\theta} \max_{1\leq i\leq K}  \big\{ \ell(\mathcal{F}_t;{\theta})+ {\theta}^{\intercal}x_{t,i}    \big\}\label{eq:proof of index 4}\\
    &= \max_{1\leq i\leq K} \max_{\theta} \big\{ \ell(\mathcal{F}_t;{\theta})+ {\theta}^{\intercal}x_{t,i}    \big\}\label{eq:proof of index 5}\\
    &= \max_{1\leq i\leq K} \big\{  \ell(\mathcal{F}_t;\bar{\theta}_{t,i})+ \bar{\theta}_{t,i}^{\intercal} x_{t,i}   \big\},\label{eq:proof of index 6}
\end{align}
where (\ref{eq:proof of index 2}) follows since $\ell(\mathcal{F}_t;\bar{\theta}_{t})$ is independent of $i$, (\ref{eq:proof of index 3}) holds by the definition of $\bar{\theta}_{t}$, (\ref{eq:proof of index 4})-(\ref{eq:proof of index 5}) hold by the fact that the optimal value remains unchanged after interchanging the order of the two max operators, and (\ref{eq:proof of index 6}) follows from the definition of $\bar{\theta}_{t,i}$. Therefore, by (\ref{eq:proof of index 1})-(\ref{eq:proof of index 6}), $\bar{\mathcal{A}}_t=\bar{\mathcal{A}}'_t$.\qed
\end{proof}

\section{Proof of Corollary \ref{corollary:RBMLE index for linear bandits}}
\label{appendix:corollary:RBMLE index for linear bandits}
\normalfont By substituting the Gaussian likelihood for $\ell(\mathcal{F}_t;\theta)$ in (\ref{eq:tilde Theta t,a}), the resulting objective in (\ref{eq:tilde Theta t,a}) becomes a strictly concave function and enjoys a unique maximizer.
By the first-order necessary optimality condition \cite{bertsekas1997nonlinear}, it is easy to verify that (\ref{eq:solution for widetilde theta t,a}) is indeed the unique solution to (\ref{eq:tilde Theta t,a}).
Subsequently, based on (\ref{eq:solution for widetilde theta t,a}) and Theorem \ref{theorem:RBMLE index equivalent form}, we know the arm chosen by the RBMLE algorithm at each time $t$ is
\begin{alignat}{2}
    a_t &= \argmax_{1\leq i\leq K} \Big\{ && -(X_t\bar{\theta}_{t,i} - R_t)^{\intercal}(X_t\bar{\theta}_{t,i} - R_t) \nonumber\\
     &{} &&+ \alpha(t)\bar{\theta}_{t,i}^{\intercal}x_{i,t}-\lambda{\lVert \bar{\theta}_{t,i} \rVert}^2_2\Big\} \label{eq:corollary1:1} \\ 
    &= \argmax_{1\leq i\leq K} \Big\{ &&-\bar{\theta}_{t,i}(X^{\intercal}_tX_t+\lambda I)\bar{\theta}_{t,i} \nonumber\\
    &{} &&+ (2R^{\intercal}_t X_t + 2 \alpha(t)x_{t,i})\bar{\theta}_{t,i} -R_t^{\intercal} R_t \Big\} \label{eq:corollary1:2} \\
    &= \argmax_{1\leq i\leq K} \Big\{ &&(X_t^{\intercal} R_t + \alpha(t)x_{t,i})^{\intercal} \inv{(X_t^{\intercal}X_t + \lambda I)} \nonumber\\
    &{} && \cdot (X_t^{\intercal}R_t+\alpha(t)x_{t,i}) -R_t^{\intercal}R_t \Big\} \label{eq:corollary1:3} \\
    &= \argmax_{1\leq i\leq K} \Big\{&&\widehat{\theta}_t^{\intercal} x_{t,i} + \frac{1}{2}\alpha(t)\lVert x_{t,i} \rVert^2_{\inv{V_t}} \Big\}. \label{eq:corollary1:4} 
\end{alignat}

where (\ref{eq:corollary1:1})-(\ref{eq:corollary1:3}) hold by substituting (\ref{eq:solution for widetilde theta t,a}) in (\ref{eq:tilde A_t'}), and (\ref{eq:corollary1:4}) follows from the definition of $\widehat{\theta}_t$.\qed

\section{Proof of Lemma \ref{lemma:decomposed linear regret}}
\label{appendix:lemma:decomposed linear regret}
\begin{proof}
\normalfont
By the definition of regret for the linear bandit model,
\begin{alignat}{2}
    R_t &= \theta_*^{\intercal}x_t^* - \theta_*^{\intercal}x_t && \label{eq:lemma1:1}\\
        &= (\theta_* - \widehat{\theta}_t)^{\intercal}x_t^* + \widehat{\theta}_t^{\intercal} x_t^* 
        &&- \theta_*^{\intercal}x_t \label{eq:lemma1:2}\\
        &\leq (\theta_*- \widehat{\theta}_t)^{\intercal}x_t^* + \widehat{\theta}_t^{\intercal} x_t &&+ \frac{1}{2}\alpha(t)\lVert x_{t} \rVert^2_{V_t^{-1}} \nonumber\\
        \span &&- \frac{1}{2}\alpha(t)\lVert x^*_{t} \rVert^2_{V_t^{-1}} - \theta_*^{\intercal}x_t \label{eq:lemma1:3}\\
        &= (\theta_* - \widehat{\theta}_t)^{\intercal}x_t^* + (\widehat{\theta}_t - \theta_*)^{\intercal}x_t + \frac{1}{2}\alpha(t)\lVert x_{t} \rVert^2_{V_t^{-1}}\span \span\nonumber\\
        \span \span & - \frac{1}{2}\alpha(t)\lVert x^*_{t} \rVert^2_{V_t^{-1}},  \label{eq:lemma1:4}
\end{alignat}
where (\ref{eq:lemma1:3}) follows from the RBMLE index (\ref{eq:RBMLE index for linear bandits}).
Let $V_t^{1/2}$ and $V_t^{-1/2}$ denote square-roots, satisfying $V_t=V_t^{1/2}V_t^{1/2}$ and $\inv{V_t}=V_t^{-1/2}V_t^{-1/2}$, unique since $V_t$ is positive definite.
The result (\ref{eq:lemma1:0}) follows by replacing the vector multiplication of $(\theta_* - \widehat{\theta}_t)^{\intercal}x_t$ and $(\widehat{\theta}_t - \theta_*)^{\intercal}x_t$ in (\ref{eq:lemma1:4}) by $(\theta_* - \widehat{\theta}_t)^{\intercal}V_t^{1/2} V_t^{-1/2}x_t$ and $(\widehat{\theta}_t - \theta_*)^{\intercal}V_t^{1/2} V_t^{-1/2}x_t$, and applying the Cauchy-Schwarz inequality.\qed 
\end{proof}

\section{Proof of Theorem \ref{theorem:linear regret}}
\label{appendix:theorem:linear regret}
Before proving Theorem \ref{theorem:linear regret}, we first introduce  the following useful lemmas.
Recall that $V_t = \sum_{s=1}^{t} x_{s}x_s^{\intercal} + \lambda I$.
Moreover, recall that
\begin{align}
    G_0(t,\delta)&:={\sigma}\sqrt{{d}\log\big(\frac{\lambda+{t}}{\lambda\delta}\big)}+\lambda^{\frac{1}{2}}\\
    G_1(t)&:=\sqrt{2d\log\big(\frac{\lambda +t}{d}\big)}
\end{align}
\begin{lemma}
\label{lemma:1st lemma for proof of Theorem 2}
\normalfont
For any time $t\geq 1$, with probability at least $1-\delta$,
\begin{align}
    &\lVert\theta_*-\widehat{\theta}_t\rVert_{V_t}\cdot\lVert x_t^*\rVert_{V_t^{-1}} - \frac{1}{2}\alpha(t)\lVert x_t^*\rVert^2_{V_t^{-1}} \leq \frac{1}{2\alpha(t)}\big(G_0(t,\delta)\big)^2. \label{eq:lemma2}
\end{align}
\begin{proof}[Lemma \ref{lemma:1st lemma for proof of Theorem 2}]
\normalfont
First, we obtain an upper bound by completing the square of the left-hand side of (\ref{eq:lemma2}) as
\begin{align}
    &\lVert\theta_*-\widehat{\theta}_t\rVert_{V_t}\cdot\lVert x_t^*\rVert_{\inv{V_t}} - \frac{1}{2}\alpha(t)\lVert x_t^*\rVert^2_{\inv{V_t}} \\ 
    &= -\frac{1}{2}\alpha(t)\Big( \lVert x_t^*\rVert_{\inv{V_t}} - \frac{\lVert \theta_*-\widehat{\theta}_t \rVert_{V_t}}{\alpha(t)} \Big)^2 + \frac{1}{2}\frac{\lVert \theta_*-\widehat{\theta}_t \rVert_{V_t}^2}{\alpha(t)} \\
    &\leq \frac{1}{2}\frac{\lVert \theta_*-\widehat{\theta}_t \rVert_{V_t}^2}{\alpha(t)}.
\end{align}
Moreover, by Theorem 2 in \cite{abbasi2011improved}, we know that with probability at least $1-\delta$,
\begin{equation}
    \lVert\theta_*-\widehat{\theta}_t\rVert_{V_t}\leq {\sigma}\sqrt{{d}\log\big(\frac{\lambda+{t}}{\lambda\delta}\big)}+\lambda^{\frac{1}{2}}=G_0(t,\delta).\label{eq:Theorem 2 in Abbasi 2011}
\end{equation}
Therefore, we can conclude that (\ref{eq:lemma2}) indeed holds.\qed

\end{proof}
\end{lemma}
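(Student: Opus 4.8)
The plan is to treat the left-hand side of (\ref{eq:lemma2}) as a concave quadratic in the single scalar quantity $\lVert x_t^*\rVert_{\inv{V_t}}$, optimize (or rather bound) it deterministically by completing the square, and then supply the only probabilistic ingredient, namely a high-probability bound on the weighted estimation error $\lVert\theta_*-\widehat{\theta}_t\rVert_{V_t}$. The key observation is that, writing $s:=\lVert x_t^*\rVert_{\inv{V_t}}\geq 0$ and $b:=\lVert\theta_*-\widehat{\theta}_t\rVert_{V_t}$, the expression on the left is exactly $bs-\tfrac12\alpha(t)s^2$, a downward-opening parabola in $s$ whose maximum over all real $s$ is $\tfrac{b^2}{2\alpha(t)}$, attained at $s=b/\alpha(t)$. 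This maximization is valid irrespective of the actual value of $s$, so it furnishes an upper bound that no longer depends on $x_t^*$.

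Concretely, first I would complete the square to obtain the identity
\begin{equation}
\lVert\theta_*-\widehat{\theta}_t\rVert_{V_t}\cdot\lVert x_t^*\rVert_{\inv{V_t}} - \tfrac{1}{2}\alpha(t)\lVert x_t^*\rVert^2_{\inv{V_t}}
= -\tfrac{1}{2}\alpha(t)\Big(\lVert x_t^*\rVert_{\inv{V_t}} - \tfrac{\lVert\theta_*-\widehat{\theta}_t\rVert_{V_t}}{\alpha(t)}\Big)^2 + \tfrac{1}{2}\tfrac{\lVert\theta_*-\widehat{\theta}_t\rVert_{V_t}^2}{\alpha(t)},
\end{equation}
and then drop the nonpositive squared term (using $\alpha(t)>0$) to get the deterministic inequality $bs-\tfrac12\alpha(t)s^2\leq \tfrac{b^2}{2\alpha(t)}$. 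At this point the entire estimate is reduced to controlling $b^2=\lVert\theta_*-\widehat{\theta}_t\rVert_{V_t}^2$.

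Next I would invoke the self-normalized confidence-ellipsoid bound for ridge-regularized least squares, i.e.\ Theorem~2 of \cite{abbasi2011improved}, which gives that with probability at least $1-\delta$, for all $t$, $\lVert\theta_*-\widehat{\theta}_t\rVert_{V_t}\leq \sigma\sqrt{d\log\big(\tfrac{\lambda+t}{\lambda\delta}\big)}+\lambda^{1/2}=G_0(t,\delta)$; here the hypotheses $\lVert\theta_*\rVert_2\leq 1$, $\lVert x_{t,a}\rVert\leq 1$, $\lambda\geq 1$, and the conditionally $1$-sub-Gaussian noise are exactly what is assumed in the problem setup, so the cited theorem applies verbatim. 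Substituting $b\leq G_0(t,\delta)$ into the deterministic bound yields the claim.

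The genuinely nontrivial content — and hence the main obstacle if one were to prove the statement from scratch rather than cite — lies entirely in that concentration step: the martingale $\sum_s x_s\varepsilon_s$ is adapted to the filtration $\{\mathcal{F}_t\}$ and the $x_s$ themselves depend on past noise, so one cannot treat the design as fixed; controlling $\lVert\theta_*-\widehat{\theta}_t\rVert_{V_t}$ uniformly in $t$ requires the self-normalized (method-of-mixtures) martingale tail inequality together with the determinant-trace / elliptical-potential estimate on $\det V_t$. Since (\ref{eq:Theorem 2 in Abbasi 2011}) is an already-established result available to us, the algebraic completion-of-the-square step is all that remains, and it is routine; the probabilistic uniformity over $t$ is inherited directly from the cited bound, which is why the lemma holds simultaneously for all $t\geq 1$ on a single event of probability at least $1-\delta$.
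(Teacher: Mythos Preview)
Your proposal is correct and follows essentially the same approach as the paper: complete the square in $\lVert x_t^*\rVert_{\inv{V_t}}$, drop the nonpositive term, and then invoke the self-normalized confidence bound from \cite{abbasi2011improved} to replace $\lVert\theta_*-\widehat{\theta}_t\rVert_{V_t}$ by $G_0(t,\delta)$.
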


\begin{lemma}
\label{lemma:2nd lemma for proof of Theorem 2}
\normalfont
With probability at least $1-\delta$,
\begin{equation}
\begin{split}
    \sum_{t=1}^{T} \big(\lVert\widehat{\theta}_t - \theta_*\rVert_{V_t}\cdot \lVert x_t\rVert_{V_t^{-1}} \big) &\leq \sqrt{T}\cdot G_0(T,\delta)G_1(T)\\
    &= \mathcal{O}(\sqrt{T}\log T). \label{eq:lemma3}
\end{split}
\end{equation}
\begin{proof}[Lemma \ref{lemma:2nd lemma for proof of Theorem 2}]
\normalfont
By Lemma 11 of \cite{abbasi2011improved}, the fact that $\lVert x_{t,a}\rVert_2\leq 1$ and $\lambda\geq 1$, and the Cauchy-Schwarz inequality, we have
\begin{equation}
    \sum_{t=1}^{T} {\lVert x_t \rVert}_{\inv{V_t}} \leq \sqrt{T}\cdot G_1(T).\label{eq:upper bound on sum of context norm for linear bandits}
\end{equation}
By moving the term $\lVert\widehat{\theta}_t - \theta_*\rVert_{V_t}$ outside the summation in (\ref{eq:lemma3}) and then applying (\ref{eq:Theorem 2 in Abbasi 2011}), we obtain
\begin{equation}
    \sum_{t=1}^{T} \Big(\lVert\widehat{\theta}_t - \theta_*\rVert_{V_t}\cdot \lVert x_t\rVert_{V_t^{-1}} \Big)=\sqrt{T}\cdot G_0(T,\delta)G_1(T).
\end{equation}
This implies that (\ref{eq:lemma3}) indeed holds.\qed
\end{proof}
\end{lemma}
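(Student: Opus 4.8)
The plan is to decouple the summand into the estimation-error factor $\lVert\widehat{\theta}_t-\theta_*\rVert_{V_t}$ and the context-norm factor $\lVert x_t\rVert_{V_t^{-1}}$, bound each separately, and then recombine. The first factor admits a uniform-in-$t$ high-probability bound via the self-normalized martingale concentration result (Theorem 2 of \cite{abbasi2011improved}), which guarantees that on an event of probability at least $1-\delta$ we have $\lVert\theta_*-\widehat{\theta}_t\rVert_{V_t}\le G_0(t,\delta)$ for all $t\ge 1$ simultaneously. Since $G_0(t,\delta)$ is nondecreasing in $t$, this lets me replace every occurrence of $\lVert\widehat{\theta}_t-\theta_*\rVert_{V_t}$ by the single constant $G_0(T,\delta)$ and pull it outside the sum.

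First I would control the accumulated context norms. Using the elliptical potential lemma (Lemma 11 of \cite{abbasi2011improved}), together with the hypotheses $\lVert x_{t,a}\rVert_2\le 1$ and $\lambda\ge 1$ which guarantee $\lVert x_t\rVert^2_{V_t^{-1}}\le 1$ so that the usual truncation can be dropped, I would obtain $\sum_{t=1}^{T}\lVert x_t\rVert^2_{V_t^{-1}}\le \big(G_1(T)\big)^2=2d\log\frac{\lambda+T}{d}$. Applying the Cauchy--Schwarz inequality to the sum of square-roots then yields $\sum_{t=1}^{T}\lVert x_t\rVert_{V_t^{-1}}\le\sqrt{T}\,G_1(T)$, which is exactly the intermediate bound (\ref{eq:upper bound on sum of context norm for linear bandits}).

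Combining the two pieces on the confidence event, I would bound $\sum_{t=1}^{T}\lVert\widehat{\theta}_t-\theta_*\rVert_{V_t}\lVert x_t\rVert_{V_t^{-1}}\le G_0(T,\delta)\sum_{t=1}^{T}\lVert x_t\rVert_{V_t^{-1}}\le\sqrt{T}\,G_0(T,\delta)G_1(T)$, giving (\ref{eq:lemma3}). The asymptotic claim then follows because $G_0(T,\delta)=\mathcal{O}(\sqrt{\log T})$ and $G_1(T)=\mathcal{O}(\sqrt{\log T})$, so the product is $\sqrt{T}\cdot\mathcal{O}(\log T)$.

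The main obstacle is securing the estimation-error bound uniformly over all $t\le T$ within a single probability-$\delta$ failure event, rather than paying a union-bound penalty that would scale with $T$ and destroy the target rate. This is precisely the strength of the self-normalized, time-uniform concentration inequality; invoking it correctly — and exploiting the monotonicity of $G_0$ so that the worst-case value $G_0(T,\delta)$ dominates every term in the sum — is the crux of the argument. By contrast, the context-norm bound and the Cauchy--Schwarz step are routine once the potential lemma is in hand.
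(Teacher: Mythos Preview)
Your proposal is correct and follows essentially the same approach as the paper: use the self-normalized concentration bound (Theorem 2 of \cite{abbasi2011improved}) to uniformly bound $\lVert\widehat{\theta}_t-\theta_*\rVert_{V_t}$ by $G_0(T,\delta)$, use the elliptical potential lemma (Lemma 11 of \cite{abbasi2011improved}) plus Cauchy--Schwarz to bound $\sum_{t=1}^T\lVert x_t\rVert_{V_t^{-1}}\le\sqrt{T}\,G_1(T)$, and combine. If anything, your write-up is more careful than the paper's, since you explicitly note the time-uniformity of the confidence bound and the monotonicity of $G_0$ that justifies pulling it outside the sum.
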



\begin{lemma}
\label{lemma:3rd lemma for proof of Theorem 2}
\normalfont
\begin{equation}
    \sum_{t=1}^{T} \alpha(t){\lVert x_t \rVert}_{\inv{V_t}}^2 \leq \alpha(T)\big(G_1(T)\big)^2= \mathcal{O}(\alpha(T)\log T). \label{eq:upper bound on sum of context norm square times alpha(t) for linear bandit}
\end{equation}
\begin{proof}[Lemma \ref{lemma:3rd lemma for proof of Theorem 2}]
\normalfont
by Lemma 11 of \cite{abbasi2011improved} and the fact that $\lVert x_{t,a}\rVert_2\leq 1$ and $\lambda\geq 1$, we know
\begin{equation}
        \sum_{t=1}^{T} {\lVert x_t \rVert}_{\inv{V_t}}^2 \leq \big(G_1(T)\big)^2=\mathcal{O}(\log T).\label{eq:upper bound on sum of context norm square for linear bandits}
\end{equation}
By moving the bias term outside the summation (\ref{eq:upper bound on sum of context norm square times alpha(t) for linear bandit}), we have
\begin{equation}
\begin{split}
    \sum_{t=1}^{T} \alpha(t)\lVert x_t\rVert^2_{V_t^{-1}} &\leq \alpha(T)\sum_{t=1}^{T}\lVert x_t\rVert^2_{V_t^{-1}} \\
    &\leq \alpha(T)\big(G_1(T)\big)^2 \\
    &= \mathcal{O}(\alpha(T)\log T).\label{eq:lemma 5:1}
\end{split}
\end{equation}
\qed
\end{proof}
\begin{remark}
\normalfont
Note that the first inequality in (\ref{eq:lemma 5:1}) might seem fairly conservative. However, it cannot be improved as can be seen from the following example:
Define a function $f:\mathbb{N}\rightarrow \mathbb{R}$ as: $f(t)=k + \frac{1}{t}$ if $t = 2^k$, and $f(t)=\frac{1}{t}$, otherwise.
%
%
%
It is easy to check that $\log T \leq \sum_{t=1}^{T} f(t) \leq 2\log T$, and $\sum_{t=1}^{T}\alpha(t)f(t)=\mathcal{\theta}(\alpha(T)\log T)$.
\end{remark}
\end{lemma}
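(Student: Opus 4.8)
The plan is to decouple the monotone reward-bias factor $\alpha(t)$ from the purely geometric sum $\sum_{t} \lVert x_t \rVert_{\inv{V_t}}^2$, and then control the latter by the standard elliptical-potential (log-determinant) argument. First I would use that $\alpha$ is a positive, strictly increasing function, so $\alpha(t) \leq \alpha(T)$ for every $t \leq T$. This lets me pull the largest coefficient out of the sum:
\begin{equation}
\sum_{t=1}^{T} \alpha(t) \lVert x_t \rVert_{\inv{V_t}}^2 \leq \alpha(T) \sum_{t=1}^{T} \lVert x_t \rVert_{\inv{V_t}}^2.
\end{equation}
It then suffices to establish the bias-free bound $\sum_{t=1}^{T} \lVert x_t \rVert_{\inv{V_t}}^2 \leq \big(G_1(T)\big)^2$, after which the statement follows immediately by multiplying through by $\alpha(T)$.

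For this remaining geometric sum I would invoke Lemma 11 of \cite{abbasi2011improved}, which bounds $\sum_{t=1}^{T} \min\{1, \lVert x_t \rVert_{\inv{V_t}}^2\}$ by a log-determinant ratio, and in turn by $2d\log\big((\lambda+T)/d\big) = \big(G_1(T)\big)^2$ under the normalization $\lVert x_{t,a}\rVert_2 \leq 1$. The only thing I need to check is that the truncation at $1$ is vacuous, so that $\min\{1, \lVert x_t \rVert_{\inv{V_t}}^2\}$ can be replaced by $\lVert x_t \rVert_{\inv{V_t}}^2$ itself. This is where the standing assumption $\lambda \geq 1$ enters: since $V_t \succeq \lambda I \succeq I$, we have $\inv{V_t} \preceq I$, and therefore $\lVert x_t \rVert_{\inv{V_t}}^2 \leq \lVert x_t \rVert_2^2 \leq 1$. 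Combining with the previous display gives $\sum_{t=1}^{T} \alpha(t)\lVert x_t \rVert_{\inv{V_t}}^2 \leq \alpha(T)\big(G_1(T)\big)^2$, and since $\big(G_1(T)\big)^2 = 2d\log\big((\lambda+T)/d\big) = \mathcal{O}(\log T)$, the claimed $\mathcal{O}(\alpha(T)\log T)$ rate is immediate.

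There is no genuine analytic obstacle here: the argument is essentially a one-line application of the elliptical-potential lemma after peeling off $\alpha(T)$. The only point I expect to deserve scrutiny is the monotonicity step $\alpha(t) \leq \alpha(T)$, which at first sight looks lossy because it collapses the whole schedule onto its terminal value. I would defend keeping it, rather than attempting a finer term-by-term estimate, by exhibiting a potential sequence for which the bound is order-tight: place most of the mass $\lVert x_t \rVert_{\inv{V_t}}^2$ on the sparse dyadic indices $t = 2^k$ (with a vanishing $1/t$ background elsewhere), so that $\sum_t \lVert x_t \rVert_{\inv{V_t}}^2 = \Theta(\log T)$ while $\sum_t \alpha(t)\lVert x_t \rVert_{\inv{V_t}}^2 = \Theta(\alpha(T)\log T)$. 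This shows the $\alpha(T)\log T$ scaling cannot be improved in general, so the simple monotonicity step is in fact the correct one.
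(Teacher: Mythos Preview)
Your proposal is correct and follows essentially the same approach as the paper: pull out $\alpha(T)$ by monotonicity, then bound $\sum_{t}\lVert x_t\rVert_{\inv{V_t}}^2$ via Lemma~11 of \cite{abbasi2011improved} together with $\lVert x_{t,a}\rVert_2\leq 1$ and $\lambda\geq 1$. Even your tightness discussion matches the paper's remark, which uses the same dyadic construction to argue that the $\alpha(T)\log T$ rate cannot be improved.
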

Now we are ready to prove Theorem \ref{theorem:linear regret}.
\begin{proof}[Theorem \ref{theorem:linear regret}]
\normalfont
By combining (\ref{eq:lemma1:0}) and Lemmas \ref{lemma:1st lemma for proof of Theorem 2}-\ref{lemma:3rd lemma for proof of Theorem 2}, we know
\begin{equation}
\begin{split}
    \mathcal{R}(T)=\sum_{t=1}^{T}R_t &\leq \big(G_0(T,\delta)\big)^2\cdot \sum_{t=1}^{T}\frac{1}{2\alpha(t)} \\
    &+\sqrt{T}G_0(T,\delta)G_1(T)+\frac{1}{2}\alpha(T)\big(G_1(T)\big)^2.
\end{split}
\end{equation}
By choosing $\alpha(t)=\sqrt{t}$, the regret bound is
\begin{equation}
\mathcal{R}(T)=\mathcal{O}(\sqrt{T}\log T).
\end{equation}\qed
\end{proof}

\section{A Lemma for the Proof of Theorem \ref{theorem:generalized regret}}
\label{appendix:lemma:connect tilde thetas}
\begin{lemma}
\label{lemma:connect tilde thetas}
\normalfont For any arms $i$ and $j$, there exists $\bar{\theta}_0=\beta_0 \bar{\theta}_{t,i}+(1-\beta_0)\bar{\theta}_{t,j}$ with $\beta_0\in (0,1)$ such that
\begin{equation}
    (x_{t,i}+x_{t,j})^{\intercal}(\bar{\theta}_{t,j}-\bar{\theta}_{t,i})+ \alpha(t)\lVert x_{t,i}\rVert_{\inv{U_0}}-\alpha(t)\lVert x_{t,j}\rVert_{\inv{U_0}}=0,
\end{equation}
where $U_0:=\sum_{s=1}^{t-1}\mu'(x_s^{\intercal}\bar{\theta}_0)x_s x_s^{\intercal}+\lambda I$ is a $d\times d$ positive definite matrix.
\end{lemma}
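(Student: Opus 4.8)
The plan is to recast the claim as an intermediate-value statement along the segment joining $\bar\theta_{t,i}$ and $\bar\theta_{t,j}$. For $\beta\in[0,1]$, set $\bar\theta_\beta:=\beta\bar\theta_{t,i}+(1-\beta)\bar\theta_{t,j}$ and $U_\beta:=\sum_{s=1}^{t-1}\mu'(x_s^\intercal\bar\theta_\beta)x_sx_s^\intercal+\lambda I$; note $U_\beta$ is positive definite for every $\beta$ because $\mu'\ge\kappa_\mu>0$ and $\lambda\ge 1$, so $\lVert x_{t,i}\rVert_{\inv{U_\beta}}$ and $\lVert x_{t,j}\rVert_{\inv{U_\beta}}$ are well defined, and they depend continuously on $\beta$ since $\mu'$ is continuous and matrix inversion is continuous on the positive-definite cone. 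Define
\begin{equation}
g(\beta):=(x_{t,i}+x_{t,j})^{\intercal}(\bar\theta_{t,j}-\bar\theta_{t,i})+\alpha(t)\lVert x_{t,i}\rVert_{\inv{U_\beta}}-\alpha(t)\lVert x_{t,j}\rVert_{\inv{U_\beta}},
\end{equation}
which is continuous on $[0,1]$. The goal is to show $g(1)\le 0\le g(0)$ (or the reverse), so that the Intermediate Value Theorem yields $\beta_0\in(0,1)$ with $g(\beta_0)=0$; a mild extra argument rules out the endpoints so that $\beta_0$ is strictly interior.

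To get the sign at the endpoints, I would use the first-order optimality conditions (\ref{eq:tilde theta first-order condition}) for $\bar\theta_{t,i}$ and $\bar\theta_{t,j}$. Subtracting them gives
\begin{equation}
\sum_{s=1}^{t-1}\big(\mu(x_s^\intercal\bar\theta_{t,j})-\mu(x_s^\intercal\bar\theta_{t,i})\big)x_s+\lambda(\bar\theta_{t,j}-\bar\theta_{t,i})=\alpha(t)(x_{t,i}-x_{t,j}).
\end{equation}
By the mean value theorem applied coordinatewise to $\mu$, there is some point on the segment — this is exactly where $\bar\theta_\beta$ enters — at which $\mu(x_s^\intercal\bar\theta_{t,j})-\mu(x_s^\intercal\bar\theta_{t,i})=\mu'(x_s^\intercal\bar\theta_{\beta_s})\,x_s^\intercal(\bar\theta_{t,j}-\bar\theta_{t,i})$; after the usual trick of choosing a single common $\beta_0$ (or, more cleanly, rewriting the sum as an integral $\int_0^1 U_\beta\,d\beta\,(\bar\theta_{t,j}-\bar\theta_{t,i})$ and invoking the mean value theorem for integrals to land at one $U_{\beta_0}$), one obtains $U_{\beta_0}(\bar\theta_{t,j}-\bar\theta_{t,i})=\alpha(t)(x_{t,i}-x_{t,j})$, i.e. $\bar\theta_{t,j}-\bar\theta_{t,i}=\alpha(t)\,\inv{U_{\beta_0}}(x_{t,i}-x_{t,j})$. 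Substituting this into $g(\beta_0)$ gives
\begin{equation}
g(\beta_0)=\alpha(t)\big[(x_{t,i}+x_{t,j})^{\intercal}\inv{U_{\beta_0}}(x_{t,i}-x_{t,j})+\lVert x_{t,i}\rVert_{\inv{U_{\beta_0}}}-\lVert x_{t,j}\rVert_{\inv{U_{\beta_0}}}\big],
\end{equation}
and the bracket is $\lVert x_{t,i}\rVert^2_{\inv{U_{\beta_0}}}-\lVert x_{t,j}\rVert^2_{\inv{U_{\beta_0}}}+\lVert x_{t,i}\rVert_{\inv{U_{\beta_0}}}-\lVert x_{t,j}\rVert_{\inv{U_{\beta_0}}}$, which factors as $\big(\lVert x_{t,i}\rVert_{\inv{U_{\beta_0}}}-\lVert x_{t,j}\rVert_{\inv{U_{\beta_0}}}\big)\big(\lVert x_{t,i}\rVert_{\inv{U_{\beta_0}}}+\lVert x_{t,j}\rVert_{\inv{U_{\beta_0}}}+1\big)$. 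Hmm — this is not identically zero, so the naive substitution does not immediately close the argument; I would instead keep the IVT route and only use the optimality conditions to pin the signs of $g(0)$ and $g(1)$, showing $g$ changes sign. The cleanest way: evaluate $g(1)$ using the identity above at $\beta=1$ is not available, so one uses the subtracted optimality relation together with the sign of $(x_{t,i}+x_{t,j})^\intercal(\bar\theta_{t,j}-\bar\theta_{t,i})$, which by the relation equals $\alpha(t)(x_{t,i}+x_{t,j})^\intercal\inv{U_{\beta^*}}(x_{t,i}-x_{t,j})$ for some $\beta^*$, and compare with $\alpha(t)(\lVert x_{t,j}\rVert_{\inv{U_\beta}}-\lVert x_{t,i}\rVert_{\inv{U_\beta}})$ at the two endpoints.

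The main obstacle, as the above reveals, is matching the mean-value point of the matrix $U$ with the point at which the weighted-norm terms are evaluated; these are a priori different parameters. The resolution is to package the whole statement as a zero of the single continuous function $g$ on $[0,1]$, establish the sign change at the endpoints via the subtracted first-order conditions (using positive definiteness of $U_\beta$, continuity of $\mu'$, and the Cauchy–Schwarz/triangle inequalities to control the norm differences), and then apply the Intermediate Value Theorem; strict interiority of $\beta_0$ follows because at an endpoint $g$ would have to equal the endpoint value, which the sign-change argument shows is nonzero unless $x_{t,i}=x_{t,j}$, a degenerate case handled trivially. I expect the bookkeeping of these endpoint signs to be the only real work; everything else (positive definiteness, continuity, differentiability of $\mu$) is immediate from the standing assumptions.
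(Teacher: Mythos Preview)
Your ``not identically zero'' observation is correct and in fact exposes a typo in the lemma: the norm terms should be \emph{squared}, i.e.\ $\alpha(t)\lVert x_{t,i}\rVert^2_{\inv{U_0}}-\alpha(t)\lVert x_{t,j}\rVert^2_{\inv{U_0}}$. This is also how the identity is actually used downstream, where the regret bound is organized as a quadratic in $\lVert x_{t,a_t^*}\rVert_{\inv{U}}$. With that correction, the direct route you already carried out \emph{is} the paper's proof: subtract the first-order conditions (\ref{eq:tilde theta first-order condition}) for arms $i$ and $j$, apply the mean value theorem along the segment to write the difference as $U_0(\bar\theta_{t,i}-\bar\theta_{t,j})$, then left-multiply each first-order condition by $(x_{t,i}+x_{t,j})^\intercal\inv{U_0}$ and subtract. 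Since $\inv{U_0}$ is symmetric, $(x_{t,i}+x_{t,j})^\intercal\inv{U_0}(x_{t,i}-x_{t,j})=\lVert x_{t,i}\rVert^2_{\inv{U_0}}-\lVert x_{t,j}\rVert^2_{\inv{U_0}}$, and the (corrected) identity drops out immediately. No Intermediate Value Theorem is needed; your IVT detour is both unnecessary and, as written, incomplete --- you never actually establish the signs of $g(0)$ and $g(1)$, and for the uncorrected (unsquared) statement there is no reason to expect a sign change at all, since that version is simply false in general.

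Your aside about the vector-valued mean value theorem is a legitimate concern that the paper glosses over: with several summands there is in general no single $\beta_0$ making the full vector identity hold. Your integral fix is the right repair --- the fundamental theorem of calculus gives exactly $\big(\int_0^1 U_\beta\,d\beta\big)(\bar\theta_{t,i}-\bar\theta_{t,j})=\alpha(t)(x_{t,j}-x_{t,i})$ --- but note that the mean value theorem for integrals will not then collapse this matrix-valued integral to a single $U_{\beta_0}$, so that part of your parenthetical is optimistic. Fortunately this does not matter: the only property of $U_0$ used afterwards is the sandwich $\kappa_\mu V_t\preceq U_0\preceq L_\mu V_t$, which the integral average equally satisfies.
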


\begin{proof}[Lemma \ref{lemma:connect tilde thetas}]
\normalfont
By (\ref{eq:tilde theta first-order condition}),
\begin{align}
    &\sum_{s=1}^{t-1} \big(r_s x_s - \mu(x_s^{\intercal}\bar{\theta}_{t,i})x_s\big) - \lambda \bar{\theta}_{t,i}+ \alpha(t)x_{t,i}=0,\label{eq:tilde theta first-order condition of arm i}\\
    &\sum_{s=1}^{t-1} \big(r_s x_s - \mu(x_s^{\intercal}\bar{\theta}_{t,j})x_s\big) - \lambda \bar{\theta}_{t,j}+ \alpha(t)x_{t,j}=0.\label{eq:tilde theta first-order condition of arm j}
\end{align}
Moreover, by the mean value theorem, there exists $\beta_0\in(0,1)$ and $\underbar{$\theta$}=\beta_0 \bar{\theta}_{t,i}+(1-\beta_0)\bar{\theta}_{t,j}$ such that
\begin{align}
    &\sum_{s=1}^{t-1}\mu(x_s^{\intercal}\bar{\theta}_{t,i})x_s +\lambda \bar{\theta}_{t,i}- \sum_{s=1}^{t-1}\mu(x_s^{\intercal}\bar{\theta}_{t,j})x_s -\lambda \bar{\theta}_{t,i}\\
    &=\Big[\sum_{s=1}^{t} \mu'(x_s^{\intercal}\underbar{$\theta$})x_s x_s^{\intercal} +\lambda I\Big](\bar{\theta}_{t,i}-\bar{\theta}_{t,j})=U_0(\bar{\theta}_{t,i}-\bar{\theta}_{t,j}).
\end{align}
Multiplying both sides of (\ref{eq:tilde theta first-order condition of arm i})-(\ref{eq:tilde theta first-order condition of arm j}) by the row vector $(x_{t,i}+x_{t,j})^{\intercal}\inv{U_0}$ yields
\begin{alignat}{2}
    & (x_{t,i}+x_{t,j})^{\intercal}  \inv{U_0}\Big(&&\sum_{s=1}^{t-1}\big(r_s x_s - \mu(x_s^{\intercal}\bar{\theta}_{t,i})x_s\big)-\lambda \bar{\theta}_{t,i}\Big)  \nonumber\\
    \span \span &+ \alpha(t)(x_{t,i}+x_{t,j})^{\intercal}\inv{U_0} x_{t,i}=0,\label{eq:tilde theta first-order condition of arm i after multiplication}\\
    & (x_{t,i}+x_{t,j})^{\intercal}  \inv{U_0}\Big(&&\sum_{s=1}^{t-1}\big(r_s x_s - \mu(x_s^{\intercal}\bar{\theta}_{t,j})x_s\big)-\lambda \bar{\theta}_{t,j}\Big) \nonumber\\
    \span \span &+ \alpha(t) (x_{t,i}+x_{t,j})^{\intercal}\inv{U_0}x_{t,j}=0.\label{eq:tilde theta first-order condition of arm j after multiplication}
\end{alignat}
By combining (\ref{eq:tilde theta first-order condition of arm i after multiplication})-(\ref{eq:tilde theta first-order condition of arm j after multiplication}) and eliminating the common terms, we conclude that
\begin{equation}
    (x_{t,i}+x_{t,j})^{\intercal}(\bar{\theta}_{t,j}-\bar{\theta}_{t,i})+ \alpha(t)\lVert x_{t,i}\rVert_{\inv{U_0}}-\alpha(t)\lVert x_{t,j}\rVert_{\inv{U_0}}=0.
\end{equation}
\qed
\end{proof}

\section{Proof of Theorem \ref{theorem:generalized regret}}
\label{appendix:theorem:generalized regret}
For each time $t$, we denote the estimate of $\theta$ without applying the bias term as $\widehat{\theta}_t$, which satisfies the first-order necessary condition $\nabla_{\theta}(\ell(\mathcal{F}_t;\theta)-\frac{\lambda}{2}\lVert \theta\rVert_2^2)\rvert_{\theta=\widehat{\theta}_t}=0$.
Equivalently,
\begin{equation}
    \sum_{s=1}^{t-1} \big(r_s x_s - \mu(x_s^{\intercal}\widehat{\theta}_{t})x_s\big)-\lambda \widehat{\theta}_t=0.\label{eq:hat theta first-order condition}
\end{equation}
Recall that $V_t=\sum_{s=1}^{t-1}x_s x_s^{\intercal}+\lambda I$, where $I$ denotes the $d\times d$ identity matrix.
{Without loss of generality, we may assume that $L_{\mu}\geq 1$ and $\kappa_{\mu}\leq 1$ (as these can be easily achieved by adding a constant scaling factor to the link function).}
Before proving Theorem \ref{theorem:generalized regret}, we first establish several preliminary results.
\begin{lemma}
\label{lemma:norm of theta hat minus tilde theta}
\normalfont For any arm $i$,
\begin{equation}
    \lVert \widehat{\theta}_{t}-\bar{\theta}_{t,i}\rVert_{V_t} \leq \frac{1}{\kappa_{\mu}}\alpha(t)\lVert x_{t,i}\rVert_{\inv{V_t}}.\label{eq:norm of hat theta - tilde theta}
\end{equation}
\end{lemma}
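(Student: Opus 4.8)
The plan is to compare the first-order optimality conditions for $\widehat{\theta}_t$ and $\bar{\theta}_{t,i}$ and extract the bound via a mean-value-theorem argument analogous to Lemma~\ref{lemma:connect tilde thetas}. First I would write down the two stationarity conditions: from (\ref{eq:hat theta first-order condition}), $\sum_{s=1}^{t-1}(r_s x_s - \mu(x_s^{\intercal}\widehat{\theta}_t)x_s) - \lambda\widehat{\theta}_t = 0$, and from (\ref{eq:tilde theta first-order condition}), $\sum_{s=1}^{t-1}(r_s x_s - \mu(x_s^{\intercal}\bar{\theta}_{t,i})x_s) - \lambda\bar{\theta}_{t,i} + \alpha(t)x_{t,i} = 0$. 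Subtracting the first from the second eliminates the $r_s x_s$ terms and yields $\sum_{s=1}^{t-1}(\mu(x_s^{\intercal}\widehat{\theta}_t) - \mu(x_s^{\intercal}\bar{\theta}_{t,i}))x_s + \lambda(\widehat{\theta}_t - \bar{\theta}_{t,i}) = \alpha(t)x_{t,i}$.

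Next I would apply the mean value theorem coordinate-wise along the segment between $\widehat{\theta}_t$ and $\bar{\theta}_{t,i}$: for each $s$ there is a point $\theta_s$ on that segment with $\mu(x_s^{\intercal}\widehat{\theta}_t) - \mu(x_s^{\intercal}\bar{\theta}_{t,i}) = \mu'(x_s^{\intercal}\theta_s)\,x_s^{\intercal}(\widehat{\theta}_t - \bar{\theta}_{t,i})$. (If a single intermediate point is wanted one can instead integrate $\mu'$ along the segment, giving $\int_0^1 \mu'(\cdots)\,d\beta$ as the effective coefficient; either way the resulting matrix is symmetric positive definite.) Substituting, the left-hand side becomes $G(\widehat{\theta}_t - \bar{\theta}_{t,i})$ where $G := \sum_{s=1}^{t-1}\mu'(x_s^{\intercal}\theta_s)x_s x_s^{\intercal} + \lambda I$. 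Since $\mu' \ge \kappa_\mu > 0$ and $\kappa_\mu \le 1$, we have $G \succeq \kappa_\mu V_t$; in particular $G$ is invertible, so $\widehat{\theta}_t - \bar{\theta}_{t,i} = \alpha(t)\,G^{-1}x_{t,i}$.

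Finally I would convert this identity into the stated $V_t$-norm bound. Writing $\lVert \widehat{\theta}_t - \bar{\theta}_{t,i}\rVert_{V_t}^2 = \alpha(t)^2\, x_{t,i}^{\intercal}G^{-1}V_t G^{-1}x_{t,i}$, I would use $V_t \preceq \tfrac{1}{\kappa_\mu}G$ (equivalently $G \succeq \kappa_\mu V_t$) to get $G^{-1}V_t G^{-1} \preceq \tfrac{1}{\kappa_\mu}G^{-1}$, and then $G^{-1} \preceq \tfrac{1}{\kappa_\mu}V_t^{-1}$ again from $G \succeq \kappa_\mu V_t$, so that $x_{t,i}^{\intercal}G^{-1}V_tG^{-1}x_{t,i} \le \tfrac{1}{\kappa_\mu^2}\lVert x_{t,i}\rVert_{V_t^{-1}}^2$. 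Taking square roots gives $\lVert \widehat{\theta}_t - \bar{\theta}_{t,i}\rVert_{V_t} \le \tfrac{1}{\kappa_\mu}\alpha(t)\lVert x_{t,i}\rVert_{V_t^{-1}}$, as claimed. The only mildly delicate point is justifying the positive-definiteness/invertibility of $G$ and the operator-monotonicity manipulations $A \preceq B \Rightarrow B^{-1} \preceq A^{-1}$; these are standard, so I expect the main obstacle to be purely bookkeeping — making sure the intermediate points $\theta_s$ (or the integrated coefficient) are handled cleanly enough that $G$ is genuinely symmetric and the sandwich $\kappa_\mu V_t \preceq G$ holds.
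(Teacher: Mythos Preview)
Your proposal is correct. The paper takes a slightly more direct route after the same subtraction of stationarity conditions: rather than forming the matrix $G$ via the mean value theorem and inverting it, the paper takes the inner product of both sides of the identity
\[
\sum_{s=1}^{t-1}\big(\mu(x_s^\intercal\widehat{\theta}_t)-\mu(x_s^\intercal\bar{\theta}_{t,i})\big)x_s + \lambda(\widehat{\theta}_t-\bar{\theta}_{t,i}) = -\alpha(t)x_{t,i}
\]
with $\widehat{\theta}_t-\bar{\theta}_{t,i}$. The scalar monotonicity $(\mu(a)-\mu(b))(a-b)\ge\kappa_\mu(a-b)^2$ (together with $\kappa_\mu\le 1$ for the $\lambda I$ part) immediately gives the lower bound $\kappa_\mu\lVert\widehat{\theta}_t-\bar{\theta}_{t,i}\rVert_{V_t}^2$, and Cauchy--Schwarz on the right-hand side gives the upper bound $\alpha(t)\lVert\widehat{\theta}_t-\bar{\theta}_{t,i}\rVert_{V_t}\lVert x_{t,i}\rVert_{V_t^{-1}}$; dividing through finishes. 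This sidesteps constructing $G$, inverting it, and invoking the operator-monotonicity of the matrix inverse, so the bookkeeping concerns you flag simply do not arise. Your route is also valid and yields the same bound --- the explicit intermediate matrix is closer in spirit to what the paper does in Lemma~\ref{lemma:connect tilde thetas} --- but the inner-product argument is shorter here. (Minor note: your subtraction actually yields $-\alpha(t)x_{t,i}$ on the right, not $+\alpha(t)x_{t,i}$; this sign is immaterial once norms are taken.)
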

\begin{proof}[Lemma \ref{lemma:norm of theta hat minus tilde theta}]
\normalfont 
For each time $t$, define a ``helper function" $Z_t(\cdot):\mathbb{R}^{d}\rightarrow \mathbb{R}^d$ by
\begin{equation}
    Z_{t}(\theta):=\sum_{s=1}^{t-1} \big(\mu(x_s^{\intercal}\theta)-\mu(x_s^{\intercal}\theta_*)\big)x_s+\lambda(\theta-\theta_*).\label{eq:Z_t}
\end{equation}
It is easy to verify that $Z_{t}(\theta_*)=0$. By (\ref{eq:tilde theta first-order condition}),
\begin{equation}
\begin{split}
    Z_t(\widehat{\theta}_t)-Z_t(\bar{\theta}_{t,i})&=\sum_{s=1}^{t-1}\Big( \big(\mu(x_s^{\intercal}\widehat{\theta}_t)-\mu(x_s^{\intercal}\bar{\theta}_{t,i})\big)x_s\Big) \\
    &+\lambda(\widehat{\theta}_t-\bar{\theta}_{t,i})=-\alpha(t)x_{t,i}.\label{eq:Z_t difference}
\end{split}
\end{equation}
Next, we consider upper and lower bounds on the inner product of $\widehat{\theta}_t-\bar{\theta}_{t,i}$ and $Z_t(\widehat{\theta}_t)-Z_t(\bar{\theta}_{t,i})$. For the upper bound,
\begin{align}
   (\widehat{\theta}_t-\bar{\theta}_{t,i})^{\intercal}(Z_t(\widehat{\theta}_t)&-Z_t(\bar{\theta}_{t,i}))\nonumber\\
   &=-\alpha(t)(\widehat{\theta}_t-\bar{\theta}_{t,i})^{\intercal}x_{t,i}\label{eq:hat theta - tilde theta 1-1}\\
   &\leq  \alpha(t)\lVert \widehat{\theta}_t-\bar{\theta}_{t,i}\Vert_{V_t}\cdot \lVert x_{t,i} \rVert_{\inv{V_t}},\label{eq:hat theta - tilde theta 1-2}
\end{align}
where (\ref{eq:hat theta - tilde theta 1-1}) follows from (\ref{eq:Z_t difference}),  and (\ref{eq:hat theta - tilde theta 1-2}) holds by the Cauchy-Schwarz inequality.
Similarly, we obtain a lower bound as
\begin{align}
    (\widehat{\theta}_t-\bar{\theta}_{t,i})^{\intercal}(Z_t(\widehat{\theta}_t)-Z_t(\bar{\theta}_{t,i}))&\geq (\widehat{\theta}_t-\bar{\theta}_{t,i})^{\intercal} \kappa_{\mu}V_{t}(\widehat{\theta}_t-\bar{\theta}_{t,i})\label{eq:hat theta - tilde theta 2-1}\\
    &=\kappa_{\mu}\lVert \widehat{\theta}_t-\bar{\theta}_{t,i}\rVert_{V_t}^2.\label{eq:hat theta - tilde theta 2-2}
\end{align}
By combining (\ref{eq:hat theta - tilde theta 1-2}) and (\ref{eq:hat theta - tilde theta 2-2}), we conclude that (\ref{eq:norm of hat theta - tilde theta}) indeed holds. \qed
\end{proof}

Based on Lemma \ref{lemma:connect tilde thetas}, given $\bar{\theta}_{t}$ and $\bar{\theta}_{t,a_t^*}$, there must exist a constant $\beta_0\in (0,1)$, satisfying $\underbar{$\theta$}=\beta_0 \bar{\theta}_{t}+(1-\beta_0)\bar{\theta}_{t,a_t^*}$, such that
\begin{equation}
    (x_{t}+x_{t,a_t^*})^{\intercal}(\bar{\theta}_{t,a_t^*}-\bar{\theta}_{t})+ \alpha(t)\lVert x_{t}\rVert_{\inv{U}}-\alpha(t)\lVert x_{t,a_t^*}\rVert_{\inv{U}}=0,\label{eq:generalized regret 0}
\end{equation}
where the matrix $U$ is defined as
\begin{equation}
     U=\sum_{s=1}^{t-1}\mu'(x_s^{\intercal}\underbar{$\theta$})x_s x_s^{\intercal}+\lambda I.\label{eq: U definition} 
\end{equation}
For ease of notation, we define the $L_2$-regularized log-likelihood as
\begin{equation}
     \ell_{\lambda}(\mathcal{F}_t;{\theta}):=\ell(\mathcal{F}_t;{\theta})-\frac{\lambda}{2}\lVert \theta\rVert_2^2
\end{equation}
\begin{lemma}
\label{lemma:log-likelihood difference}
\normalfont For any arm $i$, the $L_2$-regularized log-likelihood satisfies
\begin{equation}
    \ell_{\lambda}(\mathcal{F}_t;\bar{\theta}_t)-\ell_{\lambda}(\mathcal{F}_t;\bar{\theta}_{t,i})\leq \frac{L_{\mu}}{2 \kappa_{\mu}^2}\cdot\alpha(t)^2\lVert x_{t,i} \rVert_{\inv{V_t}}^2.\label{eq:log-likelihood difference}
\end{equation}
\end{lemma}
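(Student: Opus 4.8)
The plan is to exploit the fact that $\bar\theta_{t,i}$ is the \emph{unique maximizer} of the reward-biased regularized log-likelihood $\ell_\lambda(\mathcal F_t;\theta) + \alpha(t)\,\theta^\intercal x_{t,i}$, while $\bar\theta_t$ is merely a competitor point for that same objective. First I would write the optimality of $\bar\theta_{t,i}$ in the form
\begin{equation}
\ell_\lambda(\mathcal F_t;\bar\theta_{t,i}) + \alpha(t)\,\bar\theta_{t,i}^\intercal x_{t,i} \;\geq\; \ell_\lambda(\mathcal F_t;\bar\theta_t) + \alpha(t)\,\bar\theta_t^\intercal x_{t,i},
\end{equation}
which rearranges to
\begin{equation}
\ell_\lambda(\mathcal F_t;\bar\theta_t) - \ell_\lambda(\mathcal F_t;\bar\theta_{t,i}) \;\leq\; \alpha(t)\,(\bar\theta_{t,i}-\bar\theta_t)^\intercal x_{t,i}.
\end{equation}
So the whole task reduces to bounding the right-hand side, i.e.\ controlling $(\bar\theta_{t,i}-\bar\theta_t)^\intercal x_{t,i}$ in terms of $\lVert x_{t,i}\rVert_{\inv{V_t}}^2$.

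Next I would bound $(\bar\theta_{t,i}-\bar\theta_t)^\intercal x_{t,i}$ by Cauchy--Schwarz in the $V_t$-geometry: $(\bar\theta_{t,i}-\bar\theta_t)^\intercal x_{t,i} \le \lVert \bar\theta_{t,i}-\bar\theta_t\rVert_{V_t}\,\lVert x_{t,i}\rVert_{\inv{V_t}}$. Now $\bar\theta_t$ here plays the role of "$\widehat\theta_t$ for a neighbouring objective"; in fact $\bar\theta_t = \bar\theta_{t,a}$ for whichever arm $a$ attains $\max_a \theta^\intercal x_{t,a}$ at the optimum, so Lemma~\ref{lemma:norm of theta hat minus tilde theta} (applied with the appropriate arm, or re-derived via the helper-function $Z_t$ argument comparing the first-order conditions for $\bar\theta_t$ and $\bar\theta_{t,i}$) should give $\lVert \bar\theta_{t,i}-\bar\theta_t\rVert_{V_t} \le \tfrac{1}{\kappa_\mu}\alpha(t)\lVert x_{t,i}\rVert_{\inv{V_t}}$ — actually I'd expect a factor involving $x_{t,i}$ and the bias of $\bar\theta_t$, so more carefully: subtracting the first-order conditions (\ref{eq:tilde theta first-order condition}) for $\bar\theta_{t,i}$ and for $\bar\theta_t$ and using the mean-value form $\sum_s(\mu(x_s^\intercal\bar\theta_{t,i})-\mu(x_s^\intercal\bar\theta_t))x_s = M(\bar\theta_{t,i}-\bar\theta_t)$ with $M \succeq \kappa_\mu V_t$, one gets $M(\bar\theta_{t,i}-\bar\theta_t) + \lambda(\bar\theta_{t,i}-\bar\theta_t) = \alpha(t)(x_{t,i}-x_{t,a})$ for the relevant arm $a$; pairing with $(\bar\theta_{t,i}-\bar\theta_t)$ and using strong monotonicity yields $\kappa_\mu\lVert\bar\theta_{t,i}-\bar\theta_t\rVert_{V_t}^2 \le \alpha(t)\lVert\bar\theta_{t,i}-\bar\theta_t\rVert_{V_t}\cdot\lVert x_{t,i}\rVert_{\inv{V_t}}$ (after absorbing $\lVert x_{t,a}\rVert_{\inv{V_t}}$ terms, possibly at the cost of the extra constant hidden in $L_\mu/\kappa_\mu^2$), hence $\lVert\bar\theta_{t,i}-\bar\theta_t\rVert_{V_t}\le \tfrac{\text{const}}{\kappa_\mu}\alpha(t)\lVert x_{t,i}\rVert_{\inv{V_t}}$.

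Combining the two displays gives $\ell_\lambda(\mathcal F_t;\bar\theta_t) - \ell_\lambda(\mathcal F_t;\bar\theta_{t,i}) \le \tfrac{\text{const}}{\kappa_\mu}\,\alpha(t)^2\lVert x_{t,i}\rVert_{\inv{V_t}}^2$; matching the stated constant $\tfrac{L_\mu}{2\kappa_\mu^2}$ requires keeping track of where the smoothness bound $\mu'\le L_\mu$ enters — I suspect it enters because one wants a \emph{second-order} (quadratic) control of the log-likelihood gap rather than the crude first-order bound above, i.e.\ expanding $\ell_\lambda(\mathcal F_t;\bar\theta_t)-\ell_\lambda(\mathcal F_t;\bar\theta_{t,i})$ by Taylor around $\bar\theta_{t,i}$, using that the gradient of the \emph{biased} objective vanishes at $\bar\theta_{t,i}$, so the gap equals $\alpha(t)(\bar\theta_{t,i}-\bar\theta_t)^\intercal x_{t,i}$ minus a Hessian term $\tfrac12(\bar\theta_t-\bar\theta_{t,i})^\intercal \nabla^2\ell_\lambda (\bar\theta_t-\bar\theta_{t,i})$, and the Hessian $\sum_s \mu'(\cdot)x_sx_s^\intercal + \lambda I \preceq L_\mu V_t$ is what produces the $L_\mu$ and the factor $\tfrac12$. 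The main obstacle I anticipate is precisely this bookkeeping: getting the constant exactly $\tfrac{L_\mu}{2\kappa_\mu^2}$ (rather than something merely of the same order) forces one to be careful about whether to use the first-order optimality inequality or the exact second-order expansion, and about how the neighbouring-arm terms $\lVert x_{t,a}\rVert_{\inv{V_t}}$ are absorbed without inflating the constant.
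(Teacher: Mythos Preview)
Your first-order optimality step is fine and gives
\[
\ell_\lambda(\mathcal F_t;\bar\theta_t) - \ell_\lambda(\mathcal F_t;\bar\theta_{t,i}) \;\leq\; \alpha(t)\,(\bar\theta_{t,i}-\bar\theta_t)^\intercal x_{t,i}\;\leq\;\alpha(t)\,\lVert \bar\theta_{t,i}-\bar\theta_t\rVert_{V_t}\,\lVert x_{t,i}\rVert_{\inv{V_t}},
\]
but the plan breaks down at the step where you try to control $\lVert \bar\theta_{t,i}-\bar\theta_t\rVert_{V_t}$. Since $\bar\theta_t=\bar\theta_{t,a}$ for some other arm $a$, subtracting the first-order conditions (\ref{eq:tilde theta first-order condition}) yields a right-hand side $\alpha(t)(x_{t,i}-x_{t,a})$, and the strong-monotonicity argument gives you at best
\[
\lVert \bar\theta_{t,i}-\bar\theta_t\rVert_{V_t}\;\leq\;\tfrac{1}{\kappa_\mu}\,\alpha(t)\,\lVert x_{t,i}-x_{t,a}\rVert_{\inv{V_t}}.
\]
The stray term $\lVert x_{t,a}\rVert_{\inv{V_t}}$ is \emph{not} dominated by $\lVert x_{t,i}\rVert_{\inv{V_t}}$ up to any universal constant; these are contexts of two unrelated arms and their $\inv{V_t}$-norms can be of completely different orders. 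So ``absorbing'' it is not a bookkeeping issue---it simply cannot be done, and the route does not deliver the stated bound. Your second-order variant (Taylor around $\bar\theta_{t,i}$) does not help either: the Hessian of $\ell_\lambda$ is negative definite, so the quadratic remainder has the wrong sign for an upper bound and you are thrown back to the same first-order inequality.

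The missing idea is to pivot through the \emph{unbiased} maximizer $\widehat\theta_t$ of $\ell_\lambda$, not through $\bar\theta_t$. Because $\widehat\theta_t$ maximizes $\ell_\lambda$ itself, one has $\ell_\lambda(\mathcal F_t;\bar\theta_t)\le \ell_\lambda(\mathcal F_t;\widehat\theta_t)$ for free, and it suffices to bound $\ell_\lambda(\mathcal F_t;\widehat\theta_t)-\ell_\lambda(\mathcal F_t;\bar\theta_{t,i})$. Taylor-expanding around $\widehat\theta_t$ (where $\nabla\ell_\lambda=0$) leaves a pure Hessian term, and since $-H_\ell(\theta)\preceq L_\mu V_t$ this is at most $\tfrac{L_\mu}{2}\lVert \widehat\theta_t-\bar\theta_{t,i}\rVert_{V_t}^2$. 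Now Lemma~\ref{lemma:norm of theta hat minus tilde theta} applies cleanly---it compares $\widehat\theta_t$ with $\bar\theta_{t,i}$ and involves only arm $i$---yielding $\lVert \widehat\theta_t-\bar\theta_{t,i}\rVert_{V_t}\le \tfrac{1}{\kappa_\mu}\alpha(t)\lVert x_{t,i}\rVert_{\inv{V_t}}$ and hence the exact constant $\tfrac{L_\mu}{2\kappa_\mu^2}$.
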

\begin{proof}[Lemma \ref{lemma:log-likelihood difference}]
\normalfont We quantify the difference in log-likelihood under $\bar{\theta}_t$ and $\bar{\theta}_{t,i}$ with the help of $\widehat{\theta}_t$. Denoting the Hessian of $\ell_{\lambda}(\mathcal{F}_t;{\theta})$ with respect to $\theta$ by $H_{\ell}(\theta)$, we have
\begin{equation}
    H_{\ell}(\theta)=\sum_{s=1}^{t-1}-\mu'(x_s^{\intercal}\theta)x_s x_s^{\intercal}-\lambda I,\label{eq:Hessian of ell}
\end{equation}
and hence $H_{\ell}(\theta)$ is negative-definite.
By the boundedness of $\mu'$, we also know that
\begin{equation}
    H_{\ell}(\theta)\succeq -L_{\mu}(V_t-\lambda I)-\lambda I\succeq -L_{\mu}V_t.\label{eq:H is larger than -L_mu V_t}
\end{equation}
Consequently,
\begin{alignat}{2}
    & \ell_{\lambda}(\mathcal{F}_t;\bar{\theta}_t)-\ell_{\lambda}(\mathcal{F}_t;\bar{\theta}_{t,i})=\span\span \nonumber\\ 
    \big(&  \ell_{\lambda}(\mathcal{F}_t;\bar{\theta}_t)-\ell_{\lambda}(\mathcal{F}_t;\widehat{\theta}_t)\big) \span\span+ \big(\ell_{\lambda}(\mathcal{F}_t;\widehat{\theta}_t) - \ell_{\lambda}(\mathcal{F}_t;\bar{\theta}_{t,i})\big) \nonumber \\
    & = &&\underbrace{\frac{1}{2}(\bar{\theta}_t-\widehat{\theta}_t)^{\intercal}H_{\ell}(\theta')(\bar{\theta}_t-\widehat{\theta}_t)}_{\leq 0} \nonumber\\
     &{}&&-\frac{1}{2}(\bar{\theta}_{t,i}-\widehat{\theta}_t)^{\intercal}H_{\ell}(\theta'')(\bar{\theta}_{t,i}-\widehat{\theta}_t)\label{eq:log-likelihood difference 1-1}\\
    &\leq \frac{1}{2}L_{\mu}\cdot \lVert \bar{\theta}_{t,i}-\widehat{\theta}_t\rVert_{V_t}^{2}\label{eq:log-likelihood difference 1-2}\span \span\\
    &\leq \frac{L_{\mu}}{2 \kappa_{\mu}^2}\cdot\alpha(t)^2\lVert x_{t,i} \rVert_{\inv{V_t}}^2,\span \span\label{eq:log-likelihood difference 1-3}
\end{alignat}
where (\ref{eq:log-likelihood difference 1-1}) follows from (\ref{eq:hat theta first-order condition}) and the Taylor expansion of $\ell_{\lambda}(\mathcal{F}_t;\theta)$ at $\theta=\widehat{\theta}_t$ up to the quadratic term (with $\theta'=\xi' \bar{\theta}_t+(1-\xi')\widehat{\theta}_t$ and $\theta''=\xi'' \bar{\theta}_{t,i}+(1-\xi'')\widehat{\theta}_t$ for some $\xi',\xi''\in[0,1]$), 
(\ref{eq:log-likelihood difference 1-2}) holds by (\ref{eq:H is larger than -L_mu V_t}), and (\ref{eq:log-likelihood difference 1-3}) is a direct result of Lemma \ref{lemma:norm of theta hat minus tilde theta}.\qed
\end{proof}

As will be seen presently, the regret bound involves several quantities concerning the norms of the differences in the estimators of $\theta$ and the norms of the context vectors. Recalling that $\widehat{\theta}_t$ denotes the estimator of $\theta$ without applying the bias term, we first establish several useful inequalities in the following Lemma \ref{lemma:useful inequalities}.
For ease of exposition, we discuss the Loewner order of the two key matrices $V_t$ and $U$. For any two symmetric matrices $A,B$, we write $A\preceq B$ if $B-A$ is a positive semi-definite matrix.
Similarly, we write $A\succeq B$ if $A-B$ is positive semi-definite.
By (\ref{eq: U definition}), the boundedness of the first-order derivative of $\mu$ and that $L_{\mu}\geq 1$, we know 
\begin{equation}
    U\preceq L_{\mu}(V_t-\lambda I)+\lambda I = L_{\mu}V_t+(\lambda-L_{\mu}\lambda)I\preceq L_{\mu}V_t.
\end{equation}
Similarly, by the fact that $\kappa_{\mu}\leq 1$, we have
\begin{equation}
     U\succeq \kappa_{\mu}(V_t-\lambda I)+\lambda I=\kappa_{\mu}V_t+(\lambda-\kappa_{\mu}\lambda)I\succeq \kappa_{\mu}V_t.
\end{equation}
\begin{lemma}
\label{lemma:useful inequalities}
\normalfont The following inequalities hold with probability one:
\begin{align}
     {\lVert \widehat{\theta}_{t}-\bar{\theta}_{t}\rVert}_{U}\cdot \lVert x_{t,a_t^*} \rVert_{\inv{U}} &\leq \frac{L_{\mu}^2}{\kappa_{\mu}}\alpha(t)\lVert x_t\rVert_{\inv{U}} \cdot\lVert x_{t,a_t^*}\rVert_{\inv{U}}\label{eq:useful inequalities 1-1},\\
     {\lVert {\theta}_*-\widehat{\theta}_{t}\rVert}_{U}\cdot \lVert x_{t,a_t^*} \rVert_{\inv{U}} &\leq
     {L_{\mu}\lVert \theta_*-\widehat{\theta}_t\rVert_{V_t}}\cdot \lVert x_{t,a_t^*}\rVert_{\inv{U}}\label{eq:useful inequalities 1-2},\\
     {\lVert {\theta}_*-\widehat{\theta}_{t}\rVert}_{U} \cdot \lVert x_t\rVert_{\inv{U}}&\leq \frac{L_{\mu}}{\kappa_{\mu}}{\lVert {\theta}_*-\widehat{\theta}_{t}\rVert}_{V_t}\cdot \lVert x_t\rVert_{\inv{V_t}}\label{eq:useful inequalities 1-3},\\
     {\lVert \widehat{\theta}_{t}-\bar{\theta}_{t,a_t^*}\rVert}_{U}\cdot \lVert x_{t} \rVert_{\inv{U}}&\leq \frac{L_{\mu}^2}{\kappa_{\mu}}\alpha(t)\lVert x_t\rVert_{\inv{U}} \cdot\lVert x_{t,a_t^*}\rVert_{\inv{U}}.\label{eq:useful inequalities 1-4}
\end{align}
\end{lemma}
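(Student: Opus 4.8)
The plan is to deduce all four estimates from one mechanism: convert the Loewner sandwich $\kappa_{\mu}V_t \preceq U \preceq L_{\mu}V_t$, established just above the lemma, into comparisons of the $U$- and $V_t$-(semi)norms — both in the primal form $\lVert v\rVert_A = \sqrt{v^{\intercal}Av}$ and in the dual form $\lVert x\rVert_{\inv{A}}$ — and then plug into Lemma~\ref{lemma:norm of theta hat minus tilde theta}.

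First I would record the scalar norm comparisons. For every $v\in\mathbb{R}^d$, $U \preceq L_{\mu}V_t$ yields $\lVert v\rVert_U^2 \leq L_{\mu}\lVert v\rVert_{V_t}^2$, hence $\lVert v\rVert_U \leq \sqrt{L_{\mu}}\,\lVert v\rVert_{V_t} \leq L_{\mu}\lVert v\rVert_{V_t}$ since $L_{\mu}\geq 1$; inverting reverses the order, $\inv{V_t} \preceq L_{\mu}\inv{U}$, so $\lVert x\rVert_{\inv{V_t}} \leq \sqrt{L_{\mu}}\,\lVert x\rVert_{\inv{U}}$. Dually, $U \succeq \kappa_{\mu}V_t$ gives $\lVert v\rVert_{V_t} \leq \kappa_{\mu}^{-1/2}\lVert v\rVert_U$ and, after inverting, $\lVert x\rVert_{\inv{U}} \leq \kappa_{\mu}^{-1/2}\lVert x\rVert_{\inv{V_t}} \leq \kappa_{\mu}^{-1}\lVert x\rVert_{\inv{V_t}}$ since $\kappa_{\mu}\leq 1$. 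These are the only facts about $U$ I would invoke. I would also use that in the regret analysis $\bar{\theta}_t$ denotes $\bar{\theta}_{t,a_t}$, so that Lemma~\ref{lemma:norm of theta hat minus tilde theta} applies to the difference $\widehat{\theta}_t-\bar{\theta}_t$ with context $x_t = x_{t,a_t}$, and of course applies to $\widehat{\theta}_t-\bar{\theta}_{t,a_t^*}$ with context $x_{t,a_t^*}$.

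Then each line is a short chain. For (1) and (4): pass $\lVert\widehat{\theta}_t-\bar{\theta}_t\rVert_U$ (resp. $\lVert\widehat{\theta}_t-\bar{\theta}_{t,a_t^*}\rVert_U$) down to the $V_t$-norm at cost $\sqrt{L_{\mu}}$; apply Lemma~\ref{lemma:norm of theta hat minus tilde theta} to obtain $\kappa_{\mu}^{-1}\alpha(t)\lVert x_t\rVert_{\inv{V_t}}$ (resp. $\kappa_{\mu}^{-1}\alpha(t)\lVert x_{t,a_t^*}\rVert_{\inv{V_t}}$); raise that dual norm back to the $\inv{U}$-norm at cost $\sqrt{L_{\mu}}$; and multiply by the remaining factor $\lVert x_{t,a_t^*}\rVert_{\inv{U}}$ (resp. $\lVert x_t\rVert_{\inv{U}}$). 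Collecting constants gives $L_{\mu}\kappa_{\mu}^{-1}\alpha(t)\lVert x_t\rVert_{\inv{U}}\lVert x_{t,a_t^*}\rVert_{\inv{U}} \leq L_{\mu}^2\kappa_{\mu}^{-1}\alpha(t)\lVert x_t\rVert_{\inv{U}}\lVert x_{t,a_t^*}\rVert_{\inv{U}}$, as claimed. For (2): convert only the parameter difference, $\lVert\theta_*-\widehat{\theta}_t\rVert_U \leq \sqrt{L_{\mu}}\lVert\theta_*-\widehat{\theta}_t\rVert_{V_t} \leq L_{\mu}\lVert\theta_*-\widehat{\theta}_t\rVert_{V_t}$, then multiply by the untouched $\lVert x_{t,a_t^*}\rVert_{\inv{U}}$. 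For (3): convert both factors, $\lVert\theta_*-\widehat{\theta}_t\rVert_U \leq \sqrt{L_{\mu}}\lVert\theta_*-\widehat{\theta}_t\rVert_{V_t}$ and $\lVert x_t\rVert_{\inv{U}} \leq \kappa_{\mu}^{-1/2}\lVert x_t\rVert_{\inv{V_t}}$, for a combined constant $\sqrt{L_{\mu}/\kappa_{\mu}} \leq L_{\mu}/\kappa_{\mu}$.

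I do not expect a real obstacle: the lemma is pure bookkeeping, and every inequality holds deterministically (i.e., with probability one). The one subtlety is the direction of the Loewner ordering under inversion — $A \preceq B$ becomes $\inv{A} \succeq \inv{B}$ for positive-definite matrices — so the $\inv{V_t}$-norm is controlled by the $\inv{U}$-norm through $U \preceq L_{\mu}V_t$, while the $\inv{U}$-norm is controlled by the $\inv{V_t}$-norm through $U \succeq \kappa_{\mu}V_t$. Keeping track of which of these two PSD bounds is used for the primal factor and which for the dual factor, in each of the four lines, is the only place one can go wrong.
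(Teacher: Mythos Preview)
Your proposal is correct and follows essentially the same approach as the paper: use the Loewner sandwich $\kappa_{\mu}V_t \preceq U \preceq L_{\mu}V_t$ to pass between $U$- and $V_t$-norms (primal and dual), and invoke Lemma~\ref{lemma:norm of theta hat minus tilde theta} for the differences $\widehat{\theta}_t-\bar{\theta}_{t,a_t}$ and $\widehat{\theta}_t-\bar{\theta}_{t,a_t^*}$. If anything, your write-up is slightly tighter (tracking $\sqrt{L_{\mu}}$ and $\kappa_{\mu}^{-1/2}$ before relaxing to $L_{\mu}$ and $\kappa_{\mu}^{-1}$) and more explicit about the identification $\bar{\theta}_t=\bar{\theta}_{t,a_t}$ than the paper's own proof.
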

\begin{proof}[Lemma \ref{lemma:useful inequalities}]
\normalfont For (\ref{eq:useful inequalities 1-1}), it can be shown that
\begin{align}
    {\lVert \widehat{\theta}_{t}-\bar{\theta}_{t}\rVert}_{U}&\cdot \lVert x_{t,a_t^*} \rVert_{\inv{U}}\nonumber\\
    &\leq L_{\mu} {\lVert \widehat{\theta}_{t}-\bar{\theta}_{t}\rVert}_{V_t}\cdot\lVert x_{t,a_t^*} \rVert_{\inv{U}} \label{eq:useful inequalities 2-1}\\
    &\leq L_{\mu} \Big( \frac{1}{\kappa_{\mu}}\alpha(t){\lVert x_{t,a_t^*} \rVert_{\inv{V_t}}}\Big) {\lVert x_{t,a_t^*} \rVert}_{\inv{U}}\label{eq:useful inequalities 2-2}\\
    &\leq \frac{L_{\mu}^2}{\kappa_{\mu}}\alpha(t)\lVert x_t\rVert_{\inv{U}} \cdot\lVert x_{t,a_t^*}\rVert_{\inv{U}},\label{eq:useful inequalities 2-3}
\end{align}
where (\ref{eq:useful inequalities 2-1}) and (\ref{eq:useful inequalities 2-3}) hold by the definition of $U$ in (\ref{eq: U definition}) and the boundedness of the first-order derivative of $\mu$, and (\ref{eq:useful inequalities 2-2}) is a direct result of Lemma \ref{lemma:norm of theta hat minus tilde theta}.
Similarly, (\ref{eq:useful inequalities 1-4}) can be shown by following the same procedure as (\ref{eq:useful inequalities 2-1})-(\ref{eq:useful inequalities 2-3}).
For (\ref{eq:useful inequalities 1-2}) and (\ref{eq:useful inequalities 1-3}), by the definition of $U$ and the boundedness of the first-order derivative of $\mu$, it is easy to verify that (\ref{eq:useful inequalities 1-2}) and (\ref{eq:useful inequalities 1-3}) indeed hold. \qed
\end{proof}
Now we are ready to prove Theorem \ref{theorem:generalized regret}.
\begin{proof}[Theorem \ref{theorem:generalized regret}]
\normalfont
To begin with, recall from Section \ref{section:generalized:index} that at each time $t$, GLM-RBMLE selects an arm from the index set $\bar{\mathcal{A}}''_t$ defined as
\begin{equation}
     \bar{\mathcal{A}}''_t:=\argmax_{1\leq a\leq K} \Big\{ \ell(\mathcal{F}_t;\bar{\theta}_{t,a})+\eta(t)\alpha(t)\cdot \bar{\theta}_{t,a}^{\intercal}x_{t,a}-\frac{\lambda}{2}{\lVert \bar{\theta}_{t,a}\rVert}_2^2  \Big\},\label{eq:tilde A_t''}
\end{equation}
Recall that the \textit{immediate regret} is defined as $R_t=\mu(\theta_{*}^{\intercal}x_{t,a_t^*})-\mu(\theta_{*}^{\intercal}x_{t})$.
By (\ref{eq:generalized regret 0}), under the GLM-RBMLE index in (\ref{eq:tilde A_t''}),
\begin{alignat}{2}
    0 &\geq \Big(\bar{\theta}_{t,a_t^*}^{\intercal}x_{t,a_t^*}+\frac{\ell_{\lambda}(\mathcal{F}_t;\bar{\theta}_{t,a_t^*})}{\eta(t)\alpha(t)}\Big) - \Big(\bar{\theta}_{t}^{\intercal}x_{t}+\frac{\ell_{\lambda}(\mathcal{F}_t;\bar{\theta}_{t})}{\eta(t)\alpha(t)}\Big)\label{eq:generalized regret 0-1}\\
    &=\bar{\theta}_{t}^{\intercal} x_{t,a_t^*}-\bar{\theta}_{t,a_t^*}^{\intercal} x_{t}-\alpha(t)\lVert x_{t} \rVert_{\inv{U}} + \alpha(t)\lVert x_{t,a_t^*} \rVert_{\inv{U}} \nonumber\\ 
    -& \frac{\ell_{\lambda}(\mathcal{F}_t;\bar{\theta}_t)}{\eta(t)\alpha(t)}+\frac{\ell_{\lambda}(\mathcal{F}_t;\bar{\theta}_{t,a_t^*})}{\eta(t)\alpha(t)}.\label{eq:generalized regret 0-2}
\end{alignat}
Hence,
\begin{align}
    R_t \leq & L_{\mu}\cdot(\theta_{*}^{\intercal}x_{t,a_t^*}-\theta_{*}^{\intercal}x_{t})\label{eq:generalized regret 1-1}\\
    =&L_{\mu}\cdot\big[(\theta_{*}-\bar{\theta}_{t})^{\intercal}x_{t,a_t^*}-\theta_{*}^{\intercal}x_{t}+\bar{\theta}_{t}^{\intercal}x_{t,a_t^*}\big]\label{eq:generalized regret 1-2}\\
    \leq &L_{\mu}\cdot \bigg[(\theta_{*}-\bar{\theta}_{t})^{\intercal}x_{t,a_t^*}- \theta_{*}^{\intercal}x_{t} \nonumber\\
    &+ \Big(x_t^{\intercal}\bar{\theta}_{t,a_{t}^*}+\alpha(t)\lVert x_{t} \rVert_{\inv{U}} - \alpha(t)\lVert x_{t,a_t^*} \rVert_{\inv{U}} \nonumber\\
    &+ \frac{\ell_{\lambda}(\mathcal{F}_t;\bar{\theta}_t)}{\eta(t)\alpha(t)} -\frac{\ell_{\lambda}(\mathcal{F}_t;\bar{\theta}_{t,a_t^*})}{\eta(t)\alpha(t)} \Big)\bigg]\label{eq:generalized regret 1-3}\\
    = & L_{\mu}\cdot \bigg[ (\theta_{*}-\bar{\theta}_{t})^{\intercal}x_{t,a_t^*}+(\bar{\theta}_{t,a_t^*}-\theta_{*})^{\intercal}x_{t} \nonumber\\
    &+\alpha(t)\lVert x_{t} \rVert_{\inv{U}} - \alpha(t)\lVert x_{t,a_t^*} \rVert_{\inv{U}} \nonumber\\
    &+ \frac{\ell_{\lambda}(\mathcal{F}_t;\bar{\theta}_t)}{\eta(t)\alpha(t)} -\frac{\ell_{\lambda}(\mathcal{F}_t;\bar{\theta}_{t,a_t^*})}{\eta(t)\alpha(t)}\bigg]\label{eq:generalized regret 1-4}\\
    \leq & L_{\mu}\cdot \bigg[\lVert \theta_{*}-\bar{\theta}_{t}\rVert_{U} \cdot\lVert x_{t,a_t^*}\rVert_{\inv{U}}\nonumber\\
    &+ \lVert \bar{\theta}_{t,a_t^*}-\theta_{*}\rVert_{U}\cdot \lVert x_t\rVert_{\inv{U}}\nonumber \\
    &+\alpha(t)\lVert x_{t} \rVert_{\inv{U}} - \alpha(t)\lVert x_{t,a_t^*} \rVert_{\inv{U}} \nonumber\\
    &+ \frac{\ell_{\lambda}(\mathcal{F}_t;\bar{\theta}_t)}{\eta(t)\alpha(t)} -\frac{\ell_{\lambda}(\mathcal{F}_t;\bar{\theta}_{t,a_t^*})}{\eta(t)\alpha(t)}\bigg],\label{eq:generalized regret 1-5}
\end{align}
where (\ref{eq:generalized regret 1-1}) follows from the the boundedness of the derivative of $\mu$, (\ref{eq:generalized regret 1-3})-(\ref{eq:generalized regret 1-4}) hold by (\ref{eq:generalized regret 0-1})-(\ref{eq:generalized regret 0-2}), and (\ref{eq:generalized regret 1-5}) is a direct result of the Cauchy-Schwarz inequality with respect to the norm induced by the matrix $U$.
Next, we provide an upper bound for each term in (\ref{eq:generalized regret 1-5}):
\begin{itemize}[leftmargin=*]
    \item $ \lVert\theta_{*}-\bar{\theta}_{t}\rVert_{U} \cdot\lVert x_{t,a_t^*}\rVert_{\inv{U}}$: We can obtain an upper bound by applying the Cauchy-Schwarz inequality and (\ref{eq:useful inequalities 1-1})-(\ref{eq:useful inequalities 1-2}) in Lemma \ref{lemma:useful inequalities}, as
    \begin{align}
        \lVert\theta_{*}-\bar{\theta}_{t}\rVert_{U} &\cdot\lVert x_{t,a_t^*}\rVert_{\inv{U}} \nonumber \\
        \leq & \big(\lVert\theta_{*}-\widehat{\theta}_t\rVert_{U}+\lVert \widehat{\theta}_t-\bar{\theta}_{t}\rVert_{U}\big)\lVert x_{t,a_t^*}\rVert_{\inv{U}}\\
        \leq & {L_{\mu}\lVert \theta_*-\widehat{\theta}_t\rVert_{V_t}}\cdot \lVert x_{t,a_t^*}\rVert_{\inv{U}} \nonumber \\
        &+ \frac{L_{\mu}^2}{\kappa_{\mu}}\alpha(t)\lVert x_t\rVert_{\inv{U}} \cdot\lVert x_{t,a_t^*}\rVert_{\inv{U}}.
    \end{align}
    \item $\lVert \bar{\theta}_{t,a_t^*}-\theta_{*}\rVert_{U}\cdot \lVert x_t\rVert_{\inv{U}}$: By the Cauchy-Schwarz inequality and (\ref{eq:useful inequalities 1-4}) in Lemma \ref{lemma:useful inequalities},
    \begin{align}
        \lVert \bar{\theta}_{t,a_t^*}-\theta_{*}\rVert_{U} & \cdot \lVert x_t\rVert_{\inv{U}} \nonumber\\
        \leq & \Big(\lVert \bar{\theta}_{t,a_t^*}-\widehat{\theta}_{t}\rVert_{U} + \lVert \widehat{\theta}_{t}-{\theta}_{*}\rVert_{U}\Big) \lVert x_t\rVert_{\inv{U}}\\ 
        \leq & \frac{L_{\mu}^2}{\kappa_{\mu}}\alpha(t)\lVert x_t\rVert_{\inv{U}} \cdot\lVert x_{t,a_t^*}\rVert_{\inv{U}} \nonumber \\
        &+ \frac{L_{\mu}}{\kappa_{\mu}}\lVert \widehat{\theta}_{t}-{\theta}_{*}\rVert_{U} \cdot {\lVert x_t \rVert }_{\inv{V_t}}.
    \end{align}
    \item $\alpha(t)\lVert x_{t} \rVert_{\inv{U}}$: It is easy to verify that
    \begin{equation}
        \alpha(t)\lVert x_{t} \rVert_{\inv{U}}\leq \frac{1}{\kappa_{\mu}^2}\alpha(t)\lVert x_t\rVert_{\inv{V_t}}.
    \end{equation}
    \item $\frac{\ell_{\lambda}(\mathcal{F}_t;\bar{\theta}_t)}{\eta(t)\alpha(t)} -\frac{\ell_{\lambda}(\mathcal{F}_t;\bar{\theta}_{t,a_t^*})}{\eta(t)\alpha(t)}$: By Lemma \ref{lemma:log-likelihood difference}, we know
    \begin{equation}
    \begin{split}
        \frac{\ell_{\lambda}(\mathcal{F}_t;\bar{\theta}_t)}{\eta(t)\alpha(t)} -\frac{\ell_{\lambda}(\mathcal{F}_t;\bar{\theta}_{t,a_t^*})}{\eta(t)\alpha(t)}&\leq \frac{L_{\mu}}{2\eta(t) \kappa_{\mu}^2}\cdot\alpha(t)\lVert x_{t,a_t^*} \rVert_{\inv{V_t}}^2 \\
        &\leq \frac{L_{\mu}^3}{2\eta(t) \kappa_{\mu}^2}\cdot\alpha(t)\lVert x_{t,a_t^*} \rVert_{\inv{U}}^2.
    \end{split}
    \end{equation}
\end{itemize}
By combining (\ref{eq:generalized regret 1-5}) and the above upper bounds, we have
\begin{align}
    R_t \leq & L_{\mu}\bigg[\Big( \big(\frac{L_{\mu}^3}{2\kappa_{\mu}^2 \eta(t)}-1\big)\alpha(t)  \Big)\cdot {\lVert x_{t,a_t^*}\rVert}_{\inv{U}}^2  
    + \nonumber\\
    &\Big(\frac{2 L_{\mu}^2}{\kappa_{\mu}}\alpha(t){\lVert x_t\rVert}_{\inv{U}}+L_{\mu}{\lVert \theta_{*}-\widehat{\theta}_t\rVert}_{V_t}  \Big){\lVert x_{t,a_t^*}\rVert}_{\inv{U}}\label{eq:generalized regret 2-1}\\
    &+\Big(\frac{L_{\mu}}{\kappa_{\mu}}{\lVert \widehat{\theta}_t - \theta_{*}\rVert}_{V_t}\cdot {\lVert x_t\rVert}_{\inv{V_t}} + \frac{1}{\kappa_{\mu}^2} \alpha(t) {\lVert x_t\rVert}_{\inv{V_t}}    \Big)
    \bigg]. \label{eq:generalized regret 2-2}
\end{align}
Note that (\ref{eq:generalized regret 2-1})-(\ref{eq:generalized regret 2-2}) can be interpreted as a quadratic function of ${\lVert x_{t,a_t^*}\rVert}_{\inv{U}}$. 
Recall that $T_0:=\min\{t\in\mathbb{N}: \frac{L_{\mu}^3}{2\kappa_{\mu}^2\eta(t)}<1\}$.
Therefore, for any $t\geq T_0$, by completing the square,
\begin{align}
    R_t \leq & L_{\mu}\bigg[\frac{\alpha(t)}{4 (1-\frac{L_{\mu}^3}{2\kappa_{\mu}^2 \eta(t)})}  \Big(  \frac{2 L_{\mu}^2}{\kappa_{\mu}}{\lVert x_t\rVert}_{\inv{U}} + L_{\mu}\frac{{\lVert \theta_{*}-\widehat{\theta}_t\rVert}_{V_t}}{\alpha(t)}\Big)^2 \label{eq:generalized regret 3-1}\\
    &+\frac{L_{\mu}}{\kappa_{\mu}}{\lVert \widehat{\theta}_t - \theta_{*}\rVert}_{V_t}\cdot {\lVert x_t\rVert}_{\inv{V_t}} + \frac{1}{\kappa_{\mu}^2} \alpha(t) {\lVert x_t\rVert}_{\inv{V_t}}^2   \bigg]. \label{eq:generalized regret 3-2}
\end{align}
Based on  (\ref{eq:generalized regret 3-1})-(\ref{eq:generalized regret 3-2}), to bound the cumulative regret, we need the following properties. Recall that $G_1(t)$ and $G_2(t,\delta)$ are defined as
\begin{align}
    G_1(t)&:=\sqrt{2d\log\big(\frac{\lambda +t}{d}\big)}\\
    G_2(t,\delta)&:=\frac{\sigma}{\kappa_{\mu}}\sqrt{\frac{d}{2}\log(1+\frac{2t}{d})+\log\big(\frac{1}{\delta}\big)}.
\end{align}
\begin{itemize}[leftmargin=*]
    \item Note that by Lemma 11 of \cite{abbasi2011improved} and the fact that $\lVert x_{t,a}\rVert_2\leq 1$ and $\lambda\geq 1$,
    \begin{equation}
        \sum_{t=1}^{T} {\lVert x_t \rVert}_{\inv{V_t}}^2 \leq \big(G_1(T)\big)^2.\label{eq:upper bound on sum of context norm square}
    \end{equation}
    Moreover, (\ref{eq:upper bound on sum of context norm square}) also implies that
    \begin{equation}
        \sum_{t=1}^{T} \alpha(t){\lVert x_t \rVert}_{\inv{V_t}}^2 \leq \alpha(T)\big(G_1(T)\big)^2. \label{eq:upper bound on sum of context norm square times alpha(t)}
    \end{equation}
    \item By combining (\ref{eq:upper bound on sum of context norm square}) and the Cauchy-Schwarz inequality, we have
    \begin{equation}
        \sum_{t=1}^{T} {\lVert x_t \rVert}_{\inv{V_t}} \leq \sqrt{T}\cdot G_1(T).\label{eq:upper bound on sum of context norm}
    \end{equation}
    
    \item By Lemma 3 in \cite{li2017provably} and since the minimum eigenvalue $\lambda_{\min}(V_t)\geq \lambda\geq 1$, for any $\delta\in [1/T,1)$, we know with probability at least $1-\delta$, the following result holds:
    \begin{equation}
        {\lVert \widehat{\theta}_t-\theta_* \rVert}_{V_t}\leq G_2(t,\delta), \hspace{6pt}\forall t\in \mathbb{N}.\label{eq:upper bound on norm of theta difference}
    \end{equation}
    \item By combining (\ref{eq:upper bound on sum of context norm}) and (\ref{eq:upper bound on norm of theta difference}), we thereby know that with probability at least $1-\delta$,
    \begin{equation}
        \sum_{t=1}^{T}{\lVert x_t \rVert}_{\inv{V_t}}\cdot \lVert {\widehat{\theta}_t-\theta_* \rVert}_{V_t} \leq \sqrt{T}\cdot G_1(T)G_2(T,\delta).
    \end{equation}
    \item Based on (\ref{eq:upper bound on norm of theta difference}), we further know that with probability at least $1-\delta$,
    \begin{equation}
        \sum_{t=1}^{T}\frac{\lVert {\widehat{\theta}_t-\theta_* \rVert}_{V_t}^2}{\alpha(t)}\leq \big(G_2(T,\delta)\big)^2\cdot \sum_{t=1}^{T}\frac{1}{\alpha(t)}.\label{eq:upper bound on norm of theta difference square over alpha(t)}
    \end{equation}
\end{itemize}
Summing up, by (\ref{eq:generalized regret 3-1})-(\ref{eq:upper bound on norm of theta difference square over alpha(t)}), the cumulative regret can be upper bounded as follows: With probability at least $1-\delta$,
\begin{align}
    \sum_{t=1}^{T}R_t \leq & T_0 +\sum_{t=T_0+1}^{T} C_1\alpha(t){\lVert x_t\rVert}_{\inv{V_t}}^2 \nonumber\\
    &+ C_2 {\lVert x_t \rVert}_{\inv{V_t}}\cdot \lVert {\widehat{\theta}_t-\theta_* \rVert}_{V_t}\nonumber\\
    &+ C_3\frac{\lVert {\widehat{\theta}_t-\theta_* \rVert}_{V_t}^2}{\alpha(t)}\\
    \leq &T_0+C_1\alpha(T)\big(G_1(T)\big)^2\nonumber\\
    &+ C_2 \sqrt{T}G_1(T)G_2(T,\delta)\nonumber\\
    &+C_3 \big(G_2(T,\delta)\big)^2\sum_{t=1}^{T}\frac{1}{\alpha(t)},
\end{align}
where $C_1:=\frac{2 L_{\mu}^4}{k_{\mu}^4}+\frac{1}{k_{\mu}^2}$, $C_2:=\frac{2L_{\mu}^3}{\kappa_{\mu}^2}+\frac{L_{\mu}}{\kappa_{\mu}}$, and $C_3:=\frac{L_{\mu}^2}{2}$.
Therefore, if $\alpha(t)=\Omega(\sqrt{t})$, then $\mathcal{R}(T)=\mathcal{O}(\alpha(T)\log T)$; Otherwise, if $\alpha(t)=\mathcal{O}(\sqrt{t})$, then $\mathcal{R}(T)=\mathcal{O}\big((\sum_{t=1}^{T}\frac{1}{\alpha(t)})\log T\big)$.
Hence, by choosing $\alpha(t)=\sqrt{t}$, we obtain a cumulative regret bound of $\mathcal{R}(T)=\mathcal{O}(\sqrt{T}\log T)$.\qed
\end{proof}

\section{Additional Experimental Results}
\label{appendix:experiments}
In this section, we present the additional experimental results for both linear bandits and the generalized case. Throughout the experiments, we set the random seed to be $46$.
\subsection{Linear Bandits}
To begin with, Tables \ref{table:TimeStatic/ID=9}, \ref{table:TimeVary/ID=2}, and \ref{table:TimeVary/ID=9} present the mean, standard deviation, and quantiles of the experiments described in Figures \ref{fig:regret}(b), \ref{fig:regret}(c), and \ref{fig:regret}(d), respectively.
Similar to what we observed from Table \ref{table:TimeStatic/ID=2}, LinRBMLE still exhibits better robustness than VIDS and most of the other benchmark methods under \rev{static contexts. Since the computation time is not unaffected by the values of the contexts, we only show the result of static contexts.}
\rev{Table \ref{table:std dev of computation time} shows the standard deviation of computation time for the results in Table \ref{table:TimeStatic_computation_time}. 
We observe that LinRBMLE is still among the best in standard deviation of computation time.}


\subsection{Generalized Linear Bandits}
For the generalized linear bandits, we perform a similar study on the effectiveness, efficiency, and scalability of GLM-RBMLE and the popular benchmark methods. 
The benchmark methods that are compared with GLM-RBMLE include UCB-GLM \cite{li2017provably} and Laplace-TS \cite{chapelle2011empirical} (Algorithm 3 in \cite{chapelle2011empirical}). The configurations of the three methods are as follows. 
We use $\alpha(t) = \sqrt{t}$,  $\eta(t) = 1 +\log t$, and $\lambda = 1$ for GLM-RBMLE, as suggested in Section \ref{section:generalized}.
Under UCB-GLM, after $\tau$ rounds of initial random selection, the arm with the largest $x_{t,a}^{\intercal} \widehat{\theta}_t+\chi \lVert x_{t,a}\rVert_{\inv{V_t}}$ is selected at each time $t$.
As suggested by \cite{li2017provably}, we take  $\chi=\frac{\sigma}{\kappa_{\mu}}\sqrt{\frac{d}{2}\log(1+2T/d)+\log(1/\delta)}$ with $\delta=0.1$, and let $\tau=K$. 
For Laplace-TS, we set the regularization parameter to be $1$.
Throughout the experiments of the generalized linear model, we consider the \textit{logistic} link function, i.e, $\mu(z)=1/(1+\exp(-z))$, for all $z\in\mathbb{R}$.
Similar to the experiments for LinRBMLE, for each comparison we consider both static contexts as well as time-varying contexts. The comparison contains 50 trials of experiments and $T$ rounds in each experiment. As the algorithms are computationally more intense for general linear bandits than for those for linear bandits, the time horizon is reduced to $T=10^3$ in the experiments for the generalized linear bandits.

\textbf{Effectiveness.}
Figure \ref{fig:GLM regret} and Tables \ref{table:GLM_ID_300_TimeStatic}-\ref{table:GLM_ID_1301_TimeVary} show the effectiveness of GLM-RBMLE in terms of cumulative regret. 
Under both static and time-varying contexts, GLM-RBMLE achieve the best mean regret performance in all the four configurations.
Similar to LinRBMLE, based on the results of standard deviation and regret quantiles, GLM-RBMLE also exhibits better robustness across sample paths than the two popular benchmark methods.
Specifically when contexts are static, GLM-RBMLE has lower standard deviation and $0.95$ quantile compare to UCB-GLM and Laplace-TS. We can characterize the statistical stability by standard deviation and quantiles so we give the result that GLM-RBMLE has better stability than others.
On the other hand, in Figure \ref{fig:GLM regret}, Laplace-TS appears to have not converged, but the corresponding regret quantiles provided by Tables \ref{table:GLM_ID_300_TimeStatic}-\ref{table:GLM_ID_1301_TimeVary} reveal that this is only because its performance in some trials is much worse than that in other trials.

\textbf{Efficiency.}
Figures \ref{fig:time all GLM} shows the averaged cumulative regret versus computation time per decision. We observe that GLM-RBMLE achieves the smallest average regret at the cost of a higher computation time compared to UCB-GLM. 

\textbf{Scalability.}
Table \ref{table:GLM scalability} presents computation time per decision as $K$ and $d$ are varied. We observe that under $K=5$ and $d=10,20,30$, the computation time per decision of GLM-RBMLE and UCB-GLM are comparable and much smaller than that of Laplace-TS. 
On the other hand, under $d=5$ and $K=10, 20, 30$, we also observe that the computation time of GLM-RBMLE is proportional to the number of arms, as indicated by Line 4 of Algorithm \ref{alg:GLM-RBMLE}.
It remains an interesting open question how to improve the scalability of GLM-RBMLE in terms of number of arms.

{\color{black}
\section{A Discussion on the Assumptions for GLM-RBMLE}
\label{appendix:discussion about assumption of GLM-RBMLE}
In the literature of the generalized linear bandit problems, a regret bound typically relies on either one of the following two sets of assumptions:

\noindent \underline{\textbf{1st Set of Assumptions}}:
\begin{itemize}[leftmargin=*]
    \item (1a) The $\ell_2$-norm of any context vector is upper bounded by some constant $M>0$ (Wlog, $M$ is chosen to be 1).
    \vspace{-1mm}
    \item (1b) The sequence of observed contexts is generated by an adversary (and hence not necessarilly i.i.d. across time). 
    \vspace{-1mm}
    \item (1c) The true parameter $\theta_*$ is in some closed bounded set $\Theta$ and hence $\theta_*$ is bounded (i.e. $\lVert \theta_*\rVert_2\leq S$, for some known positive constant $S$).
    \vspace{-1mm}
    \item (1d) The link function $\mu(\cdot)$ is continuously differentiable and is Lipschitz continuous with some constant $L_{\mu}$.
    \vspace{-1mm}
    \item (1e) The derivative of the link function $\mu(\cdot)$ satisfies a uniform property: $\inf_{\theta\in\Theta, \lVert x\rVert\leq 1}\mu'(\theta^{\intercal}x)>0$. 
    \vspace{-1mm}
\end{itemize}
The prior works that make the above set of assumptions include (Filippi et al., 2010; Zhang et al., 2016; Jun et al., 2017; Faury et al., 2020).

\noindent \underline{\textbf{2nd Set of Assumptions}}:
\begin{itemize}[leftmargin=*]
    \item (2a) The $\ell_2$-norm of any context vector is upper bounded by some constant $M>0$ (Wlog, $M$ is chosen to be 1).
    \vspace{-1mm}
    \item (2b) The observed contexts at each time $t$ are drawn i.i.d. from some distribution $\nu$. 
    \vspace{-1mm}
    \item (2c) The true parameter $\theta_*$ is in $\mathbb{R}^{d}$ but not necessarily in a closed bounded set.
    \vspace{-1mm}
    \item (2d) The link function $\mu(\cdot)$ is continuously differentiable and is Lipschitz continuous with some constant $L_{\mu}$.
    \vspace{-1mm}
    \item (2e) The derivative of the link function $\mu(\cdot)$ satisfies a local property: $\inf_{\lVert \theta-\theta_{*}\rVert_2 \leq 1, \lVert x\rVert\leq 1}\mu'(\theta^{\intercal}x)>0$. 
    \vspace{-1mm}
\end{itemize}
The prior works that make the above set of assumptions include (Li et al., 2017; Oh and Iyengar, 2019).

Note that the main differences between these two sets of assumptions are (1b), (2b), (1e), and (2e). Compared to (1e), the condition (2e) is more mild as it only requires that the derivative $\mu'(\cdot)$ is bounded for those $\theta$ close to $\theta_*$.
However, such relaxation is achieved at the expense of an additional i.i.d. assumption on the observed contexts (i.e. condition (2b)), which is required by the normality-type results of Maximum Likelihood Estimation (e.g. Proposition 1 in (Li et al., 2017)).

In this paper, we adopt the first set of assumptions and show that the proposed GLM-RBMLE achieves a regret bound of $\mathcal{O}(\sqrt{T}\log T)$.
As described in Section \ref{section:problem}, we consider the condition that 
\begin{equation}
    \kappa_{\mu}:=\inf_{z\in\mathbb{R}}\mu'(z)>0.\label{eq:kappa mu in appendix}
\end{equation}
Below we explain why the condition (\ref{eq:kappa mu in appendix}) holds without loss of generality under the first set of assumptions (1a)-(1e):
Given any link function ${\mu}(\cdot)$ that satisfies (1c)-(1e), we can construct a modified link function $\tilde{\mu}(\cdot):\mathbb{R}\rightarrow \mathbb{R}$ defined as 
\begin{equation}
    \tilde{\mu}(z):=
    \begin{cases}
        \mu(z) &,-S\leq z\leq S,\\
        \mu(S)+\mu'(S)(z-S) &, z>S,\\
        \mu(-S)+\mu'(-S)(z+S) &, z<-S.
    \end{cases}
\end{equation}
Hence, $\tilde{\mu}(\cdot)$ is constructed by first truncating the original link function $\mu(\cdot)$ and then extending the truncated function to the whole real line via linear extrapolation.
It is easy to verify that $\tilde{\mu}(\cdot)$ satisfies (\ref{eq:kappa mu in appendix}) under the condition (1e).
Moreover, as $\theta_*^{\intercal}x$ must be in $[-S,S]$ for any context $x$ under the assumptions (1a) and (1c), the above extension would not cause any model misspecification.
Therefore, given any standard link function ${\mu}(\cdot)$ (e.g. a logistic function), we can construct the corresponding $\tilde{\mu}(\cdot)$ through the above extension and use $\tilde{\mu}(\cdot)$ for the GLM-RBMLE.
Hence, the regret bound of GLM-RBMLE holds for the same class of link functions as the prior works (Filippi et al., 2010; Zhang et al., 2016; Abeille et al., 2017; Jun et al., 2017; Faury et al., 2020).





\section{A Discussion on the Computational Complexity}
\label{appendix:discussion about complexity}
In this section, we discuss the theoretical computational complexity of each benchmark method. Recall that $K$ is number of arms and $d$ is the dimension of context. For the simple index policies including LinRBMLE, LinUCB, and GPUCB, the per-decision complexity is $\smash[b]{\mathcal{O}(d^m + Kd^2)}$, where $d^m$ results from matrix inversion $V^{-1}_t$(with $m = 2.37\sim3$) and $Kd^2$ results from matrix multiplication $x_{t,a}^{\intercal}V^{-1}_tx_{t,a}$. LinTS has a per-decision complexity of $\smash[b]{\mathcal{O}(d^m + d^n + Kd^2)}$ with $n = 2.37\sim3$, where the additional term $d^n$ results from the sampling of a multivariate normal distribution. Bayes-UCB and KG have the same complexity: $\smash[b]{\mathcal{O}(d^m + Kd^2 + KS_1)}$, where $S_1$ results from the computing probability density function (PDF), cumulative distribution function (CDF), or the percent point function (i.e. inverse CDF) of a normal distribution. 
The effect of $S_1$ is empirically significant under large $K$'s (i.e. a large number of arms).
The complexity of VIDS is $\smash[b]{\mathcal{O}(d^m + MKd^2 + KS_2)}$, where $M$ is the number of posterior samples in Algorithm 6 of \cite{russo2018learning}, and $S_2$ is the time of solving the 1-dimensional optimization problem once (Line 1 of Algorithm 3 in \cite{russo2018learning}).
To achieve good regret performance, $M$ needs to be sufficiently large.
Moreover, the effect $S_2$ can be quite significant under large $K$'s.

\vspace{5mm}
\noindent\textbf{\Large References for the Appendix:}
\vspace{1mm}

\noindent Abbasi-Yadkori, Y.; P\`{a}l, D.; and Szepesv\`{a}ri, C. 2011. Improved algorithms for linear stochastic bandits. In \textit{Advances
in Neural Information Processing Systems}, 2312–2320.

\noindent Abeille, M.; Lazaric, A.; et al. 2017. Linear Thompson sampling revisited. \textit{Electronic Journal of Statistics} 11(2):5165–5197.

\noindent Bertsekas, D. P. 1999. \textit{Nonlinear programming}. Athena Scientific.

\noindent Chapelle, O.; and Li, L. 2011. An empirical evaluation of Thompson sampling. In \textit{Advances in neural information processing systems}, 2249–2257.

\noindent Faury, L.; Abeille, M.; Calauzènes, C.; and Fercoq, O. 2020. Improved Optimistic Algorithms for Logistic Bandits. In \textit{International Conference on Machine Learning}.

\noindent Filippi, S.; Cappe, O.; Garivier, A.; and Szepesv\`{a}ri, C. 2010. \textit{Parametric bandits: The generalized linear case. In Advances
in Neural Information Processing Systems}, 586–594.

\noindent Jun, K. S.; Bhargava, A.; Nowak, R.; and Willett, R. 2017. Scalable generalized linear bandits: Online computation and hashing. In \textit{Advances in Neural Information Processing Systems}, 99-109.

\noindent Li, L.; Lu, Y.; and Zhou, D. 2017. Provably optimal algorithms for generalized linear contextual bandits. In \textit{Proceedings of the 34th International Conference on Machine Learning-Volume} 70, 2071–2080. JMLR. org.

\noindent Oh, M. H.; Iyengar, G. 2019. Multinomial Logit Contextual Bandits. In \textit{Reinforcement Learning for Real Life (RL4RealLife) Workshop in International Conference on Machine Learning}.

\noindent Russo, D.; and Van Roy, B. 2018. Learning to optimize via
information-directed sampling. \textit{Operations Research} 66(1):
230–252.

\noindent Zhang, L.; Yang, T.; Jin, R.; Xiao, Y.; and Zhou, Z. H. 2016. Online stochastic linear optimization under one-bit feedback. In \textit{International Conference on Machine Learning}, 392-401.
}
\begin{table*}[!h]
\footnotesize
\begin{center}
\begin{tabular}{|c|c|c|c|c|c|c|c|c|c|}
\hline
\textbf{Alg.} & \textbf{RBMLE} & \textbf{LinUCB} & \textbf{BUCB} & \textbf{GPUCB} & \textbf{GPUCBT} & \textbf{KG} & \textbf{KG*} & \textbf{LinTS} & \textbf{VIDS} \\ \hline
Mean & 1.86 & 2.72 & 4.66 & 3.77 & {\ul \textbf{0.86}} & 19.14 & 2.81 & 13.49 & {\ul \textbf{0.83}} \\ \hline
Std.Dev & {\ul \textbf{0.45}} & 10.64 & 14.63 & 1.42 & {\ul \textbf{0.65}} & 35.38 & 8.37 & 2.10 & 1.30 \\ \hline
Q.10 & 1.48 & {\ul \textbf{0.05}} & 0.09 & 2.08 & 0.38 & {\ul \textbf{0.04}} & 0.09 & 10.51 & 0.21 \\ \hline
Q.25 & 1.63 & {\ul \textbf{0.06}} & 0.10 & 2.72 & 0.49 & {\ul \textbf{0.05}} & 0.12 & 12.23 & 0.30 \\ \hline
Q.50 & 1.77 & {\ul \textbf{0.12}} & 0.13 & 3.73 & 0.66 & {\ul \textbf{0.10}} & 0.16 & 13.70 & 0.43 \\ \hline
Q.75 & 1.99 & 0.36 & {\ul \textbf{0.27}} & 4.35 & 0.91 & 18.06 & {\ul \textbf{0.26}} & 14.92 & 0.55 \\ \hline
Q.90 & 2.39 & 2.83 & 5.64 & 6.06 & {\ul \textbf{1.64}} & 87.14 & 6.58 & 16.16 & {\ul \textbf{1.22}} \\ \hline
Q.95 & {\ul \textbf{2.55}} & 8.86 & 39.66 & 6.64 & {\ul \textbf{2.06}} & 100.66 & 19.38 & 16.64 & 4.57 \\ \hline
\end{tabular}
\caption{Statistics of the final cumulative regret in Figure \ref{fig:regret}(b). The best and the second-best are highlighted. ‘Q’ and “Std.Dev'' stand for quantile and standard deviation of the total cumulative regret over 50 trails, respectively. All the values displayed here are scaled by 0.01 for more compact notations. }
\label{table:TimeStatic/ID=9}
\end{center}
\end{table*}

\begin{table*}[!h]
\footnotesize
\begin{center}
\begin{tabular}{|c|c|c|c|c|c|c|c|c|c|}
\hline
\textbf{Alg.} & \textbf{RBMLE} & \textbf{LinUCB} & \textbf{BUCB} & \textbf{GPUCB} & \textbf{GPUCBT} & \textbf{KG} & \textbf{KG*} & \textbf{LinTS} & \textbf{VIDS} \\ \hline
Mean & 0.41 & 0.40 & {\ul \textbf{0.40}} & 0.52 & {\ul \textbf{0.38}} & 0.41 & 0.44 & 9.17 & 20.01 \\ \hline
Std.Dev & 0.17 & 0.19 & 0.15 & {\ul \textbf{0.14}} & 0.15 & {\ul \textbf{0.14}} & 0.16 & 0.25 & 0.65 \\ \hline
Q.10 & {\ul \textbf{0.25}} & 0.25 & {\ul \textbf{0.24}} & 0.38 & 0.25 & 0.26 & 0.28 & 8.87 & 19.20 \\ \hline
Q.25 & 0.30 & 0.28 & {\ul \textbf{0.27}} & 0.43 & {\ul \textbf{0.27}} & 0.32 & 0.35 & 9.01 & 19.61 \\ \hline
Q.50 & 0.37 & {\ul \textbf{0.35}} & 0.36 & 0.50 & {\ul \textbf{0.34}} & 0.38 & 0.40 & 9.15 & 20.05 \\ \hline
Q.75 & 0.47 & 0.47 & 0.49 & 0.56 & {\ul \textbf{0.44}} & {\ul \textbf{0.46}} & 0.53 & 9.25 & 20.35 \\ \hline
Q.90 & {\ul \textbf{0.57}} & 0.60 & 0.64 & 0.64 & {\ul \textbf{0.51}} & 0.61 & 0.63 & 9.49 & 20.75 \\ \hline
Q.95 & {\ul \textbf{0.63}} & {\ul \textbf{0.62}} & 0.69 & 0.72 & 0.65 & 0.71 & 0.70 & 9.73 & 21.09 \\ \hline
\end{tabular}
\caption{Statistics of the final cumulative regret in Figure \ref{fig:regret}(c). The best and the second-best are highlighted. ‘Q’ and “Std.Dev'' stand for quantile and standard deviation of the total cumulative regret over 50 trails, respectively. All the values displayed here are scaled by 0.01 for more compact notations.}
\label{table:TimeVary/ID=2}
\end{center}
\end{table*}

\begin{table*}[!h]
\footnotesize
\begin{center}
\begin{tabular}{|c|c|c|c|c|c|c|c|c|c|}
\hline
\textbf{Alg.} & \textbf{RBMLE} & \textbf{LinUCB} & \textbf{BUCB} & \textbf{GPUCB} & \textbf{GPUCBT} & \textbf{KG} & \textbf{KG*} & \textbf{LinTS} & \textbf{VIDS} \\ \hline
Mean & {\ul \textbf{0.40}} & 0.44 & 0.44 & 0.52 & {\ul \textbf{0.40}} & 0.45 & 0.46 & 9.48 & 19.85 \\ \hline
Std.Dev & 0.19 & 0.18 & 0.19 & {\ul \textbf{0.11}} & {\ul \textbf{0.12}} & 0.18 & 0.16 & 0.32 & 0.66 \\ \hline
Q.10 & {\ul \textbf{0.21}} & 0.25 & {\ul \textbf{0.23}} & 0.40 & 0.26 & 0.25 & 0.29 & 8.95 & 19.24 \\ \hline
Q.25 & {\ul \textbf{0.30}} & 0.32 & 0.30 & 0.43 & 0.31 & {\ul \textbf{0.29}} & 0.33 & 9.34 & 19.37 \\ \hline
Q.50 & {\ul \textbf{0.39}} & 0.43 & 0.41 & 0.53 & {\ul \textbf{0.39}} & 0.45 & 0.43 & 9.53 & 19.70 \\ \hline
Q.75 & {\ul \textbf{0.46}} & 0.53 & 0.57 & 0.62 & {\ul \textbf{0.48}} & 0.54 & 0.56 & 9.70 & 20.12 \\ \hline
Q.90 & {\ul \textbf{0.52}} & 0.60 & 0.62 & 0.67 & {\ul \textbf{0.55}} & 0.65 & 0.64 & 9.89 & 20.83 \\ \hline
Q.95 & {\ul \textbf{0.70}} & 0.77 & {\ul \textbf{0.70}} & {\ul \textbf{0.70}} & {\ul \textbf{0.59}} & 0.74 & 0.75 & 9.92 & 21.02 \\ \hline
\end{tabular}
\caption{Statistics of the final cumulative regret in Figure \ref{fig:regret}(d). The best and the second-best are highlighted. ‘Q’ and “Std.Dev'' stand for quantile and standard deviation of the total cumulative regret over 50 trails, respectively. All the values displayed here are scaled by 0.01 for more compact notations.}
\label{table:TimeVary/ID=9}
\end{center}
\end{table*}

\captionsetup[table]{labelfont={color=black}}
\begin{table*}[!h]
{\color{black}
\footnotesize
\begin{center}

\begin{tabular}{|c|c|c|c|c|c|c|c|c|c|}

\hline
\textbf{Algorithm} & \textbf{RBMLE} & \textbf{LinUCB} & \textbf{BUCB} & \textbf{GPUCB} & \textbf{GPUCBT} & \textbf{KG} & \textbf{KG*} & \textbf{LinTS} & \textbf{VIDS} \\ \hline
$d = 100$, $K = 100$ & 0.35 & 0.69 & 4.10 & 0.29 & 0.30 & 1.57 & 1.42 & 2.49 & 13.85 \\ \hline
$d = 200$, $K = 100$ & 0.50 & 0.84 & 3.83 & 0.47 & 0.47 & 1.49 & 1.21 & 3.12 & 40.98 \\ \hline
$d = 300$, $K = 100$ & 0.70 & 1.01 & 5.54 & 0.91 & 0.95 & 3.57 & 1.52 & 4.49 & 41.01 \\ \hline
$d = 100$, $K = 200$ & 0.65 & 0.54 & 8.05 & 0.75 & 0.96 & 4.73 & 2.98 & 2.87 & 7.91 \\ \hline
$d = 200$, $K = 200$ & 0.77 & 1.00 & 7.37 & 0.73 & 0.72 & 2.17 & 2.88 & 3.51 & 33.93 \\ \hline
$d = 300$, $K = 200$ & 3.28 & 4.10 & 22.43 & 4.05 & 4.07 & 18.65 & 7.95 & 4.12 & 15.05 \\ \hline
\end{tabular}

\caption{{\color{black}Standard deviation of computation time per decision for static contexts, under different values of $K$ and $d$. All numbers are averaged over 50 trials with $T= 10^2$ and in $10^{-4}$ seconds.}}
\label{table:std dev of computation time}
\end{center}
}
\end{table*}
\captionsetup[table]{labelfont={color=black}}

\begin{figure*}[!htp]
\center
$\begin{array}{c c c c}
    \multicolumn{1}{l}{\mbox{\bf }} & \multicolumn{1}{l}{\mbox{\bf }} & \multicolumn{1}{l}{\mbox{\bf }} & \multicolumn{1}{l}{\mbox{\bf }}\\ 
    \hspace{-5mm} \scalebox{0.263}{\includegraphics[width=\textwidth]{./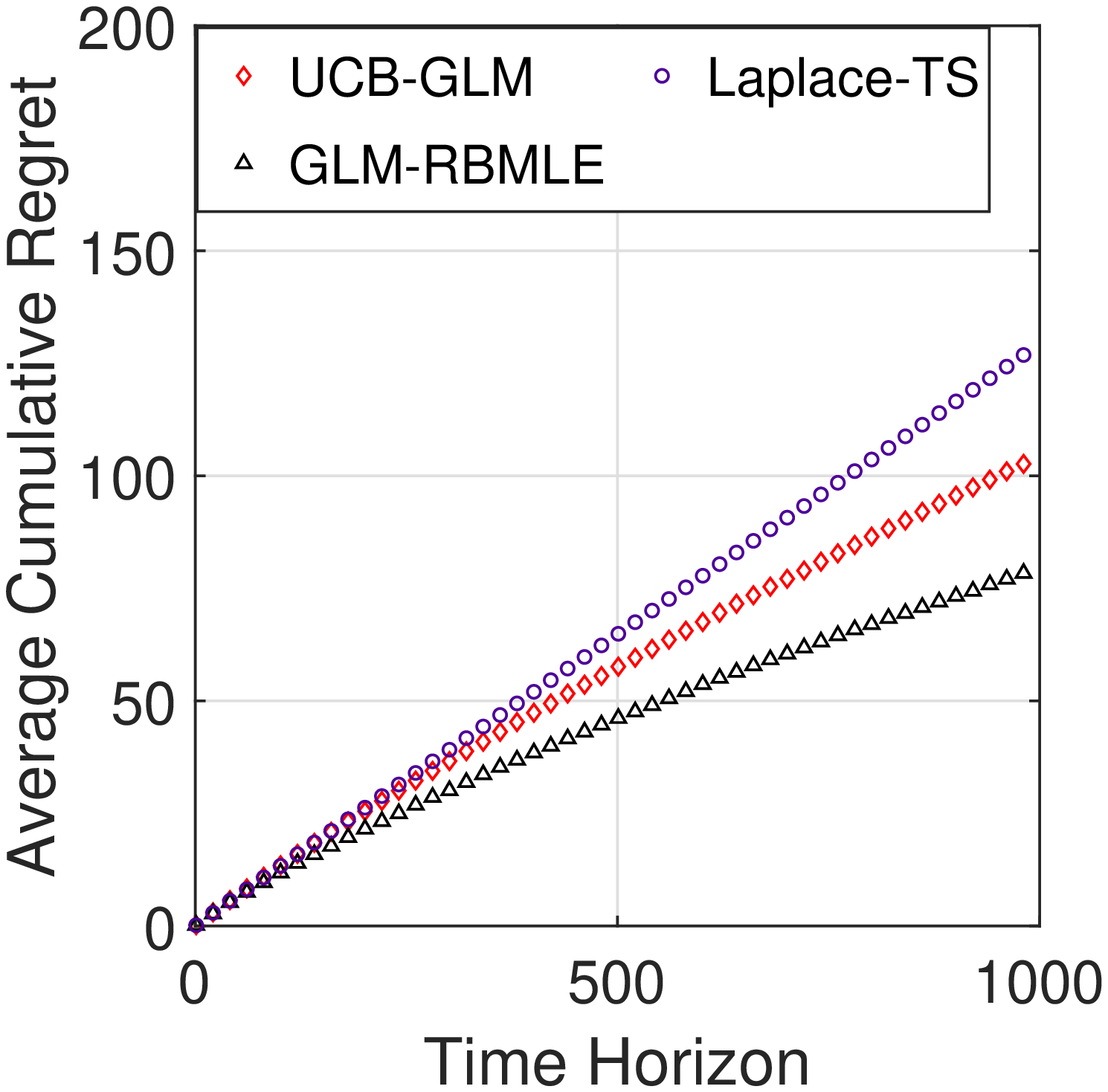}}  \label{fig:GLM_TimeStatic_ID_300} & \hspace{-4mm} \scalebox{0.263}{\includegraphics[width=\textwidth]{./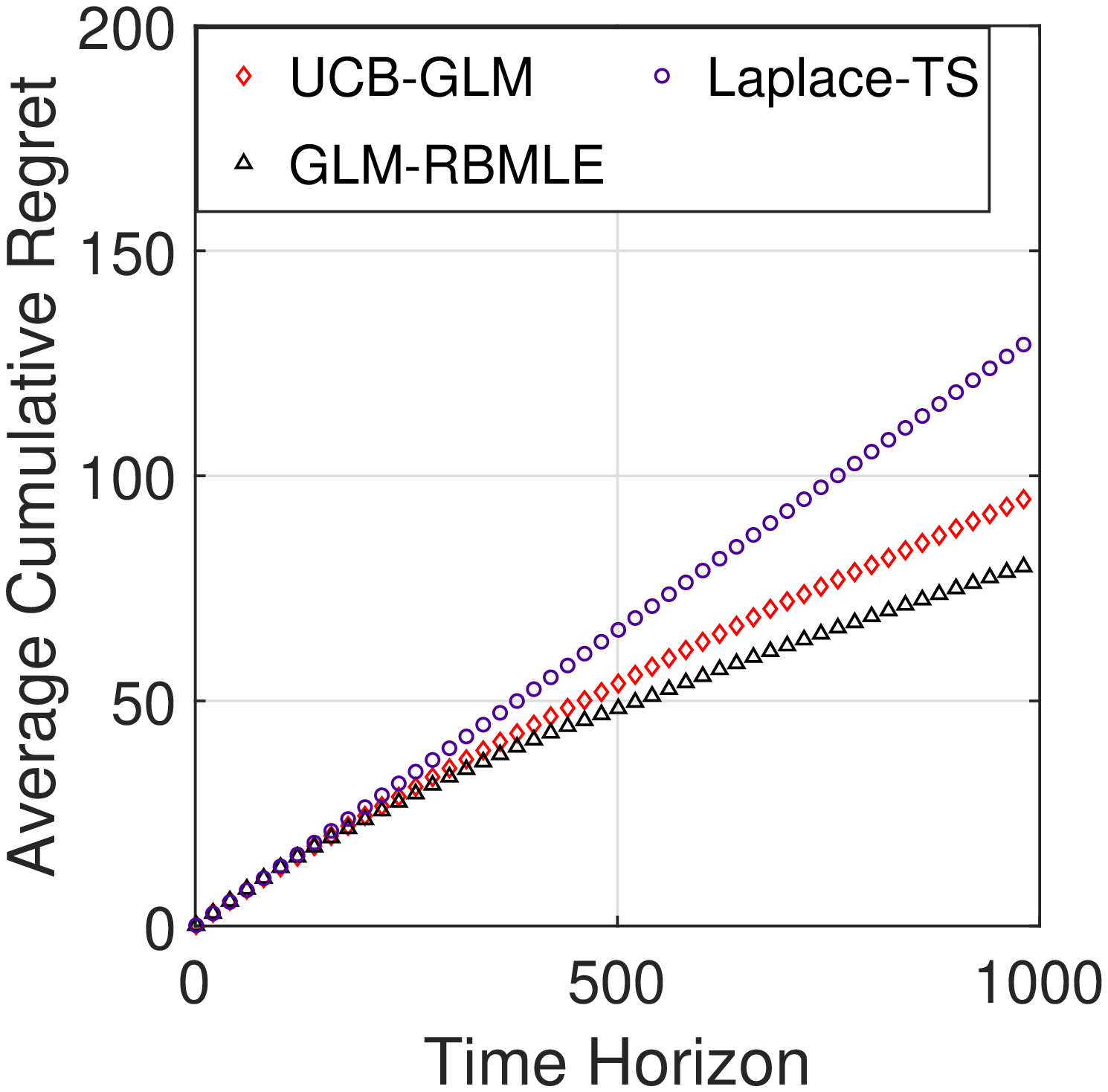}} \label{fig:GLM_TimeStatic_ID_301} & \hspace{-4mm}
    \scalebox{0.263}{\includegraphics[width=\textwidth]{./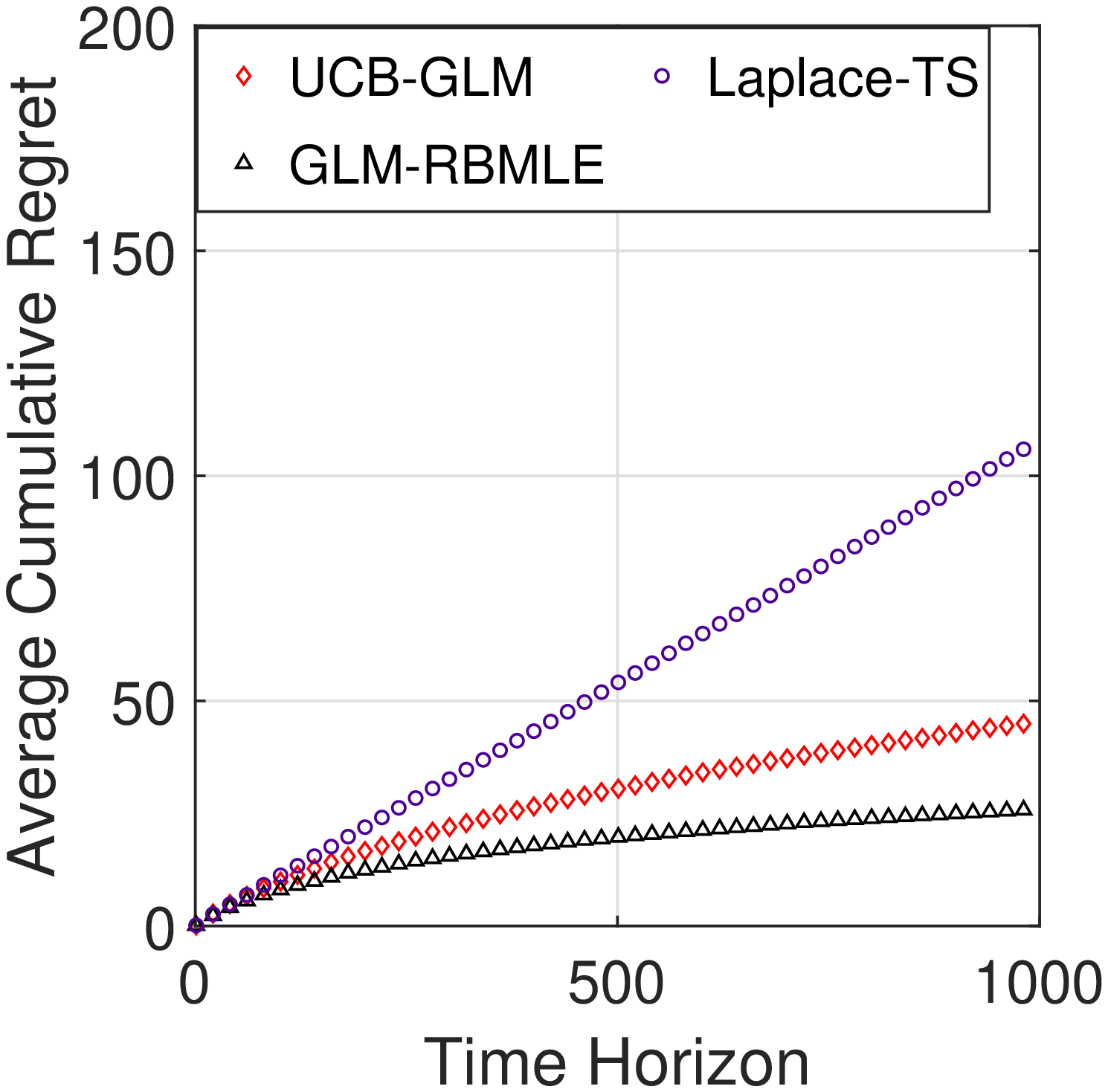}}  \label{fig:GLM_TimeVary_ID_1300}& \hspace{-4mm}
    \scalebox{0.263}{\includegraphics[width=\textwidth]{./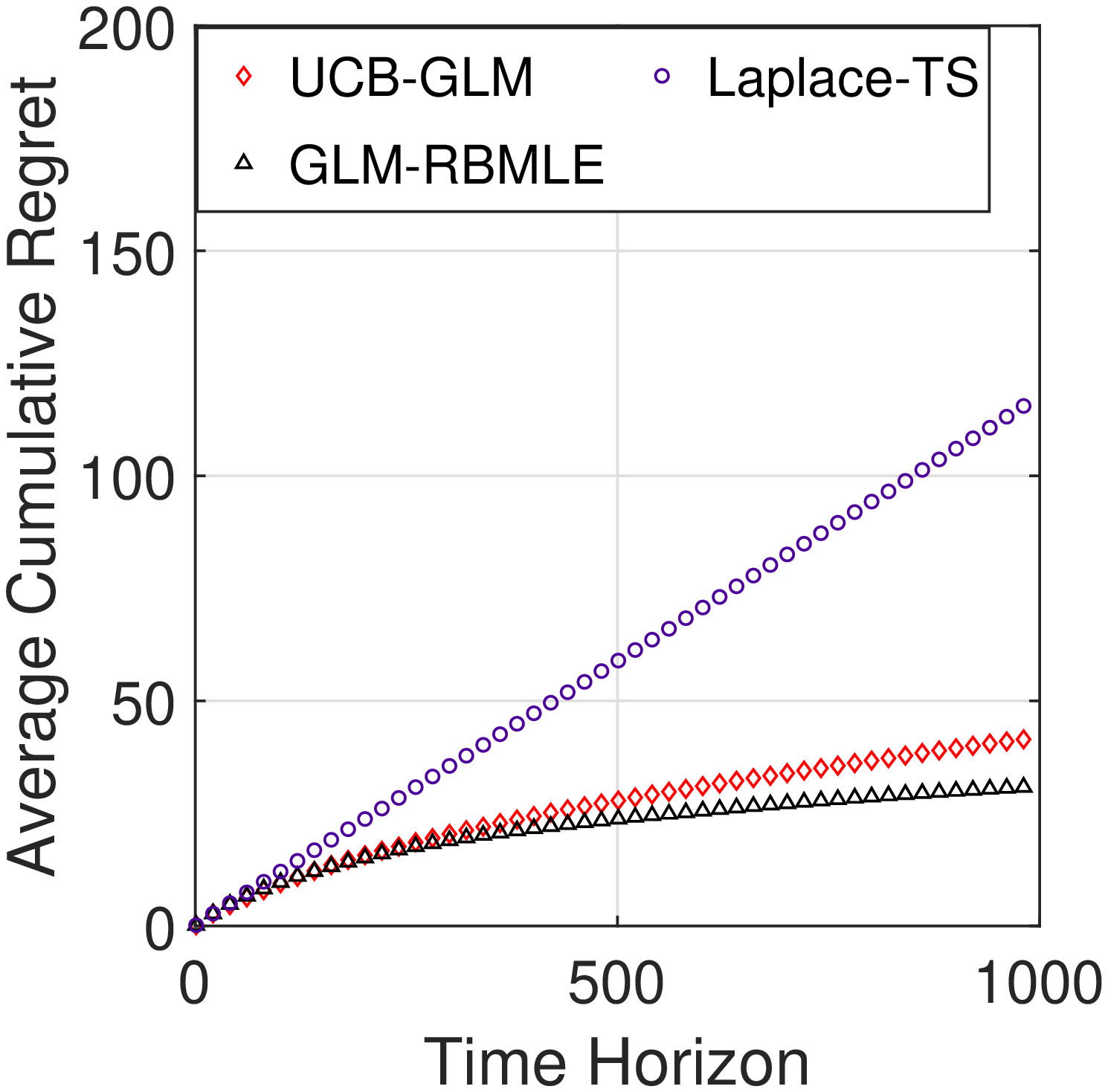}}  \label{fig:GLM_TimeVary_ID_1301} \\
    \mbox{\small (a)} & \hspace{-2mm} \mbox{\small (b)} & \hspace{-2mm} \mbox{\small (c)} & \hspace{-5mm} \mbox{\small (d)}\\[-0.2cm]
\end{array}$
\caption{Cumulative regret averaged over 50 trials with $T=10^3$ and $K=10$ on generalized linear bandits: (a) and
(b) are under static contexts; (c) and (d) are under time-varying contexts; (a) and (c) are with $\theta_* = (0.3, -0.5, 0.2,-0.7,-0.1)$; (b) and (d) are with $\theta_* = (0.2, -0.8, -0.5, 0.1, 0.1)$.}
\label{fig:GLM regret}
\end{figure*}

\begin{figure*}[!htp]
\center
$\begin{array}{c c c c}
    \multicolumn{1}{l}{\mbox{\bf }} & \multicolumn{1}{l}{\mbox{\bf }} & \multicolumn{1}{l}{\mbox{\bf }} & \multicolumn{1}{l}{\mbox{\bf }}\\ 
    \hspace{-5mm} \scalebox{0.268}{\includegraphics[width=\textwidth]{./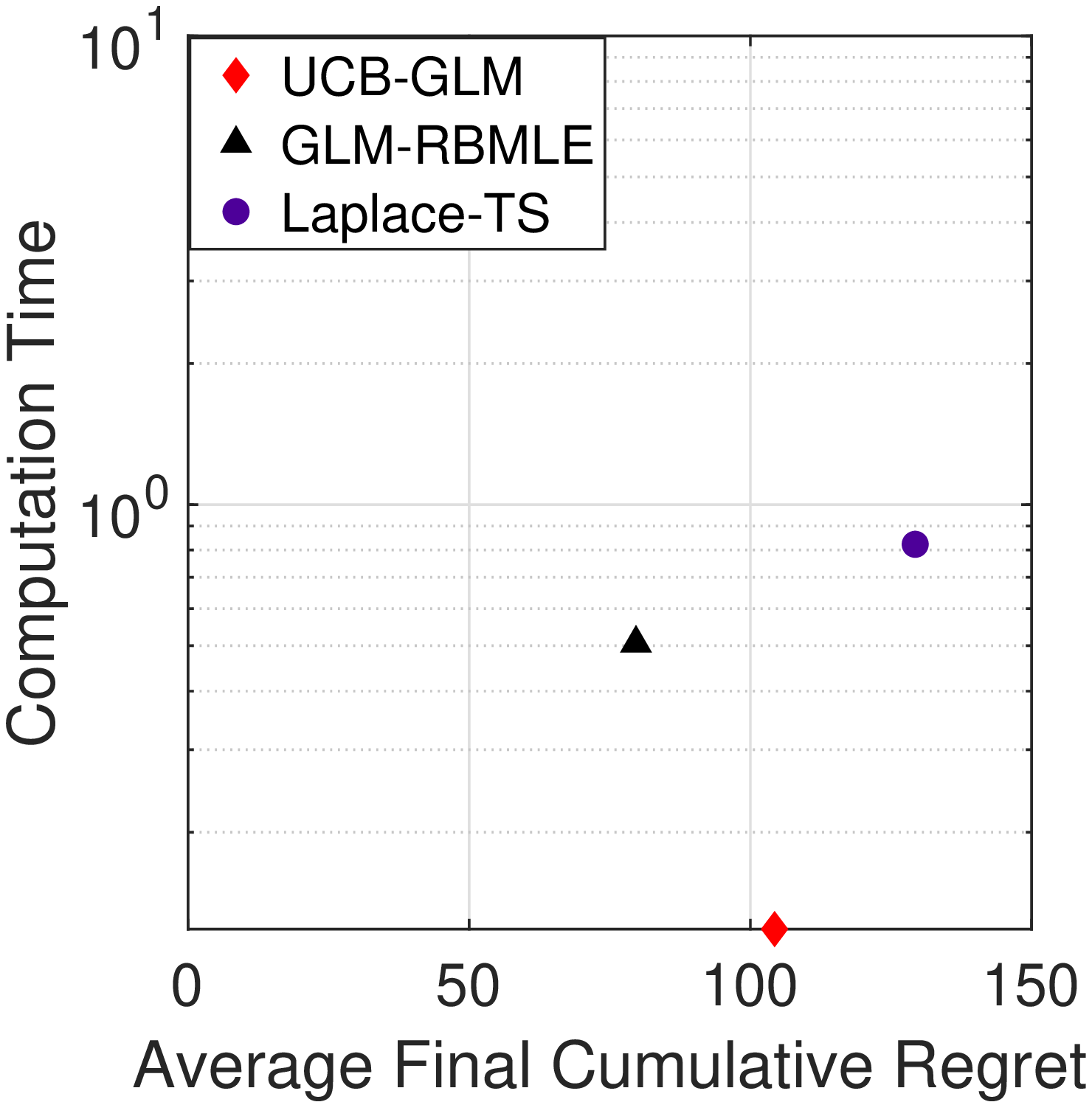}} \label{fig:GLM_Time_vs_Regret_TimeStatic_ID_300} & 
    \hspace{-5mm} \scalebox{0.268}{\includegraphics[width=\textwidth]{./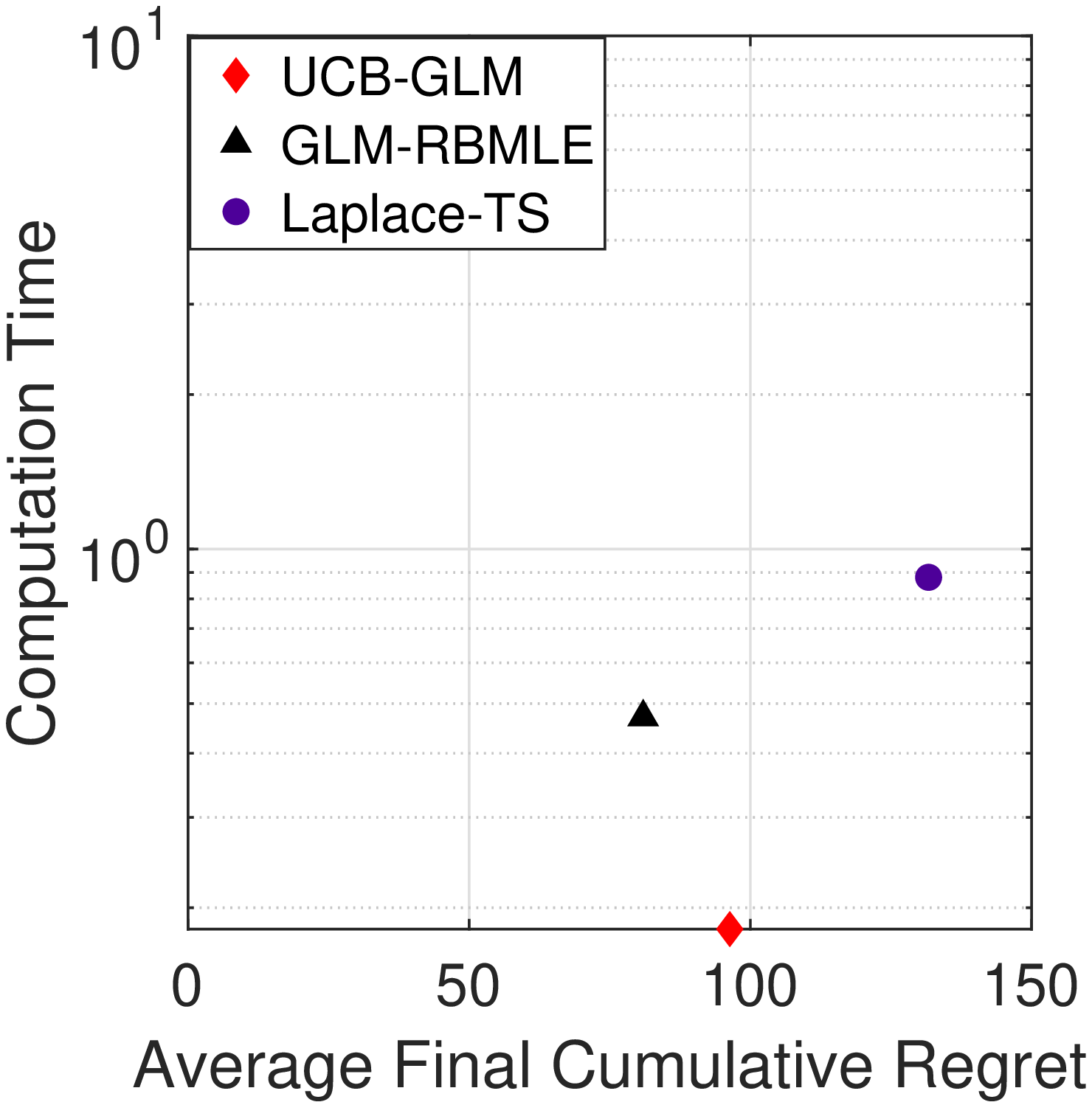}} \label{fig:GLM_Time_vs_Regret_TimeStatic_ID_301} &
    \hspace{-5mm} \scalebox{0.268}{\includegraphics[width=\textwidth]{./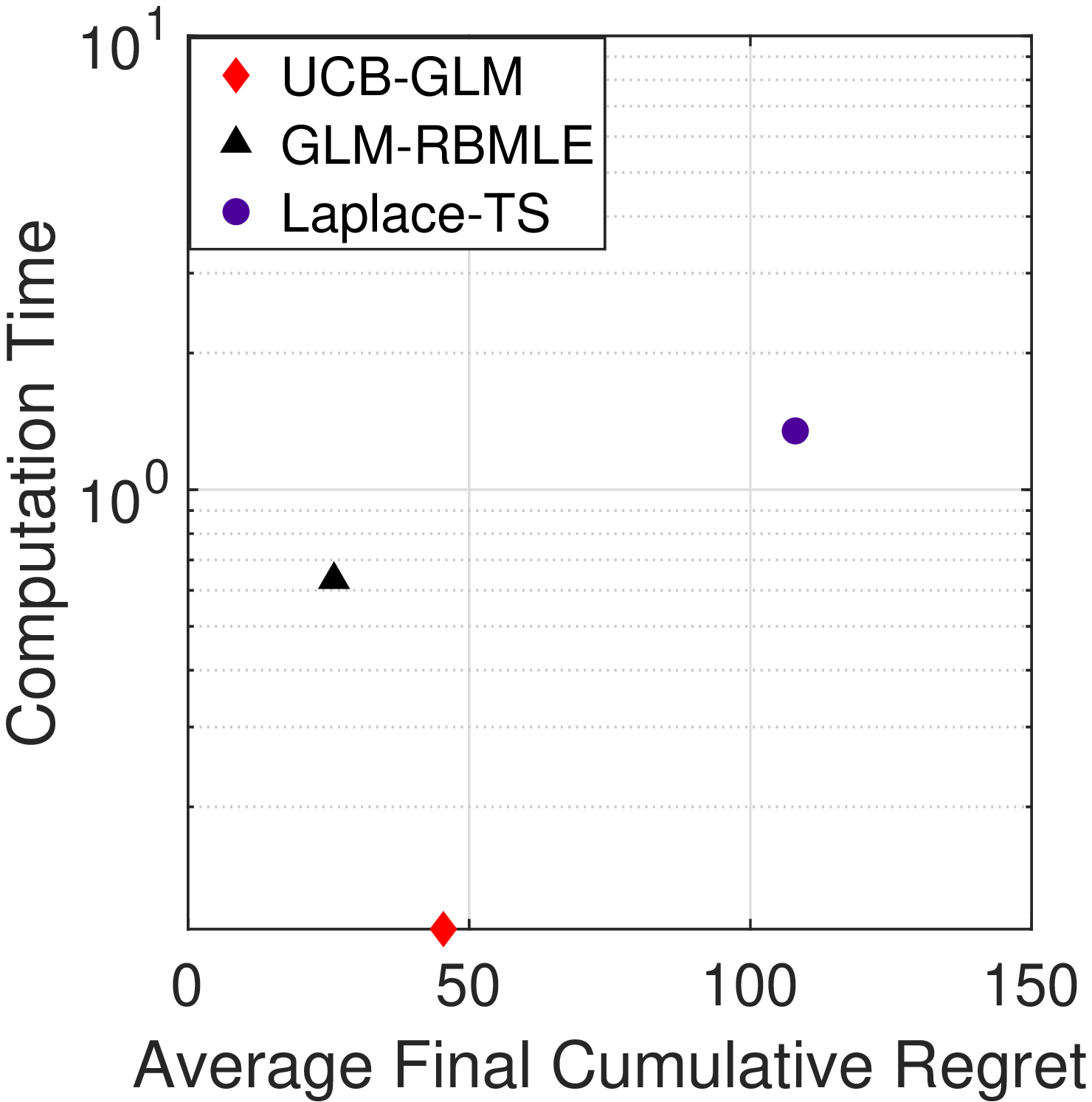}} \label{fig:GLM_Time_vs_Regret_TimeVary_ID_1300} &
    \hspace{-5mm} \scalebox{0.268}{\includegraphics[width=\textwidth]{./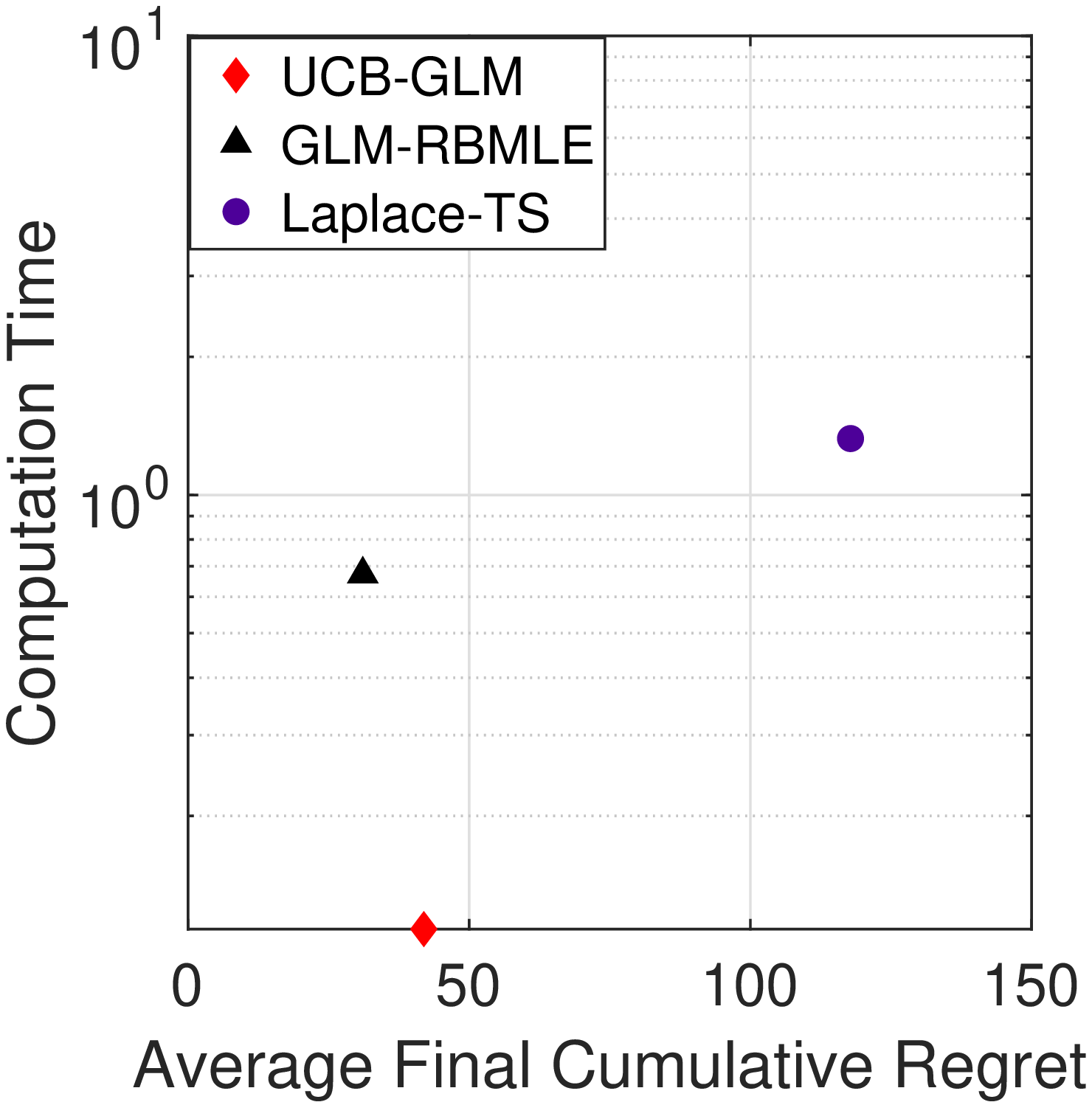}} \label{fig:GLM_Time_vs_Regret_TimeVary_ID_1301}\\ 
     \hspace{0mm}\mbox{\small (a)} & \hspace{-2mm} \mbox{\small (b)} & \hspace{-2mm} \mbox{\small (c)} & \hspace{-2mm} \mbox{\small (d)}\\[-0.2cm]
\end{array}$
\caption{Average computation time per decision vs. average final cumulative regret for (a) Figure \ref{fig:GLM regret}(a); (b) Figure \ref{fig:GLM regret}(b); (c) Figure \ref{fig:GLM regret}(c); (d) Figure \ref{fig:GLM regret}(d).}
\label{fig:time all GLM}
\end{figure*}


\begin{table*}[!h]
\center
\begin{tabular}{|c|c|c|c|}
\hline
\textbf{Algorithm} & \textbf{GLM-RBMLE} & \textbf{UCB-GLM} & \textbf{Laplace-TS} \\ \hline
Mean Final Regret & {\ul \textbf{79.66}} & 104.31 & 129.31 \\ \hline
Standard Deviation & {\ul \textbf{20.86}} & 31.52 & 87.92 \\ \hline
Quantile .10 & 55.53 & 69.60 & {\ul \textbf{11.65}} \\ \hline
Quantile .25 & 65.02 & 83.95 & {\ul \textbf{58.37}} \\ \hline
Quantile .50 & {\ul \textbf{78.56}} & 106.78 & 124.07 \\ \hline
Quantile .75 & {\ul \textbf{91.83}} & 125.10 & 197.74 \\ \hline
Quantile .90 & {\ul \textbf{106.03}} & 140.75 & 259.94 \\ \hline
Quantile .95 & {\ul \textbf{108.87}} & 153.24 & 264.79 \\ \hline
\end{tabular}
\caption{Statistics of the final cumulative regret in Figure \ref{fig:GLM regret}(a). The best one is highlighted.}
\label{table:GLM_ID_300_TimeStatic}
\end{table*}

\begin{table*}[!htbp]
\center
\begin{tabular}{|c|c|c|c|}
\hline
\textbf{Algorithm} & \textbf{GLM-RBMLE} & \textbf{UCB-GLM} & \textbf{Laplace-TS} \\ \hline
Mean Final Regret& {\ul \textbf{80.94}} & 96.34 & 131.69 \\ \hline
Standard Deviation & {\ul \textbf{25.38}} & 30.94 & 90.99 \\ \hline
Quantile .10 & 58.86 & 60.90 & {\ul \textbf{11.50}} \\ \hline
Quantile .25 & 63.85 & 72.74 & {\ul \textbf{53.86}} \\ \hline
Quantile .50 & {\ul \textbf{78.12}} & 95.25 & 125.30 \\ \hline
Quantile .75 & {\ul \textbf{92.96}} & 119.07 & 188.75 \\ \hline
Quantile .90 & {\ul \textbf{114.39}} & 131.07 & 248.53 \\ \hline
Quantile .95 & {\ul \textbf{131.95}} & 143.54 & 292.39 \\ \hline
\end{tabular}
\caption{Statistics of the final cumulative regret in Figure \ref{fig:GLM regret}(b). The best one is highlighted.}
\label{table:GLM_ID_301_TimeStatic}
\end{table*}

\begin{table*}[!htbp]
\center
\begin{tabular}{|c|c|c|c|}
\hline
\textbf{Algorithm} & \textbf{GLM-RBMLE} & \textbf{UCB-GLM} & \textbf{Laplace-TS} \\ \hline
Mean Final Regret& {\ul \textbf{25.95}} & 45.41 & 107.99 \\ \hline
Standard Deviation & 9.30 & {\ul \textbf{8.25}} & 57.90 \\ \hline
Quantile .10 & {\ul \textbf{15.92}} & 35.73 & 34.02 \\ \hline
Quantile .25 & {\ul \textbf{19.68}} & 38.57 & 65.03 \\ \hline
Quantile .50 & {\ul \textbf{23.11}} & 44.98 & 101.27 \\ \hline
Quantile .75 & {\ul \textbf{29.84}} & 51.50 & 145.02 \\ \hline
Quantile .90 & {\ul \textbf{35.71}} & 55.93 & 173.38 \\ \hline
Quantile .95 & {\ul \textbf{42.36}} & 60.32 & 213.75 \\ \hline
\end{tabular}
\caption{Statistics of the final cumulative regret in Figure \ref{fig:GLM regret}(c). The best one is highlighted.}
\label{table:GLM_ID_1300_TimeVary}
\end{table*}

\begin{table*}[!htbp]
\center
\begin{tabular}{|c|c|c|c|}
\hline
\textbf{Algorithm} & \textbf{GLM-RBMLE} & \textbf{UCB-GLM} & \textbf{Laplace-TS} \\ \hline
Mean Final Regret& {\ul \textbf{31.08}} & 41.93 & 117.81 \\ \hline
Standard Deviation & 13.40 & {\ul \textbf{6.50}} & 62.84 \\ \hline
Quantile .10 & {\ul \textbf{18.81}} & 34.87 & 32.58 \\ \hline
Quantile .25 & {\ul \textbf{21.64}} & 37.09 & 75.70 \\ \hline
Quantile .50 & {\ul \textbf{29.48}} & 41.97 & 119.50 \\ \hline
Quantile .75 & {\ul \textbf{36.09}} & 45.66 & 163.05 \\ \hline
Quantile .90 & {\ul \textbf{48.10}} & 51.15 & 203.09 \\ \hline
Quantile .95 & 55.04 & {\ul \textbf{54.06}} & 219.47 \\ \hline
\end{tabular}
\caption{Statistics of the final cumulative regret in Figure \ref{fig:GLM regret}(d). The best one is highlighted.}
\label{table:GLM_ID_1301_TimeVary}
\end{table*}

\begin{table*}[!h]
\center
\begin{tabular}{|c|c|c|c|}
\hline
\textbf{Algorithm} & \textbf{GLM-RBMLE} & \textbf{UCB-GLM} & \textbf{Laplace-TS} \\ \hline
$K=5, d = 10$ & 0.0275 & 0.0089 & 0.0675 \\ \hline
$K=5, d = 20$ & 0.0407 & 0.0216 & 0.2110 \\ \hline
$K=5, d = 30$ & 0.0519 & 0.0461 & 0.3691 \\ \hline
$K=10, d=5$ & 0.0406 & 0.0041 & 0.0305 \\ \hline
$K=20, d=5$ & 0.0823 & 0.0039 & 0.0331 \\ \hline
$K=30, d=5$ & 0.1225 & 0.0037 & 0.0333 \\ \hline
\end{tabular}
\caption{Average computation time per decision for static contexts in generalized linear bandit model, under different values of $K$ and $d$. All numbers are averaged over $50$ trials with $T= 10^2$ and in seconds. }
\label{table:GLM scalability}
\end{table*}

\end{document}